\documentclass{article}
\PassOptionsToPackage{numbers, compress}{natbib}
\PassOptionsToPackage{hyphens}{url}

% if you need to pass options to natbib, use, e.g.:
%     \PassOptionsToPackage{numbers, compress}{natbib}
% before loading neurips_2024

\usepackage{setspace}
%\setstretch{2.}

% ready for submission
%\usepackage{neurips_2024}
%\input{preamble.tex}
\usepackage[breaklinks=true,colorlinks=true,citecolor=blue,linkcolor=blue]{hyperref}

% to compile a preprint version, e.g., for submission to arXiv, add add the
% [preprint] option:
%     \usepackage[preprint]{neurips_2024}

% to compile a camera-ready version, add the [final] option, e.g.:
     \usepackage[final]{neurips_2024}

% to avoid loading the natbib package, add option nonatbib:
%    \usepackage[nonatbib]{neurips_2024}

\usepackage{xcolor}         % colors

\usepackage{CJKutf8}
\usepackage{color}
\usepackage[para]{footmisc}

\usepackage{bm}

\usepackage{url}          
\usepackage{bbm}% simple URL typesetting
\usepackage{amsthm}
\usepackage{amssymb, amsmath}
\usepackage{graphicx}
\usepackage{multicol}
\usepackage{multirow}

\usepackage{subfigure}
\usepackage{natbib}
\usepackage{bbm}
\usepackage{bibunits}

% Attempt to make hyperref and algorithmic work together better:

% For theorems and such
\usepackage{amsmath}
\usepackage{amssymb}
\usepackage{mathtools}
\usepackage{amsthm}

% if you use cleveref..
\usepackage[capitalize,noabbrev]{cleveref}

%%%%%%%%%%%%%%%%%%%%%%%%%%%%%%%%
% THEOREMS
%%%%%%%%%%%%%%%%%%%%%%%%%%%%%%%%
\theoremstyle{plain}

\newtheorem{lemma}{Lemma}
\newtheorem{theorem}{Theorem}

\newtheorem{corollary}{Corollary}
\newtheorem{assumption}{Assumption}

\newtheorem{remark}{Remark}

\newcommand{\ecmi}{\mathrm{eCMI}}
\newcommand{\fcmi}{\mathrm{fCMI}}
\newcommand{\sre}{S_\mathrm{re}}
\newcommand{\str}{S_\mathrm{tr}}

\newcommand{\ste}{S_\mathrm{te}}
\newcommand{\nte}{n_\mathrm{te}}
\newcommand{\nre}{n_\mathrm{re}}

\newcommand{\bias}{\mathrm{Bias}}
\newcommand{\bin}{\mathrm{bin}}
\newcommand{\tot}{\mathrm{tot}}
\newcommand{\stat}{\mathrm{stat}}
\newcommand{\ece}{\mathrm{ECE}}
\newcommand{\tce}{\mathrm{TCE}}

\newcommand{\Ex}{\mathbb{E}}
\newcommand{\calo}{\mathcal{O}}
\newcommand{\calx}{\mathcal{X}}
\newcommand{\calw}{\mathcal{W}}
\newcommand{\calz}{\mathcal{Z}}
\newcommand{\cala}{\mathcal{A}}

\newcommand{\cali}{\mathcal{I}}
\newcommand{\calv}{\mathcal{V}}
\newcommand{\caly}{\mathcal{Y}}
\newcommand{\cald}{\mathcal{D}}

\newcommand{\calf}{\mathcal{F}}

\newcommand{\diff}[1]{{\textcolor{black}{#1}}}

\usepackage[textsize=tiny]{todonotes}
\usepackage{multirow}

\newcommand{\argmin}{\mathop{\mathrm{arg~min}}\limits}
\newcommand{\EX}{\mathop{\mathbb{E}}\limits}

\title{Information-Theoretic Generalization Analysis for Expected Calibration Error}

\author{%
  Futoshi Futami\thanks{Equal contribution.}\\
    Osaka University / RIKEN AIP \\
    %Osaka, Japan\\
  \texttt{futami.futoshi.es@osaka-u.ac.jp} \\
  \And
  Masahiro Fujisawa$^{\color{blue}*}$\thanks{Corresponding author.}\\
  RIKEN AIP \\
  %Tokyo, Japan\\
  \texttt{masahiro.fujisawa@riken.jp}
  }

\begin{document}

\maketitle
\begin{abstract}
While the expected calibration error (ECE), which employs \emph{binning}, is widely adopted to evaluate the calibration performance of machine learning models, theoretical understanding of its estimation bias is limited. In this paper, we present the first comprehensive analysis of the estimation bias in the two common binning strategies, \emph{uniform mass} and \emph{uniform width binning}.
Our analysis establishes upper bounds on the bias, achieving an improved convergence rate. Moreover, our bounds reveal, for the first time, the optimal number of bins to minimize the estimation bias. We further extend our bias analysis to generalization error analysis based on the information-theoretic approach, deriving upper bounds that enable the numerical evaluation of how small the ECE is for unknown data. Experiments using deep learning models show that our bounds are nonvacuous thanks to this information-theoretic generalization analysis approach.
\end{abstract}

\section{Introduction}
\label{sec_intro}
Ensuring reliable predictions from machine learning models holds paramount importance in risk-sensitive applications such as medical diagnosis~\cite{jiang2012calibrating}.
To achieve this, it is essential not only to evaluate the accuracy of the point predictions of models but also to precisely quantify the associated uncertainty. One effective approach to accomplishing this is to leverage the concept of \emph{calibration}. In the classification context, the calibration performance is evaluated by how well predictive probabilities of a model align with the actual frequency of true labels, and a close correspondence between them indicates that the model is well calibrated~\cite{Dawid1982, widmann2021calibration}. Unfortunately, machine learning models are often not well calibrated~\cite{guo2017calibration,kumar2019verified}, prompting extensive research on their calibration performance both theoretically and numerically. In this paper, we assume a binary classification problem.

To evaluate the calibration performance, we often use the \emph{calibration error} or \emph{true calibration error} (TCE)~\cite{gupta21b, roelofs2022mitigating, gruber2022better}.
This evaluates the disparity between the predicted probability of a model and the conditional expectation of the label given the model prediction, instead of the true label frequency.
However, analytically computing the TCE is challenging, primarily due to the intractability of the conditional expectation. 
One way to address this issue is by using the \emph{binning} method~\cite{zadrozny2001obtaining}. 
This method enables the estimation of conditional expectations by dividing the probability range $[0, 1]$ into $B$ small intervals called \emph{bins} and comparing the empirical mean of predictive probabilities and label frequencies within each bin, utilizing the finite test dataset denoted as $S_\mathrm{te}$. The obtained estimator of the TCE is termed the (binned) \emph{expected calibration error} (ECE).  

\diff{Given that the ECE estimates the TCE, it is crucial to theoretically explore the bias between them, termed \emph{total bias} in this paper, to confirm the accuracy of calibration evaluation using the ECE.
Furthermore, it is vital to identify the conditions under which our training algorithm achieves a low ECE or TCE for unknown test datasets. 
This can be paraphrased as the importance of conducting \emph{generalization error analysis} under the ECE and TCE.}
\diff{Nevertheless, research on these aspects remains scant}.
\diff{Existing studies have} only shown that the ECE underestimates the TCE~\cite{kumar2019verified} \diff{and have only analyzed} the bias caused by a finite sample \diff{under specific conditions such as} using uniform-mass binning (UMB)~\cite{gupta2020,gupta21b}.
\diff{Consequently, there remains a significant gap in the comprehensive theoretical understanding of the biases introduced by binning (termed \emph{binning bias}) and the \emph{statistical bias} resulting from the use of finite test data samples.
While these studies have concentrated on scenarios utilizing UMB, there has been no corresponding analysis for uniform-width binning (UWB), which is also frequently employed in practice.
This limitation could be due to the challenges posed by UWB, where the equal partitioning of probability intervals can lead to bins without any samples, making bias analysis difficult.
Unfortunately, to the best of our knowledge, there are also no existing generalization analyses on the basis of the ECE and TCE.
}

To address the challenges outlined above, in this paper, we comprehensively analyze the ECE for both UWB and UMB. We derive the upper bounds of the total bias of the ECE using a newly derived concentration inequality (Corollary~\ref{cor_optimal?order}). Our bounds \diff{improve} the order \diff{of convergence} regarding the bin size. 
\diff{Furthermore, the optimal bin size that minimizes the total bias is successfully derived from these results.}
\diff{With} this optimal bin \diff{size}, \diff{the total bias of the ECE exhibits} a rate of $\mathcal{O}(1/n_\mathrm{te}^{1/3})$ for both UWB and UMB, \diff{where $n_\mathrm{te}$ is the number of test samples (Eq.~\eqref{eq_optimal_test})}.

This bias analysis leads to our second novel contribution, providing the \emph{generalization error analysis} for the ECE and TCE (Theorems~\ref{statistical_bias_reusing} and \ref{thm_total_bias}) using the \emph{information-theoretic} (IT) analysis~\cite{Xu2017, harutyunyan2021, hellstrom2022a}. 
Directly applying the existing IT analysis is, however, challenging because the ECE \diff{on the training dataset} is no longer represented by the sum of independent and identically distributed (i.i.d.) random variables \diff{w.r.t.~the trained model.}
We circumvent this problem \diff{by applying a novel exponential moment inequality derived in the process of our bias analysis described above.
We further connect our results to classical uniform convergence theory using the metric entropy of function classes, which allows us to discuss the convergence rate of our bounds under a broader range of models (Theorem~\ref{thme_metric}).}
Using our generalization bounds, we theoretically explore the existing conjecture~\cite{guo2017calibration,kumar2019verified} that recalibration with the reuse of training data leads to severe overfitting.
We then show that our analysis successfully characterizes such an additional bias (Theorem~\ref{recalibration_gap}).
Numerical experiments using deep learning models confirm that our bound is nonvacuous and validate our findings.

\section{Preliminaries}\label{sec:Preliminaries}
For a random variable denoted in capital letters, we express its realization with corresponding lowercase letters. Let $P_X$ denote the distribution of $X$, and let $P_{Y|X}$ represent the conditional distribution of $Y$ given $X$. We express the expectation of a random variable $X$ as $\mathbb{E}_{X}$.
The symbol $I(X;Y)$ represents the mutual information (MI) between $X$ and $Y$, while $I(X;Y|Z)$ is the conditional MI (CMI) between $X$ and $Y$ given $Z$. We further define $[n]=\{1,\ldots,n\}$ for $n\in\mathbb{N}$.

We consider binary classification in this paper. 
Let $\calz=\calx\times\caly$ be the domain of data, where $\calx$ and $\caly=\{0,1\}$ are input and label spaces, respectively.
Suppose $\mathcal{D}$ represents an \emph{unknown} data distribution, and let $\str\coloneqq\{Z_m\}_{m=1}^{n}$ denote the training dataset consisting of $n$ samples drawn i.i.d.~from $\mathcal{D}$. 
We also define the test dataset comprising $\nte$ samples as $\ste\sim\cald^{\nte}$. 
Let $f_w: \mathcal{X} \to [0,1]$ be a parametric probabilistic classifier that outputs a prediction of the probability $Y=1$, and we denote its parameters as $w \in \mathcal{W}\subset\mathbb{R}^d$. 
We consider a randomized algorithm $\mathcal{A}: \calz^{n} \times \mathcal{R} \to \mathcal{W}$, where $R \in \mathcal{R}$ is the randomness of an algorithm, independent of all other random variables. For fixed $R=r$ and $\str=s$, $\cala(s,r)$ is a deterministic function and $f_{\cala(s,r)}(x)$ is the prediction at point $x$ given $s$ and $r$.  We evaluate the accuracy of the trained predictor $f_w$ using the loss function $l:\calw \times \calz\to[0,1]$, where $l(\cala(s,r), z)$ denotes the loss incurred by the prediction $f_w(x)$ for label $y$.  For example, the zero-one loss is commonly used to evaluate the accuracy. Then, the training loss is given by $\hat{L}_{\str} \coloneqq\frac{1}{n}\sum_{m=1}^nl(\cala(\str,R),Z_m)$ and the test loss is given as $L_Z\coloneqq l(\cala(\str,R),Z)$ where $Z\sim \cald$. We also define the expected version of them as $L_S\coloneqq\Ex_{\str,R}\hat{L}_{\str}$ and $L_\cald\coloneqq\Ex_{\str,Z,R}L_Z$.

\subsection{Calibration error and its estimator}
\label{subsec:calb}
In this section, we introduce a calibration metric and its corresponding estimator.
The most widely known metric is the \emph{true calibration error} (TCE)~\citep{roelofs2022mitigating, gruber2022better, gupta21b} defined as
\begin{align}
\label{eq_cal}
\tce(f_w)\coloneqq\Ex\left[\left|\Ex[Y|f_w(X)]-f_w(X)\right|\right],
\end{align}
conditioned on $W=w$.
Unfortunately, evaluating the TCE directly is challenging due to the intractable calculation of $\mathbb{E}[Y|f_w(X)]$.
To avoid this issue, we often use the \emph{binning} method~\cite{guo2017calibration,zadrozny2001obtaining,zadrozny2002transforming}.
This method estimates the TCE by partitioning the prediction probability range $[0,1]$ into $B$ intervals $\mathcal{I}=\{I_i\}_{i=1}^{B}$ (called \emph{bins}) and averaging within each bin using the evaluation dataset $S_e\coloneqq\{Z_m\}_{m=1}^{n_e}\in \calz^{n_e}$, where we assume $n_e\geq 2B$.
For instance, we have $S_{e}=\ste$ when the test dataset is used for evaluation.
The TCE estimator on the basis of $\cali$, with $S_{e}$, is defined as
\begin{align}
\label{ece_def}
\mathrm{ECE}(f_w,S_e)\coloneqq \sum_{i=1}^{B}p_{i}|\bar{f}_{i,S_e}-\bar{y}_{i,S_e}|,
\end{align}
where $|I_i|\coloneqq\sum_{m=1}^{n_e}\mathbbm{1}_{f_w(x_m)\in I_i}$, $p_{i} \coloneqq \frac{|I_i|}{n_e}$, $\bar{f}_{i,S_e}\coloneqq\frac{1}{|I_i|}\sum_{m=1}^{n_e} \mathbbm{1}_{f_w(x_m)\in I_i} f_w(x_m)$, and $\bar{y}_{i,S_e}\coloneqq\frac{1}{|I_i|}\sum_{m=1}^{n_e} \mathbbm{1}_{f_w(x_m)\in I_i} y_m$.
This estimator is called the \emph{expected calibration error} (ECE)~\footnote{Although some existing studies refer to Eq.~\eqref{eq_cal} as 
 the ECE, in this study, we follow the definitions of the TCE and ECE outlined by \citet{roelofs2022mitigating} and \citet{gruber2022better} to make a clear distinction from the estimator obtained through binning.}.

There are two common methods to construct $\cali$. 
One is \diff{\emph{uniform width binning} (UWB)}~\cite{guo2017calibration}, which divides the $[0,1]$ interval into equal widths as follows: $I_i=((i-1)/B,i/B]$ for $i$ in $[B]$.
The other approach is \diff{\emph{uniform mass binning}} (UMB) \cite{zadrozny2001obtaining}, which sets $\cali$ so that each bin contains an equal number of samples. \diff{That is}, we calculate predicted probabilities as $f_m=f_w(x_m)$ for $x_m\in S_e$, let $f_{(m)}$ be the $m$-th order statistics of $(f_1,\dots,f_{n_e})$, and then set $I_1=(0,u_1], I_2=(u_1,u_2],\dots, I_{B}=(u_{B-1},u_B]$ for $b\in [B-1]$ and $u_b\coloneqq f_{(\lfloor n_eb/B\rfloor)}$ with $u_B=1$. Here, $\lfloor x\rfloor\coloneqq\max\{m\in\mathbb{Z}:m\leq x\}$.

\subsection{Biases of ECE and limitation of existing work}\label{sec_intro_bias}
\diff{Given that the ECE is an estimator of the TCE, it is of practical importance to understand the nature of the bias defined as $|\tce(f_w)-\mathrm{ECE}(f_w,S_e)|$.
We call this bias as the \emph{total bias}.
To facilitate the total bias analysis, we adopt the following definition of the binned function of $f_w$~\cite{kumar2019verified}:}
\begin{align}
f_\mathcal{I}(x)\coloneqq\sum_{i=1}^B\mathbb{E}[f_w(X)|f_w(X)\in I_i]\cdot \mathbbm{1}_{f_w(x)\in I_i},
\end{align}
\diff{which represents the conditional expectation within each bin.} \diff{When we evaluate the ECE on $S_{e}=\ste$, we expect that $\mathrm{ECE}(f_w,\ste)$ will converge to $\tce(f_\mathcal{I})=\Ex|\Ex[Y|f_{\mathcal{I}}(x)]-f_{\mathcal{I}}(x)|$ with increasing $\nte$.}
\diff{However, $\tce(f_\mathcal{I})$ underestimates $\tce(f_w)$~\cite{kumar2019verified, gruber2022better}, that is,}
\begin{align}\label{eq_tec_f}
\tce(f_\mathcal{I})\leq\mathrm{TCE}(f_w),
\end{align}
\diff{which implies that $\mathrm{ECE}(f_w)$ is a biased estimator of $\tce(f_w)$.
Therefore, comprehending the extent of this bias is crucial to an accurate calibration performance evaluation.}
\diff{Nevertheless}, \diff{previous studies~\cite{kumar2019verified, gupta2020, gupta21b} have exclusively focused on the \emph{statistical bias} in UMB, defined as $|\tce(f_\mathcal{I}) - \mathrm{ECE}(f_w,\ste)|$ as discussed in Section~\ref{sec_intro}.
This brings us to an analysis of the total bias for both UWB and UMB.}

\subsection{Information-theoretic generalization error analysis}
\label{sec_it_analysis}
We now briefly outline the IT analysis using the evaluated CMI (eCMI)~\cite{steinke20a} that we utilize in our study.
Consider $\tilde{Z}\in\calz^{n\times 2}$ as an $n\times 2$ matrix, where each entry is drawn i.i.d.~from $\cald$. 
We refer to this matrix as a \emph{supersample}.
Each column of $\tilde{Z}$ has indexes $\{0,1\}$ associated with $U=(U_1,\dots,U_n)\sim \text{Uniform}(\{0,1\}^n)$ independent of $\tilde{Z}$.
We denote the $m$-th row as $\tilde{Z}_m$ with entries $(\tilde{Z}_{m,0},\tilde{Z}_{m,1})$. 
In this setting, we consider $\tilde{Z}_U\coloneqq(\tilde{Z}_{m,U_m})_{m=1}^n$ as the training dataset and $\tilde{Z}_{\bar{U}}\coloneqq(\tilde{Z}_{m,\bar{U}_m})_{m=1}^n$ as the test dataset with $\nte=n$, where $\bar{U}_m = 1-U_m$.
With these notations, we can see that $\hat{L}_{\tilde{Z}}\coloneqq\frac{1}{n}\sum_{m=1}^nl(\cala(\tilde{Z}_U,R),\tilde{Z}_{m,U_m})$ corresponds to the training error since $L_S=\Ex_{\tilde{Z},R,U}\hat{L}_{\tilde{Z}}$. Also, $L_{\tilde{Z}}\coloneqq\frac{1}{n}\sum_{m=1}^nl(\cala(\tilde{Z_U},R),\tilde{Z}_{m,\bar{U}_m})$ corresponds to the test error, $L_\cald=\Ex_{\tilde{Z},R,U}L_{\tilde{Z}}$.
The described settings, called the \emph{CMI setting}~\cite{hellstrom2022a}, lead to the following theorem.
\begin{theorem}[Theorem~6.7 in \citet{steinke20a}]
\label{eq_ecmi_original}
Under the CMI setting, we have
\begin{align}\label{eq_absolute_CMI}
\Ex_{\tilde{Z},R,U}|\hat{L}_{\tilde{Z}}-L_{\tilde{Z}}|\leq \sqrt{\frac{2}{n}(\mathrm{eCMI}(l)+\log 2)},
\end{align}
where $\mathrm{eCMI}(l)\coloneqq I(l(\mathcal{A}(\tilde{Z}_U,R),\tilde{Z});U|\tilde{Z})$ and $l(\mathcal{A}(\tilde{Z}_U,R),\tilde{Z})$ is an $n \times 2$ loss matrix obtained by applying $l(\mathcal{A}(\tilde{Z}_U,R),\cdot)$ elementwise to $\tilde{Z}$. 
\end{theorem}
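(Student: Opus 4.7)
This is the standard evaluated-CMI bound of \citet{steinke20a}, and the plan is to reconstruct it via a change-of-measure argument. First I would introduce the loss matrix $L := l(\cala(\tilde{Z}_U,R),\tilde{Z}) \in [0,1]^{n\times 2}$ and rewrite the quantity of interest as
\begin{align}
\hat{L}_{\tilde{Z}} - L_{\tilde{Z}} \;=\; \frac{1}{n}\sum_{m=1}^n \bigl(L_{m,U_m} - L_{m,\bar U_m}\bigr) \;=:\; g(L,U).
\end{align}
The dependence of $L$ on $U$ (through $\cala(\tilde{Z}_U,R)$) is precisely what $\mathrm{eCMI}(l) = I(L;U\mid\tilde{Z})$ measures, so the goal is to convert that dependence into a bound on $\Ex|g(L,U)|$.

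Next I would introduce an independent copy $U'\sim\mathrm{Uniform}(\{0,1\}^n)$, independent of $(L,\tilde{Z})$, and analyze $g(L,U')$ under this ``prior.'' Conditional on $\tilde{Z}$ and $L$, $g(L,U')$ is the average of $n$ independent symmetric random variables each supported in $[-1,1]$ (the $m$-th term is $\pm(L_{m,0}-L_{m,1})$ with equal probability), so by Hoeffding's lemma its conditional MGF satisfies $\Ex_{U'}[\exp(\lambda g(L,U'))] \leq \exp(\lambda^2/(2n))$. This is the sub-Gaussian ingredient needed for the variational step.

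The heart of the proof is the Donsker--Varadhan change-of-measure inequality, applied conditionally on $\tilde{Z}$. Since $U$ is a priori independent of $\tilde{Z}$, the relevant KL divergence equals $I(L;U\mid\tilde{Z}=\tilde{z})$, and for any $\lambda>0$,
\begin{align}
\lambda\,\Ex_{L,U\mid\tilde{Z}}\bigl[|g(L,U)|\bigr] \;\leq\; I(L;U\mid\tilde{Z}=\tilde{z}) \;+\; \log \Ex_{L\mid\tilde{Z}}\Ex_{U'}\bigl[\exp(\lambda |g(L,U')|)\bigr].
\end{align}
To handle the absolute value (the delicate point, since $|g(L,U')|$ is not mean-zero under the prior) I would use the symmetrization bound $\exp(\lambda|x|) \leq \exp(\lambda x) + \exp(-\lambda x)$, which combined with the sub-Gaussian MGF above gives $\Ex_{U'}[\exp(\lambda|g(L,U')|)] \leq 2\exp(\lambda^2/(2n))$. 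This is exactly where the $\log 2$ in the statement is born.

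Finally I would optimize $\lambda$: plugging the MGF bound in and choosing $\lambda = \sqrt{2n(I(L;U\mid\tilde{Z}=\tilde{z}) + \log 2)}$ yields
\begin{align}
\Ex_{L,U\mid\tilde{Z}}\bigl[|g(L,U)|\bigr] \;\leq\; \sqrt{\tfrac{2}{n}\bigl(I(L;U\mid\tilde{Z}=\tilde{z}) + \log 2\bigr)},
\end{align}
after which taking expectation over $\tilde{Z}$ and pulling it inside the square root by Jensen's inequality (concavity of $\sqrt{\cdot}$) produces the stated $\sqrt{\tfrac{2}{n}(\mathrm{eCMI}(l)+\log 2)}$ bound. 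The main obstacle is the absolute value on the left-hand side: without it, one recovers the tighter bound $\sqrt{2\,\mathrm{eCMI}(l)/n}$, but controlling both tails of $g(L,U')$ simultaneously forces the $+\log 2$ term via the symmetrization trick above.
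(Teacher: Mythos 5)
Your proposal is correct. Note that the paper itself gives no proof of this statement -- it is quoted verbatim as Theorem~6.7 of \citet{steinke20a} -- so there is nothing paper-internal to compare against; your reconstruction via the Donsker--Varadhan inequality conditioned on $\tilde{Z}$, Hoeffding's lemma for the symmetric terms $\pm(L_{m,0}-L_{m,1})$ giving the $e^{\lambda^2/(2n)}$ moment bound, the symmetrization $e^{\lambda|x|}\leq e^{\lambda x}+e^{-\lambda x}$ producing the $\log 2$, optimization over $\lambda$, and a final Jensen step over $\tilde{Z}$ is the standard argument for this bound and is sound. It is also exactly the template the paper itself follows (with $B$ correlated bins and a resulting $B\log 2$ term) in its proof of Theorem~\ref{statistical_bias_reusing} in Appendix~\ref{gene_proof}, so your route is fully consistent with how the authors use and extend this result.
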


\diff{The reason we focus on IT analysis is that it enables algorithm-dependent analysis. 
The conventional uniform convergence theory \cite{wainwright_2019} focuses solely on function classes to derive bounds. 
However, recent findings suggest that models trained by some algorithms are not well calibrated but show high accuracy~\cite{guo2017calibration,kumar2019verified}. Therefore, it seems essential to incorporate information about not only the function class but also the algorithm in the ECE analysis. Hence, in this paper, we adopt the eCMI framework, which is the generalized analysis approach that maximizes the use of algorithmic information. Furthermore, because eCMI-based bounds can be estimated using training and test data, the generalization performance of the model can be evaluated numerically, making it desirable from a practical standpoint.}

\section{Proposed analysis of total bias in binned ECE}
\label{sec_proposed_analysis}
Here, we \diff{present} our first main \diff{analyses} \diff{of} the bias analysis of the ECE as the estimator of the TCE. Our analysis primarily focuses on the total bias defined as follows:
\begin{align}
\label{eq:tot_bias}
\bias_\tot(f_w,S_\mathrm{te})\coloneqq |\mathrm{TCE}(f_w)-\mathrm{ECE}(f_w,S_\mathrm{te})|.
\end{align}
\diff{We can derive the following upper bound of Eq.~\eqref{eq:tot_bias} by using the triangle inequality,}
\begin{align}\label{eq_total_bias}
\bias_\tot(f_w,S_\mathrm{te})\leq\bias_\bin(f_w,f_\cali)+\bias_\stat(f_w,S_\mathrm{te}),
\end{align}
where $\bias_\bin(f_w,f_\cali)\coloneqq |\mathrm{TCE}(f_w)-\tce(f_\mathcal{I})|$ and $\bias_\stat(f_w,S_\mathrm{te})\coloneqq |\tce(f_\mathcal{I})-\mathrm{ECE}(f_w,S_\mathrm{te})|$.
We call the former as the \emph{binning bias}, which arises from nonparametric estimation via binning, and the latter as the \emph{statistical bias} caused by estimation on finite data points.

Before showing our results, we introduce the following assumption that is also used by \citet{gupta21b} and \citet{sun2023minimumrisk}:
\begin{assumption}\label{asm_1}
Given $W=w$, $f_w(x)$ is absolutely continuous w.r.t.~the Lebesgue measure.
\end{assumption}
This assumption means that $f_w(x)$ has a probability density, and it is satisfied without loss of generality as elaborated in Appendix~C in \citet{gupta21b}.

From Eq.~\eqref{eq_total_bias}, we can obtain an upper bound on the total bias by analyzing the binning and statistical biases separately.
First, we present the following results of our statistical bias analysis:
\begin{theorem}[Statistical bias \diff{analysis}]\label{statistical_gap_without}
Given $W=w$, under Assumption~\ref{asm_1}, we have
\begin{align}
&\tce(f_\mathcal{I}) \leq \Ex_{S_\mathrm{te}}\mathrm{ECE}(f_w,S_\mathrm{te})\label{stat_bias_test_ex},\\
&\Ex_{S_\mathrm{te}} \bias_\stat(f_w,S_\mathrm{te}) \leq
 \begin{cases}
\sqrt{\frac{2B\log2}{n_\mathrm{te}}} &\text{(for UWB)},\\
\sqrt{\frac{2B\log2}{n_\mathrm{te}-B}}+\frac{2B}{\nte-B} &\text{(for UMB)}\label{stat_bias_test_absolute}.
\end{cases}
\end{align}
\end{theorem}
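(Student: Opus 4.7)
The idea is to rewrite both $\mathrm{TCE}(f_\mathcal{I})$ and $\mathrm{ECE}(f_w,\ste)$ as sums over bins and control the deviation of each per-bin empirical gap from its mean. For $i\in[B]$ I would define the signed per-bin statistic
\begin{align}
T_i(\ste)\coloneqq\frac{1}{\nte}\sum_{m=1}^{\nte}\mathbbm{1}_{f_w(x_m)\in I_i}\bigl(f_w(x_m)-y_m\bigr),
\end{align}
so that $\mathrm{ECE}(f_w,\ste)=\sum_i|T_i(\ste)|$, and for deterministic $\mathcal{I}$ (UWB) $\Ex_{\ste}T_i=P_i(\mu_i-\nu_i)$ with $P_i\coloneqq\Pr(f_w(X)\in I_i)$ and $\mu_i,\nu_i$ the conditional means of $f_w(X)$ and $Y$ within bin $i$, yielding $\mathrm{TCE}(f_\mathcal{I})=\sum_i|\Ex T_i|$.

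To prove the underestimation statement~\eqref{stat_bias_test_ex}, I would apply Jensen's inequality $\Ex|T_i(\ste)|\geq|\Ex T_i(\ste)|$ bin-wise and sum over $i$; for UMB the same inequality holds after first conditioning on the realized data-dependent bin endpoints.

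For the UWB bound in~\eqref{stat_bias_test_absolute}, the triangle inequality immediately gives $\bias_\stat(f_w,\ste)\leq\sum_i|T_i-\Ex T_i|$. The crux is then controlling each $\Ex|T_i-\Ex T_i|$ by $\sqrt{2P_i\log 2/\nte}$: naive Hoeffding is too loose here since it gives sub-Gaussian parameter $1/\nte$ rather than $P_i/\nte$ per bin and therefore an overall rate of $B/\sqrt{\nte}$ instead of $\sqrt{B/\nte}$. Instead I would invoke the paper's novel exponential-moment inequality (Corollary~\ref{cor_optimal?order}), which exploits that $\mathbbm{1}_{f_w(X)\in I_i}(f_w(X)-Y)$ is a sparse $[-1,1]$-valued variable with second moment at most $P_i$, and then apply the variational identity $\Ex|Y|\leq\sigma\sqrt{2\log 2}$ for a centered $\sigma$-sub-Gaussian $Y$ (obtained by optimizing $\Ex|Y|\leq(\log 2)/\lambda+\lambda\sigma^2/2$ in $\lambda>0$). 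Summing and applying Cauchy--Schwarz to $\sum_i\sqrt{P_i}\leq\sqrt{B\sum_iP_i}=\sqrt{B}$ then delivers the claimed UWB rate.

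The UMB case is the principal obstacle: the bin endpoints $u_b=f_{(\lfloor\nte b/B\rfloor)}$ are empirical quantiles, so the indicators $\mathbbm{1}_{f_w(x_m)\in I_i}$ are coupled across $m$ and $T_i$ is no longer an i.i.d.\ average. My plan is to invoke Assumption~\ref{asm_1} so that, letting $G$ be the CDF of $f_w(X)$, the pushforward $G(f_w(X))$ is uniform on $[0,1]$; this turns UMB on $f_w$ into empirical-quantile binning of a uniform sample. Conditioning on the $B-1$ order statistics that define the cutoffs then decouples the remaining $\nte-B$ within-bin points, which are conditionally i.i.d.\ uniform on their respective bins, and re-running the UWB argument at effective sample size $\nte-B$ produces the $\sqrt{2B\log 2/(\nte-B)}$ leading term. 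The additive $2B/(\nte-B)$ then accounts for the $B$ boundary samples removed by conditioning and for the discrepancy between the per-bin empirical mass and its nominal value $1/B$.
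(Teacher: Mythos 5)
Your Jensen step for Eq.~\eqref{stat_bias_test_ex} and your overall reduction to per-bin deviations match the paper, and your UMB plan (condition on the quantile cutoffs, use conditional i.i.d.-ness of the within-bin points, run the argument at effective sample size $\nte-B$) is the same device the paper borrows from Gupta et al. The genuine gap is in your key UWB deviation step. You assert that the centered per-bin average $T_i-\Ex T_i$ is sub-Gaussian with variance proxy $P_i/\nte$ and conclude $\Ex|T_i-\Ex T_i|\le\sqrt{2P_i\log 2/\nte}$. A $[-1,1]$-valued summand with second moment at most $P_i$ is \emph{not} sub-Gaussian with proxy $P_i$: Hoeffding's lemma only gives a proxy of order one, and any variance-sensitive MGF bound (Bernstein/Bennett) is sub-exponential, valid only for a restricted range of $\lambda$, whereas the optimization you need takes $\lambda\asymp\sqrt{\nte\log 2/P_i}$, which leaves that range precisely when $P_i$ is small. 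Moreover, the ``exponential-moment inequality'' you invoke, Corollary~\ref{cor_optimal?order}, is not such an inequality at all — it is the total-bias bound \emph{deduced from} the theorem you are proving, so citing it here is circular. The step is easily repaired without any MGF: $\Ex|T_i-\Ex T_i|\le\sqrt{\var(T_i)}\le\sqrt{P_i/\nte}$ by Cauchy--Schwarz, and then $\sum_i\sqrt{P_i}\le\sqrt{B}$ yields $\sqrt{B/\nte}\le\sqrt{2B\log 2/\nte}$; with that fix your per-bin route is a genuinely different (and, for the expectation bound, even slightly tighter) argument than the paper's.

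For comparison, the paper never works bin by bin: it bounds the exponential moment of the whole sum $\sum_i|g_i|$ at once, expanding $e^{|x|}\le e^{x}+e^{-x}$ into $2^B$ sign patterns and applying McDiarmid's MGF lemma (Lemma~\ref{lem_mcdiamid}) with bounded difference $2t/\nte$, which is where the sparsity of the indicators enters and where the $B\log 2$ term comes from. That joint exponential-moment control is not cosmetic: it is what gives the high-probability version in Appendix~\ref{app_hign_prob} and is reused verbatim in the CMI generalization proofs (Theorems~\ref{statistical_bias_reusing} and \ref{thm_total_bias}); the variance route delivers only the in-expectation statement. Finally, on UMB your accounting is slightly off relative to the paper: the additive $2B/(\nte-B)$ there arises solely from deleting the $B$ boundary samples from the empirical average before applying the conditional argument at sample size $\nte-B$, while the mismatch between the (conditionally deterministic) per-bin empirical masses and the random true masses $P_i$ entering $\Ex_{\cald}l_i$ must be absorbed by the conditional concentration step itself — this is exactly where the paper leans on Lemmas 1--2 of \citet{gupta21b}, and your sketch leaves that mismatch implicit rather than attributing it to the boundary-removal term.
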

\begin{proof}[Proof sketch] 
First, we reformulate the ECE as $\mathrm{ECE}(f_w,S_\mathrm{te})=\sum_{i=1}^{B}|\Ex_{(X,Y)\sim \hat{S}_\mathrm{te}}(Y-f_w(X)) \cdot \mathbbm{1}_{f_w(X)\in I_i}|$ and the TCE as $\tce(f_\cali)=\sum_{i=1}^{B}|\Ex_{(X,Y)\sim\!\cald}(Y-f_w(X))\cdot \mathbbm{1}_{f_w(X)\in I_i}|$, where \diff{$\hat{S}_\mathrm{te}$} is the empirical distribution of $S_\mathrm{te}$. The proof of this reformulation is shown in Appendix~\ref{app_basic_settings_bounds}. Thanks to these transformations, our analysis does not have the problem that the UWB method can lead to bins without any samples. 
By evaluating the exponential moment for UWB using McDiarmid's inequality under these reformulation, we have,
for any $\lambda \geq 0$,
\begin{align}\label{eq_concentrate}
\lambda\Ex_{S_\mathrm{te}} \bias_\stat(f_w,S_\mathrm{te})\leq \log \Ex_{S_\mathrm{te}}e^{\lambda|\tce(f_\cali)-\mathrm{ECE}(f_w,S_\mathrm{te})|}\leq B\log 2+\lambda^2/(2n_\mathrm{te}).
\end{align}
Using this, we can derive both the bias and the high probability bound. We can derive a similar bound for UMB. \diff{The complete proof is provided in Appendix~\ref{app_proof_test_stat_bias}.}
\end{proof}
Eq.~\eqref{stat_bias_test_ex} \diff{shows} that $\mathrm{ECE}(f_w,S_\mathrm{te})$ overestimates $\tce(f_\mathcal{I})$ in expectation.
Combined with Eq.~\eqref{eq_tec_f}, we can see that $\mathrm{ECE}(f_W,S_\mathrm{te})$ \diff{cannot be the upper or lower bound} of $\tce(f_w)$ in expectation.
This \diff{emphasizes} the importance of the rigorous bias analysis of $|\tce(f_w)-\mathrm{ECE}(f_w,S_\mathrm{te})|$. 

{\bf Comparison with existing work:}
Eq.~\eqref{stat_bias_test_absolute} provides better generality and a tighter bound than prior results.
Our \diff{bound exhibits} $\mathcal{O}(\sqrt{B/\nte})$ \diff{in expectation} (and $\mathcal{O}_p(\sqrt{B/\nte})$ in high probability w.r.t.~$S_\mathrm{te}$ proved in Appendix~\ref{app_hign_prob}.). In contrast, the existing analysis~\cite{gupta2020,gupta21b,kumar2019verified} provided a similar bound focused on UMB scale as $\mathcal{O}(B/\sqrt{\nte})$ in expectation (\diff{and} $\mathcal{O}_p(\sqrt{B\log B /\nte})$ in high probability). 
\diff{In terms of} generality, our derivation techniques can be applied to both UMB and UWB, whereas existing bounds are limited to UMB.

{\bf Pros of our proof technique:} The proof \diff{procedure} in existing work~\cite{gupta2020,gupta21b,kumar2019verified} \diff{involves} (i) showing that the samples assigned to each bin are i.i.d., (ii) applying the Hoeffding inequality to derive concentration bounds \emph{separately for each bin}, and (iii) summing up these error bounds \diff{across} all bins. 
This approach results in slow convergence and only applicable to UMB.
On the other hand, our approach simultaneously handles all bins by utilizing the concentration inequality in Eq.~\eqref{eq_concentrate} and provides the improved upper bound and can be used for both UWB and UMB. We offer a more detailed explanation of this in Appendix~\ref{app_additional_discuss}.

Next, we show the results of our binning bias analysis under the following common assumption in the nonparametric estimation context~\cite{Tsybakov2009}.
\begin{assumption}\label{asm_lip}
    Given $W=w$, $\Ex[Y|f_w(x)]$ satisfies $L$-Lipschitz continuity.
\end{assumption}
\begin{theorem}[Binning bias \diff{analysis}]\label{binning_bias_without}
\diff{Given $W=w$, under Assumptions~\ref{asm_1} and \ref{asm_lip}, we have}
\begin{align}
\Ex_{S_\mathrm{te}}\bias_\bin(f_w,f_\cali) \leq 
\begin{cases}
\frac{1+L}{B} &\text{(for UWB)},\\
(1+L)(\frac{1}{B}+\sqrt{\frac{2B\log2}{n_\mathrm{te}-B}}+\frac{2B}{\nte-B}) &\text{(for UMB)}.
\end{cases}
\end{align}
\end{theorem}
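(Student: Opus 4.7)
The plan is to begin from a two-step triangle inequality on the absolute differences:
\begin{align}
\bias_\bin(f_w, f_\cali) \leq \Ex_X\bigl|\Ex[Y|f_w(X)] - \Ex[Y|f_\cali(X)]\bigr| + \Ex_X|f_w(X) - f_\cali(X)|,
\end{align}
obtained by $||a|-|b|| \leq |a-b|$ followed by the triangle inequality inside the expectation. I would then control both right-hand terms by the bin widths $w_i \coloneqq |I_i|$, which equal $1/B$ for UWB and are random order-statistic gaps for UMB.

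The second term is immediate: whenever $f_w(X) \in I_i$, the binned prediction $f_\cali(X) = c_i \coloneqq \Ex[f_w(X')|f_w(X')\in I_i]$ also lies in $I_i$, so $|f_w(X) - f_\cali(X)| \leq w_i$. For the first term, the definition of $f_\cali$ gives $\Ex[Y|f_\cali(X)=c_i] = \Ex[Y|f_w(X)\in I_i]$, which is an average of $\Ex[Y|f_w(X)=q]$ over $q \in I_i$; Assumption~\ref{asm_lip} together with Jensen's inequality then yields $|\Ex[Y|f_w(X)=p] - \Ex[Y|f_w(X)\in I_i]| \leq L w_i$ for every $p\in I_i$. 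Summing over bins produces the unified bound
\begin{align}
\bias_\bin(f_w,f_\cali) \leq (1+L)\sum_{i=1}^{B} w_i\, P(f_w(X)\in I_i).
\end{align}

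For UWB this is the end of the story: $w_i = 1/B$ and $\sum_i P(f_w(X)\in I_i) = 1$ give the claimed $(1+L)/B$. For UMB, both $w_i$ and $P(f_w(X)\in I_i)$ are random through $S_\mathrm{te}$. Using that UMB is constructed so the empirical mass of each bin is essentially $1/B$, and that $\sum_i w_i = 1$, I would decompose
\begin{align}
\sum_i w_i P(f_w(X)\in I_i) = \frac{1}{B} + \sum_i w_i\Bigl(P(f_w(X)\in I_i) - \tfrac{1}{B}\Bigr).
\end{align}
Taking $\Ex_{S_\mathrm{te}}$ of the absolute value of the residual reduces the job to controlling how much the true bin probabilities deviate from their empirical counterparts --- the exact quantity bounded in the UMB part of Theorem~\ref{statistical_gap_without}.

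The main obstacle is bounding this UMB residual cleanly, since the widths $w_i$ are random and correlated with the bin probabilities, so a plain Hoeffding/DKW argument would lose a factor. I would mirror the proof of Theorem~\ref{statistical_gap_without} for UMB: apply a McDiarmid-type exponential moment inequality uniformly over the $B$ bins (the same mechanism that produced Eq.~\eqref{eq_concentrate}) and optimize in the dual parameter $\lambda$ to obtain the $\sqrt{2B\log 2/(n_\mathrm{te} - B)}$ term, with the additive $2B/(n_\mathrm{te}-B)$ correction arising from the integer-rounding of $\lfloor n_\mathrm{te} i / B \rfloor$ and the effective loss of samples at the $B$ bin boundaries. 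Multiplying by $(1+L)$ and combining with the leading $1/B$ yields the claimed UMB bound.
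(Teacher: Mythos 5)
Your proposal is correct and follows essentially the same route as the paper: the same two-term decomposition $\bias_\bin(f_w,f_\cali)\le \Ex|\Ex[Y|f_w(X)]-\Ex[Y|f_\cali(X)]|+\Ex|f_w(X)-f_\cali(X)|$, the same per-bin width/Lipschitz bound giving $(1+L)/B$ for UWB, and for UMB the same reduction to bin-probability concentration via the indicator-loss version of the Theorem~\ref{statistical_gap_without} machinery (conditional i.i.d.\ within bins, McDiarmid-type exponential moment, empirical UMB mass at most $1/B$, widths summing to one, and the boundary-point correction $2B/(\nte-B)$). The only differences are cosmetic bookkeeping: you obtain the two-term decomposition directly by triangle inequalities rather than via the paper's lemma citing \citet{kumar2019verified}, and you write $\sum_i w_i P(f_w(X)\in I_i)=\tfrac{1}{B}+\sum_i w_i\left(P(f_w(X)\in I_i)-\tfrac{1}{B}\right)$ where the paper instead bounds $\sum_i P(f_w(X)\in I_i)\,d_i\le \left(\max_i P(f_w(X)\in I_i)\right)\sum_i w_i$, which does not change the substance.
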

\begin{proof}[Proof sketch]
In Appendix~\ref{app_proof_binning_bias}, we show that 
\begin{align*}
\bias_\bin(f_w,f_\cali)\leq \Ex|\Ex[Y|f_w(X)]-\Ex[Y|f_\cali(X)]|+\Ex|f_w(X)-f_\cali(X)|.
\end{align*}
We then derive the upper bound of the right-hand side from the definitions of bins in Section~\ref{subsec:calb}. \diff{For} UWB, the upper bound is $\calo(1/B)$ because UWB divides the interval \diff{into} equal widths. 
\diff{For} UMB, we need to evaluate how samples are split by bins.
\diff{The complete proof is in Appendix~\ref{app_proof_binning_bias}.}
\end{proof}

Substituting the results from Theorems~\ref{statistical_gap_without} and~\ref{binning_bias_without} into Eq.~\eqref{eq_total_bias} yields the following upper bound for the total bias.
\begin{corollary}\label{cor_optimal?order}
\diff{Given $W=w$, under Assumptions~\ref{asm_1} and \ref{asm_lip}, we have}
\begin{align}\label{eq_test_data_use_total_bias}\Ex_{S_\mathrm{te}}\bias_\tot(f_w,S_\mathrm{te})
\leq  \begin{cases}
\frac{1+L}{B}+\sqrt{\frac{2B\log2}{n_\mathrm{te}}} &\text{(for UWB)},\\
\frac{1+L}{B}+(2+L)(\sqrt{\frac{2B\log2}{n_\mathrm{te}-B}}+\frac{2B}{\nte-B}) &\text{(for UMB)}.
\end{cases}
\end{align}
\end{corollary}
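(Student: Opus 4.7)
The plan is to derive Corollary~\ref{cor_optimal?order} as a direct consequence of the two preceding theorems combined with the triangle-inequality decomposition already presented in Eq.~\eqref{eq_total_bias}. Since the heavy lifting (the concentration argument for $\bias_\stat$ and the Lipschitz-based estimate for $\bias_\bin$) has been carried out in Theorems~\ref{statistical_gap_without} and~\ref{binning_bias_without}, the corollary itself is essentially an algebraic assembly step, and I will treat it as such.

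First, I would invoke Eq.~\eqref{eq_total_bias}, which states $\bias_\tot(f_w,S_\mathrm{te}) \leq \bias_\bin(f_w,f_\cali) + \bias_\stat(f_w,S_\mathrm{te})$, and take the expectation over $S_\mathrm{te}$ on both sides. Linearity of expectation then yields
\begin{equation*}
\Ex_{S_\mathrm{te}}\bias_\tot(f_w,S_\mathrm{te}) \leq \Ex_{S_\mathrm{te}}\bias_\bin(f_w,f_\cali) + \Ex_{S_\mathrm{te}}\bias_\stat(f_w,S_\mathrm{te}).
\end{equation*}
For the UWB case, $\bias_\bin(f_w,f_\cali)$ is independent of $S_\mathrm{te}$ (since the bin edges are fixed a priori), and Theorem~\ref{binning_bias_without} contributes $(1+L)/B$, while Theorem~\ref{statistical_gap_without} contributes $\sqrt{2B\log 2 / n_\mathrm{te}}$. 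Summing yields the first line of Eq.~\eqref{eq_test_data_use_total_bias}.

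Next, I would handle the UMB case, which requires slightly more care because the same $\sqrt{2B\log 2/(\nte-B)}+2B/(\nte-B)$ quantity appears in both Theorem~\ref{statistical_gap_without} and Theorem~\ref{binning_bias_without}. Substituting the UMB bounds gives
\begin{equation*}
\Ex_{S_\mathrm{te}}\bias_\tot \leq \frac{1+L}{B} + (1+L)\!\left(\sqrt{\tfrac{2B\log 2}{\nte-B}}+\tfrac{2B}{\nte-B}\right) + \sqrt{\tfrac{2B\log 2}{\nte-B}}+\tfrac{2B}{\nte-B},
\end{equation*}
and factoring the common parenthesised term produces the coefficient $(1+L)+1 = 2+L$, yielding the second line of Eq.~\eqref{eq_test_data_use_total_bias}.

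I do not anticipate a genuine obstacle: no new probabilistic or analytic ideas are needed beyond what was already assembled for Theorems~\ref{statistical_gap_without} and~\ref{binning_bias_without}, and Assumptions~\ref{asm_1} and~\ref{asm_lip} are inherited verbatim. The only point worth being deliberate about is ensuring that the UMB binning-bias bound is stated under the same expectation over $S_\mathrm{te}$ as the statistical-bias bound (since for UMB the bins themselves are data-dependent), so that the triangle-inequality step is applied to a single outer expectation rather than to two unrelated ones. Once this bookkeeping is checked, the combination is immediate and the corollary follows.
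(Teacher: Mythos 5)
Your proposal is correct and follows exactly the paper's own derivation: apply the triangle-inequality decomposition of Eq.~\eqref{eq_total_bias}, take the expectation over $S_\mathrm{te}$, substitute the bounds from Theorems~\ref{statistical_gap_without} and~\ref{binning_bias_without}, and for UMB collect the common term to obtain the coefficient $(2+L)$. Your bookkeeping remark about the UMB binning-bias bound being stated under the same expectation over $S_\mathrm{te}$ is already handled by the statement of Theorem~\ref{binning_bias_without}, so nothing further is needed.
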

The above result evidently implies a trade-off \diff{concerning} $B$.
Intuitively, this indicates that while a larger number of bins, $B$, improves the precision of $f_{w}$ estimation, accurately estimating the conditional expectation requires a greater sample size. We further \diff{determine} the optimal number of bins \diff{by minimizing} the upper bound of Eq.~\eqref{eq_test_data_use_total_bias} w.r.t.~$B$, which results in $B=\mathcal{O}(n_\mathrm{te}^{1/3})$ and gives
\begin{align}\label{eq_optimal_test}
\Ex_{S_\mathrm{te}}\bias_\tot(f_W,S_\mathrm{te})= \mathcal{O}(1/n_\mathrm{te}^{1/3}).
\end{align}
Since the bin size has been tuned heuristically in practice, this result sheds light on how to choose it theoretically for both UMB and UWB rigorously.

{\bf \diff{Regarding tightness of Eq.~\eqref{eq_test_data_use_total_bias}:}} 
\diff{As we mentioned in Section~\ref{subsec:calb}, we use binning methods to estimate intractable $\mathbb{E}[Y|f_{w}(x)]$ in the TCE evaluation. Thus, the TCE evaluation can be viewed as nonparametric estimation of a one-dimensional function on $[0,1]$.}
According to \citet{Tsybakov2009}, the error in such nonparametric regression \emph{cannot be smaller than} $\mathcal{O}(1/\nte^{1/3})$ under Assumption~\ref{asm_lip}.
\diff{Our bound is convincing because its order aligns with that in \citet{Tsybakov2009}. We provide a detailed discussion in Appendix~\ref{discuss} and \ref{lower_bound_discuss}.} 

We finally remark that the total bias of binning using UWB and UMB cannot be improved even assuming the H\"older continuity for $\Ex[Y|f_w(x)]$ instead of Assumption~\ref{asm_lip}. 
This is because the binning bias includes the error term $\Ex|f_w(X)-f_\cali(X)|=\calo(1/B)$ even under the H\"older continuity.
Thus, we suffer from the slow converge $\Ex_{S_\mathrm{te}}\bias_\tot(f_w,S_\mathrm{te})=\mathcal{O}(1/\nte^{1/3})$ under the optimal bin size $B=\mathcal{O}(\nte^{1/3})$ (see Appendix~\ref{app_holder_discussion} for this proof). According to \citet{Tsybakov2009}, the lower bound of the nonparametric estimation is $\mathcal{O}(1/\nte^{\beta/(2\beta+1)})$ under $\beta$-H\"older continuity. 
This implies that the binning method cannot leverage the underlying smoothness of the data distribution. 
Thus, the slow convergence is the fundamental limitation of the binning scheme for both UMB and UWB.

\section{Generalization error analysis in calibration error}
\label{sec_analysisbias}
\diff{Another goal of our study is to identify the conditions under which a training algorithm achieves a low ECE or TCE on unknown data by analyzing the generalization error, which has been overlooked in previous studies. In this section, we present our theoretical analysis regarding these points.
}

\subsection{Information-theoretic analysis of generalization error in ECE and TCE}
\label{subsec:IT_analsysis_ece_tce}

The expected generalization error between the ECE and TCE can be defined through the total bias notion, that is, $\Ex_{R,\str}\bias_\tot(f_W,\str) \coloneqq \Ex_{R,\str}|\tce(f_W)-\mathrm{ECE}(f_W, \str)|$.
In this section, we derive the upper bound of $\Ex_{R,\str}\bias_\tot(f_W,\str)$ by analyzing the statistical and binning biases in the same manner as in Section~\ref{sec_proposed_analysis}.
First, we derive the following upper bound of the statistical bias, $\bias_\stat(f_w,\str)\coloneqq |\tce(f_\mathcal{I})-\mathrm{ECE}(f_w,\str)|$, using a similar proof technique as in Theorem~\ref{statistical_gap_without}.
\begin{theorem}[Generalization error bound of the ECE]\label{statistical_bias_reusing}
Under the CMI setting and under Assumption~\ref{asm_1}, for both UWB and UMB, we have
\begin{align}
\label{eq:bias_bound}
\Ex_{R,\str}\bias_\stat(f_W, \str)\leq  \Ex_{R,\str,S_\mathrm{te}}|\mathrm{ECE}(f_W,S_\mathrm{te})-\mathrm{ECE}(f_W, \str)| \leq \sqrt{\frac{8(\mathrm{eCMI}(\tilde{l})+B\log 2)}{n}},
\end{align}
where
$\mathrm{eCMI}(\tilde{l})=I(\tilde{l};U|\tilde{Z})$ and
\begin{align}
    &\tilde{l}(U,R,\tilde{Z})\coloneqq|\mathrm{ECE}(f_{\mathcal{A}(\tilde{Z}_U,R)},\tilde{Z}_{\bar{U}})\!-\!\mathrm{ECE}(f_{\mathcal{A}(\tilde{Z}_U,R)},\tilde{Z}_U)|. \label{def_ecmi}
\end{align}

\begin{proof}[Proof sketch]
\diff{We reformulate the ECE similarly to the proof outline in Theorem~\ref{statistical_gap_without}.
Errors between the losses evaluated on the training and test data are similar to the left-hand side of Eq.~\eqref{eq_absolute_CMI}; however, directly applying Eq.~\eqref{eq_absolute_CMI} leads to a suboptimal rate of $\mathcal{O}(B/\sqrt{n})$.}
\diff{Therefore, we derive the extended version of Eq.~\eqref{eq_absolute_CMI} by correlating $B$ bins according to Eq.~\eqref{eq_concentrate} and combining this with the Donsker--Varadhan lemma.
The complete proof can be found in Appendix~\ref{gene_proof}.}
\end{proof}
\end{theorem}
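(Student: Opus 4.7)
The plan is to extend the eCMI bound of Theorem~\ref{eq_ecmi_original} from an i.i.d.\ sum of per-sample losses to the nonlinear calibration quantity $\tilde{l}$. Following the reformulation used in the proof of Theorem~\ref{statistical_gap_without}, I write $\tce(f_\cali)=\sum_{i=1}^B|\mu_i^*|$ and $\ece(f_w,S)=\sum_{i=1}^B|\mu_i(S)|$, where $\mu_i(S)=\frac{1}{|S|}\sum_m h_i(Z_m)$, $h_i(Z)=(Y-f_w(X))\mathbbm{1}_{f_w(X)\in I_i}$, and $\mu_i^*=\Ex_\cald h_i(Z)=\Ex_\ste\mu_i(\ste)$. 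For the first inequality, applying the reverse triangle inequality term by term and then bringing $\Ex_\ste$ inside each absolute value via Jensen (using that $\ste$ is independent of $(\str,R)$) upper-bounds $\Ex\bias_\stat(f_W,\str)$ by $\Ex|\ece(f_W,\ste)-\ece(f_W,\str)|$, mirroring the manipulation behind Eq.~\eqref{eq_concentrate}. In the CMI setup I identify $\str=\tilde{Z}_U$ and $\ste=\tilde{Z}_{\bar U}$, so that the expected gap equals $\Ex\,\tilde{l}(U,R,\tilde{Z})$.

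\textbf{Donsker--Varadhan step and exponential moment.} Conditionally on $\tilde{Z}$, the Donsker--Varadhan lemma gives, for any $\lambda>0$,
\[
\lambda\,\Ex\,\tilde{l}(U,R,\tilde{Z})\;\le\;\Ex_{\tilde{Z}}\log\Ex_{U'}e^{\lambda\,\tilde{l}(U',R,\tilde{Z})}+\ecmi(\tilde{l}),
\]
where $U'$ is drawn uniformly and independently of $W=\mathcal{A}(\tilde{Z}_U,R)$, so inside the exponential the loss information is held fixed while the train--test split is re-sampled. To bound the exponential moment I use the sup-over-sign representation $\ece(f_w,S)=\sup_{\epsilon\in\{-1,1\}^B}T_\epsilon(S)$ with $T_\epsilon(S)=\frac{1}{|S|}\sum_m\sum_i\epsilon_i h_i(Z_m)$; this yields the linearization $\tilde{l}\le\sup_\epsilon\bigl(T_\epsilon(\tilde{Z}_{\bar U'})-T_\epsilon(\tilde{Z}_{U'})\bigr)$ and hence $e^{\lambda\tilde{l}}\le\sum_\epsilon e^{\lambda(T_\epsilon(\tilde{Z}_{\bar U'})-T_\epsilon(\tilde{Z}_{U'}))}$. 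For each fixed $\epsilon$, the exponent is the Rademacher-weighted sum $\frac{1}{n}\sum_m(1-2U_m')D_{m,\epsilon}$ with $|D_{m,\epsilon}|\le 2$ (because the bins are disjoint and $|Y-f_w(X)|\le 1$), so Hoeffding's lemma bounds its MGF by $e^{2\lambda^2/n}$. Summing over the $2^B$ sign patterns gives $\log\Ex_{U'}e^{\lambda\tilde{l}(U',R,\tilde{Z})}\le B\log 2+2\lambda^2/n$; substituting back into DV and optimizing over $\lambda$ yields the stated $\sqrt{8(\ecmi(\tilde{l})+B\log 2)/n}$.

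\textbf{Main obstacle.} The central difficulty is that the ECE is a sum of $B$ absolute values of sample means rather than an i.i.d.\ sum of bounded losses, so Theorem~\ref{eq_ecmi_original} cannot be applied off the shelf. A per-bin invocation of that bound followed by a triangle sum would give only $\mathcal{O}(B/\sqrt{n})$, with $B$ appearing linearly outside the square root. The sup-over-$\epsilon$ linearization recasts the ECE as the maximum of $2^B$ linear functionals whose joint exponential moment pays only the $B\log 2$ term inside the root --- this is precisely the concentration inequality Eq.~\eqref{eq_concentrate} repurposed for the CMI swap --- and the Donsker--Varadhan step is what lifts this fixed-$f_w$ concentration to an algorithm-dependent eCMI bound while keeping the correlations across bins intact.
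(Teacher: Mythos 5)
Your proposal is correct and follows essentially the same route as the paper's Appendix proof: the same bin-wise reformulation with $h_i(Z)=(Y-f_w(X))\mathbbm{1}_{f_w(X)\in I_i}$, the same triangle-plus-Jensen step introducing fresh test data for the first inequality, and the same Donsker--Varadhan argument with the decoupled split $U'$, where your sup-over-signs linearization with $2^B$ patterns and Hoeffding's lemma ($|D_{m,\epsilon}|\le 2$ since at most one bin indicator is active per sample) is exactly the paper's $e^{|x|}\le e^{x}+e^{-x}$ expansion combined with the bounded-differences lemma, yielding the identical $B\log 2+2\lambda^2/n$ moment bound and the same optimization over $\lambda$. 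The only point the paper makes explicitly that you leave implicit is the UMB case, where the bins depend on $\tilde{Z}_U$; this is covered by your remark that everything determined by $W=\mathcal{A}(\tilde{Z}_U,R)$ is held fixed while only the evaluation split is re-sampled.
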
 
Comparing Eq.~\eqref{eq:bias_bound} with Eq.~\eqref{stat_bias_test_absolute} in Theorem~\ref{statistical_gap_without}, we find that $\ecmi$ measures the \emph{additional bias} that arises when evaluating the ECE using training data that are dependent on the trained model $f_w$ instead of test data, which are independent of it. In other words, the term $\Ex_{R,\str,S_\mathrm{te}}|\mathrm{ECE}(f_W,S_\mathrm{te})-\mathrm{ECE}(f_W, \str)|$ can be regarded as the expected generalization error of the ECE, and $\ecmi$ is the dominant term of the generalization gap. Therefore, if the trained model has a sufficiently low $\ecmi$, it achieves good generalization performance in terms of the ECE. 
The behavior of $\ecmi$ clearly affects the convergence rate of this bound, which is discussed in Section~\ref{subsec:ecmi_behavior}. Moreover, our bound and eCMI are numerically evaluable and we confirm that our bound is numerically nonvacuous (see Section~\ref{sec:experiments}). We also show the application of our bound in the setting of recalibration in Section~\ref{sec_recalibration_bias}.

In Appendix~\ref{app_proof_gens}, we derive the binning bias under the training data similar to Theorem~\ref{binning_bias_without}.
By combining this result with Theorem~\ref{statistical_bias_reusing},
we obtain the following generalization error bound for the TCE.
\begin{theorem}[Generalization error bound of the TCE]\label{thm_total_bias}
Under the CMI setting and under Assumptions~\ref{asm_1} and \ref{asm_lip}, we have
\begin{align}\label{eq_total_train}
\!\!\!\Ex_{R,\str}\bias_\tot(f_W,\str)
\!\leq \!
\begin{cases}
\!\frac{1+L}{B}+\!\sqrt{\frac{8\left(\mathrm{eCMI}(\tilde{l})+B\log 2\right)}{n}}\! &\!\!\text{(for UWB)}, \\
\frac{1+L}{B}\! +\! \sqrt{\frac{8\left(\mathrm{eCMI}(\tilde{l})\!+\!B\log 2\right)}{n}}
 \!+\! (1\!+\!L)\!\sqrt{\frac{2\left(\fcmi\!+\!B\log 2\right)}{n}}\!\! &\!\!\text{(for UMB)}.
\end{cases}
\end{align}
In the above, $\ecmi(\tilde{l})$ is defined as Eq.~\eqref{def_ecmi} and 
\begin{align}\label{eq_fcmi_def}
&\fcmi\coloneqq I(f_{\mathcal{A}(\tilde{Z}_U,R)}(\tilde{X});U|\tilde{Z}),
\end{align}
where $\tilde{x}$ denotes the $n\times 2$ matrix obtained by projecting each element of $\tilde{z}$, and $f_{\mathcal{A}(\tilde{Z}_U,R)}(\tilde{X})$ is the $n\times 2$ matrix calculated by the elementwise application of $f_{\mathcal{A}(\tilde{Z}_U,R)}(\cdot)$ to $\tilde{x}$.
\end{theorem}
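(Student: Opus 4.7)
The plan is to bound the total bias via the triangle inequality $\bias_\tot(f_W, \str) \leq \bias_\bin(f_W, f_\cali) + \bias_\stat(f_W, \str)$ as in Eq.~\eqref{eq_total_bias}, and then to bound each summand under $\Ex_{R,\str}$. The statistical bias is handled directly by Theorem~\ref{statistical_bias_reusing}, which gives $\Ex_{R,\str}\bias_\stat(f_W,\str) \leq \sqrt{8(\ecmi(\tilde{l}) + B\log 2)/n}$ and, being binning-scheme agnostic, accounts for the $\sqrt{8(\ecmi(\tilde{l}) + B\log 2)/n}$ contribution appearing in both cases of Eq.~\eqref{eq_total_train}.

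For the UWB binning bias the argument is essentially immediate: the partition $\{((i-1)/B, i/B]\}_{i=1}^B$ is deterministic and independent of the sample, so the proof of Theorem~\ref{binning_bias_without} carries over verbatim after taking expectation over $(R,\str)$ and yields $\Ex_{R,\str}\bias_\bin(f_W, f_\cali) \leq (1+L)/B$. Adding the statistical-bias bound completes the UWB case.

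For the UMB binning bias, I would reuse the decomposition from the proof of Theorem~\ref{binning_bias_without}, namely $\bias_\bin(f_W, f_\cali) \leq \Ex|\Ex[Y|f_W(X)] - \Ex[Y|f_\cali(X)]| + \Ex|f_W(X) - f_\cali(X)|$, and invoke the Lipschitz assumption to reduce both summands to $(1+L)\Ex|f_W(X) - f_\cali(X)|$. The $(1+L)/B$ contribution then arises from the idealised situation in which the empirical quantiles split the probability mass exactly; what remains is the error incurred by the empirical quantiles of $\{f_W(X_m)\}_{m=1}^n$ on $\str$. Since $f_W$ is trained on $\str$, the predictions and the quantile construction are jointly dependent on the training data and a plain DKW-type concentration bound is not available; an information-theoretic correction is required.

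To obtain this correction, I would replay the argument of Theorem~\ref{statistical_bias_reusing} in the supersample/CMI setting, but with the loss matrix $\tilde{l}$ replaced by the prediction matrix $f_{\mathcal{A}(\tilde{Z}_U,R)}(\tilde{X})$. The key ingredient is a variant of the exponential moment inequality of Eq.~\eqref{eq_concentrate}, applied to the statistic $|\mathrm{ECE}(f_W,S_\mathrm{te})-\mathrm{ECE}(f_W,\str)|$-like quantity that measures the gap between empirical and population bin allocations; combining it with the Donsker--Varadhan change-of-measure bound yields the extra term $(1+L)\sqrt{2(\fcmi + B\log 2)/n}$, where $\fcmi$ (defined by Eq.~\eqref{eq_fcmi_def}) is exactly the right CMI because the object being concentrated is the prediction matrix rather than the loss matrix. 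The principal obstacle is this last step: decoupling the empirical quantile fluctuation on $\str$ from the training-time dependence of $f_W$ on $\str$ without degrading the $\sqrt{B/n}$ rate. The supersample formulation together with the binwise exponential moment makes this possible at the sole cost of replacing $\ecmi(\tilde{l})$ by $\fcmi$, which is precisely how the UMB-specific term in Eq.~\eqref{eq_total_train} emerges.
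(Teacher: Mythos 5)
Your proposal follows essentially the same route as the paper: decompose the total bias into binning and statistical parts, invoke Theorem~\ref{statistical_bias_reusing} for the statistical part, note the deterministic-partition argument gives $(1+L)/B$ for UWB, and for UMB control the empirical-vs-population bin-mass fluctuation by replaying the Donsker--Varadhan/binwise exponential-moment argument with indicator (prediction-level) quantities, which by data processing is governed by $\fcmi$ and yields the $(1+L)\sqrt{2(\fcmi+B\log 2)/n}$ correction — exactly the paper's Theorem~\ref{thm_binning_bias}. The only details left implicit (the H\"older-inequality split of $\Ex|f_W(X)-f_\cali(X)|$ over bins and the fact that the in-bin deviations sum to at most $1$ so the empirical mass bound $\hat{P}(I_i)\leq 1/B$ delivers the $1/B$ term) are filled in the same way in the paper's appendix, so there is no substantive gap.
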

\diff{When comparing Eq.~\eqref{eq_test_data_use_total_bias} with the above results, it is observed that an additional bias, $\ecmi$ (including $\fcmi$ in UMB), derived from training data arises.
This implies that the trained model shows a low TCE when it sufficiently reduces these additional biases and achieves a small ECE.}
From a practical viewpoint, this implies that our bound can potentially be used as a theoretical guarantee for some recent training algorithms, which directly control the ECE under the training dataset~\cite{pmlr-v80-kumar18a,popordanoska2022consistent,wang2021rethinking}. Our theory might guarantee the ECE under test dataset for them.

Another interesting implication from our bounds is that we can derive the optimal bin size to minimize the upper bound in \diff{Theorem~\ref{thm_total_bias}}. 
\diff{If $\ecmi(\tilde{l})$ and $\fcmi$ are sufficiently small compared with $n$, for example, $\mathcal{O}(\log n)$ (we discuss this in Section~\ref{subsec:ecmi_behavior}), then, the optimal bin size can be derived as $B=\mathcal{O}(n^{1/3})$ by minimizing Eq.~\eqref{eq_total_train} w.r.t.~$B$.
Such an optimal $B$ leads to}
\begin{align}
\Ex_{R,\str}\bias_\tot(f_w,\str) = \mathcal{O}(\log n/n^{1/3})\label{eq_optimal_order}.
\end{align}
\diff{According to Eq.~\eqref{eq_optimal_test} and the above result, we can anticipate that $\Ex_{R,\str}\bias_\tot(f_w,\str)$ is much smaller than $\Ex_{\ste}\bias_\tot(f_W,\ste)$ because the number of training data is often much larger than that of test data $(n\gg n_\mathrm{te})$.
This implies that if the model generalizes well, evaluating the ECE using the training dataset may better reduce the total bias than that using test dataset. Although proposing such a new TCE estimation method is beyond the scope of this paper, this represents an important direction for future research.}

\subsection{On the behavior of eCMI and the order of total bias on metric entropy}
\label{subsec:ecmi_behavior}
In this section, we analyze how additional biases, i.e., $\ecmi(\tilde{l})$ and $\fcmi$, behave. 
The initial observation is that the following relation holds~\cite{hellstrom2022a}: $\ecmi(\tilde{l}) \leq \fcmi \leq I(W;S)$.
\diff{Furthermore, we can see that} $I(W;S)=\calo(\log n)$ under certain constrained conditions, such as the conditionally i.i.d.~setting when $f_w(x)$ is the underlying probability model for $p(y|x;w)$ with $\mathcal{W}$ being compact and holding appropriate smoothness assumptions~\cite{hafez2021rate, xu2022minimum}. Furthermore, $\fcmi$ can be upper bounded when the algorithms satisfy the various notions of stability~\cite{steinke20a}. For example, \diff{differential} private algorithms and stochastic gradient Langevin dynamics (SGLD)~\cite{welling2011bayesian} algorithms \diff{are included in this argument.}
\diff{A more detailed discussion can be found} in Appendix~\ref{app_eCMI}.

These arguments, however, hold true only for specific models and algorithms.
Therefore, we \diff{extend Theorem~\ref{thm_total_bias} by utilizing the concept of \emph{metric entropy}} \diff{to overcome this issue.}
\begin{theorem}[Metric entropy]
\label{thme_metric}
Let $\calf$ be the function class $f_w$ belongs to. Suppose that $\calf$ with the metric $\|\cdot\|_\infty$ has the metric entropy, $\log \mathcal{N}(\calf,\|\cdot\|_\infty,\delta)$, with the parameter $\delta \ (> 0)$.
That is, there exists a set of functions $\mathcal{F}_\delta \coloneqq \{f_{1}, \dots, f_{\mathcal{N}(\mathcal{F},\|\cdot\|_\infty,\delta)}\}$ that consists of $\delta$-cover of $\calf$.
Then, under the CMI setting and under Assumptions~\ref{asm_1} and \ref{asm_lip}, 
for any $\delta \in(0,1/B]$ and for UWB, we have
\begin{align}
\label{eq:gen_err_metric}
\Ex_{R,\str}\bias_\tot(f_W,\str)
&\leq \frac{1+L}{B}+(2+L)\delta+\sqrt{\frac{8B\log 2\mathcal{N}(\calf,\|\cdot\|_\infty,\delta/B)}{n}}.
\end{align}
\end{theorem}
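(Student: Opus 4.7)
The plan is to decompose the total bias as $\bias_\tot(f_W,\str) \le \bias_\bin(f_W,f_\cali) + \bias_\stat(f_W,\str)$, control $\bias_\bin$ by $(1+L)/B$ via Theorem~\ref{binning_bias_without}, and then replace the $\ecmi$-dependent bound on $\bias_\stat$ from Theorem~\ref{statistical_bias_reusing} with a uniform convergence argument over the $\delta$-cover $\calf_\delta$ of $\calf$. The scheme mirrors classical empirical-process arguments but applied to the specific exponential-moment inequality (Eq.~\eqref{eq_concentrate}) derived earlier in the paper, which already delivers the key $B\log 2$ dependence per fixed $f$.

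After invoking Theorem~\ref{statistical_bias_reusing} to dominate $\Ex_{R,\str}\bias_\stat(f_W,\str)$ by $\Ex_{R,\str,\ste}|\mathrm{ECE}(f_W,\ste)-\mathrm{ECE}(f_W,\str)|$, I would pick the (data-dependent) nearest cover element $f_j \in \calf_\delta$ to the learned $f_W$ and apply a three-term triangle inequality around $f_j$. This splits the right-hand side into two ``approximation'' pieces $|\mathrm{ECE}(f_W,S)-\mathrm{ECE}(f_j,S)|$ for $S\in\{\str,\ste\}$ plus a ``uniform'' piece $|\mathrm{ECE}(f_j,\str)-\mathrm{ECE}(f_j,\ste)|$. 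For the approximation pieces, expanding the ECE as $\sum_i |\Ex_{\hat S}(Y-f(X))\mathbbm{1}_{f(X)\in I_i}|$ and using $\|f_W-f_j\|_\infty \le \delta \le 1/B$ bounds each one by $\delta$ plus the empirical fraction of samples whose prediction lies within $\delta$ of a UWB bin boundary; combined with the Lipschitz-type control $|\tce(f)-\tce(f')| \le (1+L)\|f-f'\|_\infty$ furnished by Assumption~\ref{asm_lip}, these contributions collapse to a total of $(2+L)\delta$ in expectation.

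For the uniform piece, I would apply the exponential-moment inequality of Eq.~\eqref{eq_concentrate} in its supersample form (as in the proof of Theorem~\ref{statistical_bias_reusing}) to each fixed $f \in \calf_\delta$ and then combine via a log-sum-exp / union bound across the cover, giving for any $\lambda>0$
\[
\lambda\,\Ex\sup_{f\in\calf_\delta}|\mathrm{ECE}(f,\str)-\mathrm{ECE}(f,\ste)| \le B\log 2 + \log\calN(\calf,\|\cdot\|_\infty,\delta) + \frac{\lambda^2}{2n} + C_n,
\]
where $C_n$ absorbs a $\log(4n)$-order term coming from a uniform discretization of the empirical measures on the $2n$-entry supersample (needed to turn the boundary-mass bound into a deterministic one). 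Optimizing $\lambda$ produces the $\sqrt{8B\log(4n\,\calN(\calf,\|\cdot\|_\infty,\delta))/n}$ scaling. Summing the binning bias, the $(2+L)\delta$ approximation contribution, and this uniform term yields Eq.~\eqref{eq:gen_err_metric}.

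The hard part is the approximation step. Even when $\|f_W-f_j\|_\infty \le \delta$, the ECEs computed with $f_W$ and $f_j$ implicitly use \emph{different} partitions of $\calx$, because samples whose predictions are within $\delta$ of a UWB bin boundary can be assigned to different bins by $f_W$ and $f_j$. Showing that this boundary contribution is absorbable into an $O(\delta)$ term — rather than an $O(B\delta)$ term, which would destroy the claimed scaling — is the delicate technical burden, and it is here that Assumption~\ref{asm_1} (absolute continuity of $f_W(X)$) plays a crucial role by ruling out atoms at bin boundaries and permitting a uniform control of the boundary-region probability through the $\log(4n)$ cost inside the logarithm of the final bound.
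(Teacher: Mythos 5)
There is a genuine gap at exactly the point you flag as the ``hard part,'' and it is not repairable in the way you suggest. Replacing $f_W$ by the nearest plain cover element $f_j\in\mathcal{F}_\delta$ does not give $|\mathrm{ECE}(f_W,S)-\mathrm{ECE}(f_j,S)|=O(\delta)$: every sample whose prediction lies within $\delta$ of a UWB boundary may be assigned to different bins by $f_W$ and $f_j$, and each such reassignment can shift the ECE by order $1/n$ (the summand $y_m-f(x_m)$ is order one, and the absolute values around the per-bin sums prevent cancellation). The resulting error is of the order of the \emph{fraction of samples near bin boundaries}, which Assumption~\ref{asm_1} does not control: absolute continuity rules out atoms but places no quantitative bound on the density of $f_W(X)$ near the $B$ boundaries, so this fraction can be $\Theta(1)$. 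Your claim that this is absorbed ``through the $\log(4n)$ cost'' conflates two different things; in the paper the $\log 4n$ has nothing to do with boundary mass — it comes from a counting bound on the eCMI of a finite function class evaluated on the $2n$ supersample points.

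The paper's proof sidesteps the boundary problem by a different construction: for each bin $I_i$ it clips every cover element into $(\tfrac{i-1}{B}+\epsilon,\tfrac{i}{B}]$, forming classes $\mathcal{F}_i$, and replaces $f_w$ by the composite $f(x)=\sum_{i=1}^B f_i(x)\mathbbm{1}_{f_w(x)\in I_i}$ with $f_i\in\mathcal{F}_i$. By construction $f$ lands in the \emph{same} bin as $f_w$ at every input while staying within $\delta$ of it, so the bin indicators are unchanged and the ECE and TCE replacement errors are genuinely $\delta$ and $(1+L)\delta$; this is the idea your proposal is missing. The remaining term is then the total bias of $f$, bounded by Theorem~\ref{thm_total_bias}, and its $\mathrm{eCMI}(\tilde l)$ is bounded by the log of the number of distinct loss patterns the composite class can produce on the supersample, at most $(2n\mathcal{N}(\calf,\|\cdot\|_\infty,\delta))^B$, which is where $B\log 4n\mathcal{N}$ comes from — no union bound over the cover and no discretization of empirical measures is used. (Your uniform-convergence step with a log-sum-exp over $\mathcal{F}_\delta$ would be a legitimate alternative for the fixed-function train--test gap, and would even give $B\log 2+\log\mathcal{N}$ instead of $B\log\mathcal{N}$, but it cannot rescue the argument because the approximation step preceding it fails.)
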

See Appendix~\ref{app_proof_metric} for the proof. Theorem~\ref{thme_metric} connects the IT-based bound to the uniform convergence theory. With this result, we can discuss the optimal number of bins across a broad spectrum of models.
For example, we can obtain $\mathcal{N}(\calf,\|\cdot\|_\infty,\delta) \asymp \left(\frac{L_0}{\delta}\right)^d$ when $f_w$ is a $d$-dimensional parametric function \diff{that is} $L_0$-Lipschitz continuous $(L_0 > 0)$~\cite{wainwright_2019}, leading to the following upper bound:
\begin{align}\label{eq_metric_etnropy}
\diff{\Ex_{R,\str}\bias_\tot(f_W,\str)}
&\lesssim \frac{3+2L}{B}+ \sqrt{\frac{8dB\log(2L_0B^2)}{n}},
\end{align}
where we set $\delta=\calo(1/B)$.
This bound \diff{is} minimized when $B=\calo(n^{1/3})$, resulting in a bias of $\calo(\log n/n^{1/3})$, which is consitent with Eq.~\eqref{eq_optimal_order}. 

The drawback of the above bound is that it depends on the model's dimensionality explicitly, making them unsuitable for large models such as neural networks. To address this issue, we can use the different combinatorial properties, such as the fat-shattering dimension. See Appendix~\ref{app_proof_metric} for the detail.

\subsection{Generalized error analysis on recalibration and bias due to reuse of training data}
\label{sec_recalibration_bias}
\diff{As an application of our generalization error bound, we analyze recalibration using a post-hoc recalibration function, which is used when the trained model is not well calibrated.}
We focus on the recalibration using UMB with recalibration data~\cite{sun2023minimumrisk,gupta21b}. 
In this setting, we first split overall data into the training, recalibration, and test datasets (see Appendix~\ref{app_data_split} for details of this splitting strategy). 
After \diff{training} $f_w$ using the training dataset, we construct the recalibrated function $h$ using the recalibration dataset $S_{\mathrm{re}}$ as
\begin{align}\label{eq_recabrate_y}
h_{\cali,S_{\mathrm{re}}}(x)\coloneqq\sum_{i=1}^B\bar{y}_{i,S_{\mathrm{re}}}\cdot\mathbbm{1}_{f_w(x)\in I_i},
\end{align}
where $\bar{y}_{i,S_{\mathrm{re}}}$ is the empirical mean of $\{y_{m}\}_{m=1}^{n_{\mathrm{re}}}\in S_{\mathrm{re}}$ in the $i$-th bin defined as in Eq.~\eqref{ece_def} and $n_{\mathrm{re}}$ is the number of the recalibration dataset.
In short, Eq.~\eqref{eq_recabrate_y} provides an estimator of the conditional expectation of $Y$ given $f_w(x)$ by setting $S_{e}=S_{\mathrm{re}}$.
\citet{gupta21b} clarified that the statistical bias of Eq.~\eqref{eq_recabrate_y} is given by $\Ex_{\sre}\tce(h_{\cali,S_{\mathrm{re}}})=\Ex[|\Ex[Y|h_{\cali,S_{\mathrm{re}}}(X)]-h_{\cali,S_{\mathrm{re}}}(X)]| = \mathcal{O}_p(\sqrt{B\log B/|S_{\mathrm{re}}|})$. Since we need to split the overall data into three datasets, \diff{this approach could be} sample-inefficient and \diff{could result in a very loose bound.
Although reusing the training dataset may solve this problem to some extent, it has been suggested that this method may cause performance degradation due to overfitting~\cite{kumar2019verified,gupta21b}.} 

Our contribution here is quantifying the bias caused by overfitting due to the reuse of training data by utilizing our generalization error analysis in Section~\ref{subsec:IT_analsysis_ece_tce} as follows.
\begin{theorem}[Recalibration reusing the training dataset]\label{recalibration_gap}
Replacing $S_\mathrm{re}$ with $\str$ in Eq.~\eqref{eq_recabrate_y}, under the CMI setting and under Assumptions~\ref{asm_1} and \ref{asm_lip}, we have
\begin{align}
\Ex_{R,\str}\tce(h_{\cali,\str})=\Ex_{R,\str}\Ex[|&\Ex[Y|h_{\cali,\str}(X)]-h_{\cali,\str}(X)] \leq 2\sqrt{\frac{2(\mathrm{fCMI}+B\log 2)}{n}},\nonumber
\end{align}
where $\fcmi$ is defined in Eq.~\eqref{eq_fcmi_def}. 
\end{theorem}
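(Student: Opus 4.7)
The plan is to reduce $\Ex_{R,\str}\Ex[|\Ex[Y|h_{\cali,\str}(X)]-h_{\cali,\str}(X)|]$ to an object of the same structure as the generalization gap bounded in Theorem~\ref{statistical_bias_reusing}, and then run the Donsker--Varadhan argument with $\fcmi$ in place of $\ecmi(\tilde{l})$. First, since $h_{\cali,\str}$ is piecewise constant and takes the value $\bar{y}_{i,\str}$ on the bin $I_i$ of $f_w$, the conditional expectation $\Ex[Y\mid h_{\cali,\str}(X)=\bar{y}_{i,\str}]$ agrees with $\bar{y}_{i,\cald}\coloneqq\Ex[Y\mid f_w(X)\in I_i]$ when the bin means are distinct, and Jensen handles the degenerate case. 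Using the algebraic identity from the proof sketch of Theorem~\ref{statistical_gap_without}, the target is therefore upper bounded by $\sum_i|\Ex_\cald[(Y-h_{\cali,\str}(X))\mathbbm{1}_{f_w(X)\in I_i}]|$. The crucial simplification is that the empirical counterpart on $\str$ vanishes by construction of $h_{\cali,\str}$ as the in-bin empirical label mean, i.e., $\ece(h_{\cali,\str},\str)=0$.

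Next I would add an independent test copy $\ste=\tilde{Z}_{\bar U}$ via Jensen (as in Eq.~\eqref{stat_bias_test_ex}) to obtain $\tce(h_{\cali,\str})\le\Ex_\ste\ece(h_{\cali,\str},\ste)$, which, using $\ece(h_{\cali,\str},\str)=0$, equals $\Ex_\ste|\ece(h_{\cali,\str},\ste)-\ece(h_{\cali,\str},\str)|$. Taking expectations over $R,\str$ in the CMI setup with $\str=\tilde{Z}_U$, the quantity to control is thus upper bounded by $\Ex\,\tilde{l}_h$, where $\tilde{l}_h\coloneqq|\ece(h_{\cali,\tilde{Z}_U},\tilde{Z}_{\bar U})-\ece(h_{\cali,\tilde{Z}_U},\tilde{Z}_U)|$. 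This is the precise object addressed by Theorem~\ref{statistical_bias_reusing}, but now for the recalibrated predictor rather than for $f_w$ itself.

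I would then apply the Donsker--Varadhan variational formula conditionally on $\tilde{Z}$, using $f_{\cala(\tilde{Z}_U,R)}(\tilde{X})$ (whose CMI with $U$ given $\tilde{Z}$ is exactly $\fcmi$) in place of the loss matrix. Because the UMB bins, the $\bar{y}_{i,\str}$'s, and hence both ECEs in $\tilde{l}_h$ are deterministic functions of $(f_{\cala(\tilde{Z}_U,R)}(\tilde{X}),U,\tilde{Z})$, the variational formula gives, for every $\lambda>0$, $\lambda\Ex\tilde{l}_h\le\fcmi+\log\Ex^{\ast}[e^{\lambda\tilde{l}_h}]$, where $\Ex^{\ast}$ denotes the decoupled measure under which $f_w(\tilde{X})$ and $U$ are conditionally independent given $\tilde{Z}$.

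Finally, $\log\Ex^{\ast}[e^{\lambda\tilde{l}_h}]$ is to be bounded by repeating the argument behind Eq.~\eqref{eq_concentrate}: split $\tilde{l}_h$ into its $B$ bin contributions, union bound across the $2^B$ sign patterns to absorb the absolute values (contributing $B\log 2$), and apply McDiarmid to the remaining signed sum as a function of the coordinates of $U$; optimising $\lambda$ then yields the advertised $2\sqrt{2(\fcmi+B\log 2)/n}$. The main obstacle is precisely this final exponential-moment step: even after decoupling $f_w(\tilde{X})$ from $U$, the predictor $h_{\cali,\tilde{Z}_U}$ still depends on $U$ through both the UMB bin boundaries and the empirical bin means on $\str$, so the McDiarmid bounded-difference constants must be verified to stay $O(1/n)$ after flipping a single $U_m$ simultaneously perturbs the bins, the $\bar{y}_{i,\str}$'s, and the test-side evaluation in $\tilde{l}_h$.
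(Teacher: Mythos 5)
Your reduction steps are sound and mirror the paper's: writing the target as $\sum_i|\Ex_\cald[(Y-\bar{y}_{i,\str})\mathbbm{1}_{f_w(X)\in I_i}]|$, noting $\ece(h_{\cali,\str},\str)=0$ by construction, and introducing the ghost sample via Jensen are all present (in essentially this form) in the paper's Appendix~\ref{app_proof_recab}. The genuine gap is exactly the step you yourself flag as "the main obstacle" and leave unresolved: the exponential-moment bound under your decoupled measure. Because you decouple \emph{all} appearances of $U$ from $f_w(\tilde{X})$, under $\Ex^{\ast}$ the quantity $\tilde{l}_h$ is evaluated with a fresh $U$ that simultaneously determines the UMB boundaries, every empirical bin mean $\bar{y}_{i,\str}$, and the train/test split. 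Flipping a single $U_m$ then shifts an order-statistic boundary (re-assigning evaluation points in up to order $B$ bins) and moves the affected bin means by order $B/n$, so the bounded-difference constants are of order $B/n$ rather than $1/n$; McDiarmid then yields an extra factor of roughly $B$, and neither the claimed rate nor the constant $2\sqrt{2}$ comes out. So, as formulated, the decisive concentration step fails.

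The paper avoids this with a device your proposal is missing: before any concentration, it adds and subtracts $\hat{\mu}_{i,\str}\mathbbm{1}_{f_W(X_m)\in I_i}$ and uses the defining identity $\hat{\mu}_{i,\str}\sum_m\mathbbm{1}_{f_W(X_m)\in I_i}=\sum_m Y_m\mathbbm{1}_{f_W(X_m)\in I_i}$ together with $|\hat{\mu}_{i,\str}|\le 1$, splitting the target into two ECE-gap-type terms $\Delta_1$ (label-weighted indicators) and $\Delta_2$ (bin-mass indicators) in which the data-dependent recalibrated values appear only as a bounded multiplier that is discarded. Each term is then handled by the Theorem~\ref{statistical_bias_reusing}/Theorem~\ref{thm_binning_bias} machinery, where the exponential moment is taken over a ghost $U'$ \emph{conditionally on} $U$, so the trained model, the UMB bins, and the bin means are fixed and a single flip of $U'_m$ changes the exponent by $O(t/n)$; the two terms, each $\sqrt{2(I(\Delta_j;U|\tilde{Z})+B\log 2)/n}\le\sqrt{2(\fcmi+B\log 2)/n}$ (by the same data-processing step the paper uses), produce the factor $2$. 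A repaired version of your single-gap route is conceivable if you instead ghost only the evaluation indices while conditioning on the training-slot $U$ (as in the paper's proofs), but that is not what your decoupling states, and without either that set-up or the paper's $\hat{\mu}$-elimination the proof does not go through.
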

The complete proof is provided in Appendix~\ref{app_proof_recab}. In the above, $\fcmi$ corresponds to the additional bias caused by overfitting due to the reuse of $\str$.
This indicates that reusing $\str$ does not negatively affect the order of the bias if $\fcmi$ is smaller than other terms, as discussed in Sections~\ref{subsec:IT_analsysis_ece_tce} and \ref{subsec:ecmi_behavior}. 
Since the size of $\str$ is much larger than that of $\sre$, the recalibration function $h_{\cali, \str}$ may exhibit a much smaller bias compared to $h_{\cali, S_\mathrm{re}}$. 
We investigate this possibility numerically in Section~\ref{sec:experiments} by using the tighter version of Theorem~\ref{recalibration_gap} provided in  Appendix~\ref{app_proof_recab} (Corollary~\ref{app_cor_main}).

\section{Related work}
We have presented the results of our analyses of the total bias in the ECE and the generalization error for both the ECE and the TCE.
Existing studies have primarily focused on the statistical bias, with little attention given to the binning bias. \citet{gupta2020} and \citet{gupta21b} examined the statistical bias associated with UMB, but they did not address the binning bias as we did. 
In contrast, \citet{kumar2019verified} studied the binning bias but did not specify how this bias depends on $n$ and $B$. 
Moreover, most analyses have concentrated on UMB and UWB has not been thoroughly analyzed.
As outlined in the proof of Theorem~\ref{statistical_gap_without}, our approach allows us to analyze UWB even in cases where some bins do not have any data points by employing our reformulation and concentration inequality. 
It is important to note that \citet{roelofs2022mitigating} studied the numerical behavior of the total bias in some practical models, whereas we focus on the theoretical aspect of the total bias.
Recently, \citet{sun2023minimumrisk} have derived the optimal number of bins under the recalibration with UMB. 
Compared with this, we derived the optimal number of bins for UMB and UWB without recalibration under a similar Lipschitz assumption. This leads to the discussion of estimating the TCE from the nonparametric estimation. 
An additional discussion is summarized in Appendix~\ref{app_discuss}.

We have extended the existing eCMI bound~\cite{steinke20a,harutyunyan2021,hellstrom2022a,wang23}, which is used for analyzing generalization performance in terms of prediction accuracy, to calibration analysis.
In addition, whereas existing eCMI bounds numerically evaluated $\ecmi$ and $\fcmi$ for \emph{discrete} random variables such as zero-one loss, our analysis is conducted on \emph{continuous} random variables as shown in Eq.~\eqref{def_ecmi}. 
We show in the next section that our bounds are still nonvacuous even for the continuous random variables. The IT analysis was also utilized by \citet{russo16} to study the bias caused by data reuse. 
Our analysis can be seen as an extension of this approach to the ECE and recalibration.

\begin{figure}[th]
    \centering
    \includegraphics[scale=0.15]{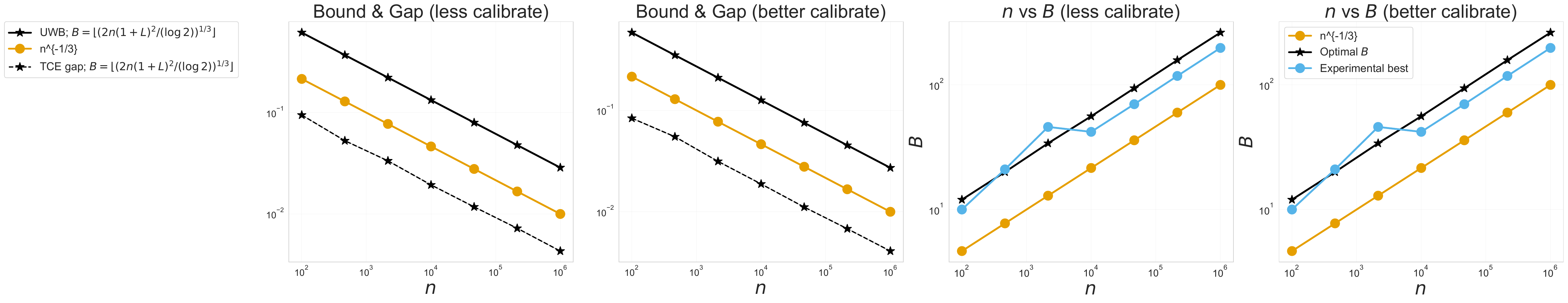}
    \caption{Behavior of the upper bound in Eq.~\eqref{eq_test_data_use_total_bias} as $n$ increases when UWB is used. The following two terms: \emph{less calibrate} and \emph{better calibrate} refer to $\beta = (0.5, -1.5)$ and $\beta = (0.2, -1.9)$, respectively, where the former setting produces a worse value of the TCE estimator.
    }
    \label{fig:boundplot_tce_log}
\end{figure}
\begin{figure*}[t]
    \centering
    \includegraphics[scale=0.19]{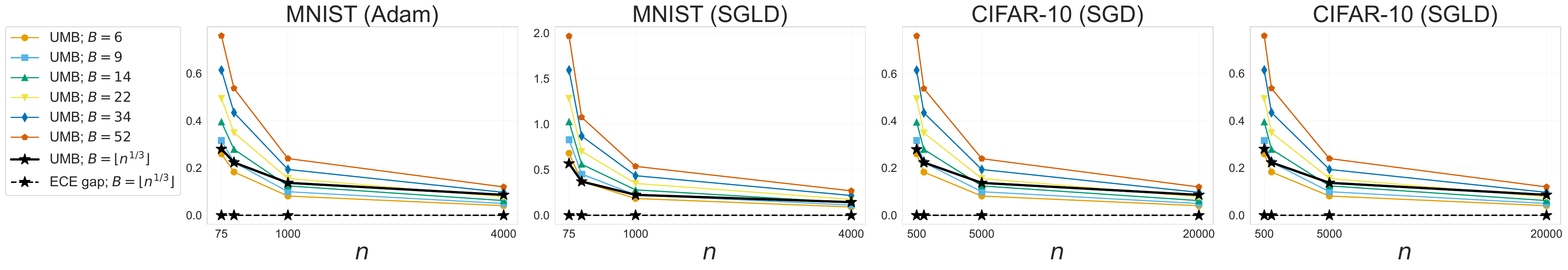}
    \caption{Behavior of the upper bound in Eq.~\eqref{eq:bias_bound} for various $B$ as $n$ increases (mean $\pm$ std.). For clarity, only the results using UMB are shown. The ECE gap is shown for $B = \lfloor n^{1/3} \rfloor$ since the change in $B$ did not result in significant differences. We refer to Figure~\ref{fig:boundplot_logscale} in Appendix~\ref{app:bound_plot_various} for a detailed analysis of the relationship between (log-scaled) ECE gap values and bound values across different bin settings.
    }
    \label{fig:boundplot}
\end{figure*}
\section{Experiments}
\label{sec:experiments}
In this section, we present experimental results validating our bounds (\cref{subsec:verify_bound}) and the additional bias arising from reusing the training dataset for recalibration (\cref{subsec:compare_recalibration}).

\subsection{Verification of our bounds}
\label{subsec:verify_bound}
In this section, we empirically validate our theoretical findings in Eq.~\eqref{eq_test_data_use_total_bias}, the nonvacuous nature of our bounds in Eq.~\eqref{eq:bias_bound}, and confirm the efficiency of the optimal number of bins as discussed in Section~\ref{subsec:IT_analsysis_ece_tce}.

\paragraph{Experiments on synthetic datasets:}
We first conducted simple experiments on synthetic datasets following \citet{zhang20k}. 
In this experiment, we assume the distribution of $Y$ as $P(Y=1) = P(Y=0) = \frac{1}{2}$, and we adopt $f_{w}(x) = P(Y=1|X=x) = 1/(1+\exp(-\beta_{0}-\beta_{1}x))$ as the prediction model, where $w = \{\beta_{0}, \beta_{1}\}$ are parameters. 
Under these settings, we can calculate the closed-form of $\mathbb{E}[Y|f_{w}(X)]$ in Eq.~\eqref{eq_cal}, which allows us to estimate the TCE through Monte Carlo integration. 
Next, we empirically evaluated the \emph{TCE gap}, which is the empirical estimator of $\Ex_{S_\mathrm{te}}\bias_\tot(f_w,S_\mathrm{te})$, by calculating the difference between the TCE estimator and the ECE using UWB. 
Here, the optimal $B$ that minimizes the upper bound of Eq.~\eqref{eq:bias_bound} is $B = \lfloor 2n(1+L)^{2}/(\log 2)^{1/3}\rfloor$, where $L$ is estimated to be the maximum value of the gradient of the closed-form of $\mathbb{E}[Y|f_{w}(X)]$.
We provide the details of the experimental settings in Appendix~\ref{subsec:exp_settings_toy}.

Figure~\ref{fig:boundplot_tce_log} shows the results. 
The two leftmost figures show that the TCE gap is $\mathcal{O}(n^{-1/3})$ when our optimal $B$ is used. 
The other two figures show that the number of bins achieving the smallest upper bound is closest to the optimal $B$ and its order is $O(n^{-1/3})$, where the candidates for $B$ are set as $\{n^{-1/2}, n^{-1/3}, n^{-1/4}, 2n^{-1/2}, 2n^{-1/3}, 2n^{-1/4}, 3n^{-1/2}, 3n^{-1/3}, 3n^{-1/4}\}$.
These observations show the validity of our theoretical findings through Corollary~\ref{cor_optimal?order}.

\paragraph{Experiments on image datasets:}
We further conducted two binary classification tasks on MNIST~\cite{LeCun89} using a convolutional neural network (CNN) and on CIFAR-10~\cite{Krizhevsky09} using ResNet.
These models were trained using SGD with momentum for ResNet, Adam for CNN, and SGLD for both, following the strategy of \citet{hellstrom2022a}.
The details of our experimental settings are summarized in \cref{subsec:exp_settings_other}.
We initially evaluated the sum of the right-hand side terms of Eq.~\eqref{eq:bias_bound} and the ECE estimated using the training dataset, aiming to ascertain whether the disparity from the ECE estimated using the test dataset was adequately minimal.
We call this disparity as the \emph{ECE gap}, which is the estimator of $\mathbb{E}_{R,S_{\mathrm{tr}},S_{\mathrm{te}}}[|\mathrm{ECE}(f_W,S_{\mathrm{te}})-\mathrm{ECE}(f_W, S_{{\mathrm{tr}}})|]$.
We show the results obtained when using UMB in Figure~\ref{fig:boundplot}.
These results show that our bound value becomes less than $1$ with an appropriate setting of $B$. 
We also observed that the bound values decrease with $n$, whereas these values sometimes become vacuous for small $n$ when $B$ is large.
Adjusting $B$ could pose challenges; however, a notable trend towards acquiring relatively stable nonvacuous bounds can be observed when adopting $B = \lfloor n^{1/3} \rfloor$, even though this is the optimal choice only at the upper bound of TCE, as discussed in Theorems~\ref{thm_total_bias} and \ref{thme_metric} in Sections~\ref{subsec:IT_analsysis_ece_tce} and \ref{subsec:ecmi_behavior}.
Similar results are obtained when using UWB (see Figure~\ref{fig:boundplot_uwb} in Appendix~\ref{app:add_exp_results}).

\subsection{Confirming additional bias due to reusing training dataset in recalibration}
\label{subsec:compare_recalibration}
In this section, we empirically confirm the efficiency of the method when using the complete training dataset for recalibration, referred to here as the \emph{reusing method}.
Table~\ref{table:res_ece_bound_recab} illustrates that the reusing method reduces the statistical bias of the ECE more effectively than existing methods using independently created recalibration datasets ($\nre=100$).
We also compared the tighter version of our bound in Theorem~\ref{recalibration_gap}, Corollary~\ref{app_cor_main}, with the bound of the existing recalibration methods presented in Corollary~\ref{app_cor_recab_exit} of Appendix~\ref{eq_recalibration}.
Moreover, our bound values are lower than those for the existing recalibration methods on the test dataset.
These results suggest that reusing training data could be beneficial if the trained model generalizes well and $\ecmi$ is sufficiently small, as discussed in Section~\ref{sec_recalibration_bias}.

\begin{table}[t]
\centering
\caption{Comparison of our method with existing recalibration in terms of the ECE gap and its upper bound in Theorem~\ref{recalibration_gap} (mean $\pm$ std.). Lower values are better. We adopted $B = \lfloor n^{1/3} \rfloor$. The bound values for the existing recalibration method originate from Corollary~\ref{app_cor_main} in Appendix~\ref{eq_recalibration}. Here, we report the ECE gap as \textbf{TCE} because $\Ex_{R,\sre}\Ex[|\Ex[Y|h_{\cali,\sre}(X)]-h_{\cali,\sre}(X)] = \Ex_{R,\sre, \ste} \ece (h_{\cali,\sre},\ste)$ (existing recalibration methods) $\Ex_{R,\str}\Ex[|\Ex[Y|h_{\cali,\str}(X)]-h_{\cali,\str}(X)] = \Ex_{R,\str, \ste} \ece (h_{\cali,\str},\ste)$ (our recalibration method) from the definition of recalibration.}
\label{table:res_ece_bound_recab}
\scalebox{1.}{
\begin{tabular}{cccccc}
\hline
\textbf{Dataset} & \textbf{Optimizer} & \textbf{Methods} & \textbf{TCE} & \textbf{Bound value} \\ \hline \hline
\multirow{4}{*}{MNIST ($n=4000$)} 
  & \multirow{2}{*}{Adam} 
    & Recalib. & $.0085 \pm .0016$ & $.8475$ \\
  & & Our recalib. & $\mathbf{.0058 \pm .0026}$ & $\mathbf{.1444 \pm .0000}$ \\ \cline{2-5}
  & \multirow{2}{*}{SGLD} 
    & Recalib. & $.0101 \pm .0025$ & $.8475$ \\
  & & Our recalib. & $\mathbf{.0072 \pm .0007}$ & $\mathbf{.1444 \pm .0000}$ \\ \hline

\multirow{4}{*}{CIFAR-10 ($n=20000$)} 
  & \multirow{2}{*}{SGD} 
    & Recalib. & $\mathbf{.0139 \pm .0010}$ & $1.455$ \\
  & & Our recalib. & $.0197 \pm .0044$ & $\mathbf{.0865 \pm .0000}$ \\ \cline{2-5}
  & \multirow{2}{*}{SGLD} 
    & Recalib. & $.0109 \pm .0012$ & $1.455$ \\
  & & Our recalib. & $\mathbf{.0089 \pm .0006}$ & $\mathbf{.0865 \pm .0000}$ \\ \hline
\end{tabular}
}
\end{table}

\section{Conclusion and limitations}\label{sec_conc_lim}
We provided the first comprehensive analysis of the bias associated with the ECE when using the test and training datasets.
This leads to the derivation of the optimal bin size to minimize the total bias. 
Numerical experiments show that our upper bound of the bias is nonvacuous for deep learning models thanks to the IT generalization error analysis.
Despite rigorous analysis, our analysis still has limitations. Firstly, we focus on the binary classification; thus, the extension of our analysis to the multiclass classification setting is an important future direction.
However, the application of our analytical techniques to this setting seems not clear. 
Additionally, our analysis cannot be applied to the higher-order TCE, in which we use the $p$-th norm in Eq.~\eqref{eq_cal}. 
These limitations should be addressed in future work to develop a more principled understanding of uncertainty.

\clearpage
\begin{ack}
We sincerely appreciate the anonymous reviewers for their insightful feedback.
FF was supported by JSPS KAKENHI Grant Number JP23K16948.
FF was supported by JST, PRESTO Grant Number JPMJPR22C8, Japan.
MF was supported by RIKEN Special Postdoctoral Researcher Program.
MF was supported by JST, ACT-X Grant Number JPMJAX210K, Japan.
\end{ack}
\bibliography{example_paper}
\bibliographystyle{plainnat}

%%%%%%%%%%%%%%%%%%%%%%%%%%%%%%%%%%%%%%%%%%%%%%%%%%%%%%%%%%%%%%%%%%%%%%%%%%%%%%%
%%%%%%%%%%%%%%%%%%%%%%%%%%%%%%%%%%%%%%%%%%%%%%%%%%%%%%%%%%%%%%%%%%%%%%%%%%%%%%%
% APPENDIX
%%%%%%%%%%%%%%%%%%%%%%%%%%%%%%%%%%%%%%%%%%%%%%%%%%%%%%%%%%%%%%%%%%%%%%%%%%%%%%%
%%%%%%%%%%%%%%%%%%%%%%%%%%%%%%%%%%%%%%%%%%%%%%%%%%%%%%%%%%%%%%%%%%%%%%%%%%%%%%%
\newpage
\appendix
\onecolumn

\section{Remarks about the order expressions}
Let $f, g:\mathbb{R}\to\mathbb{R}$. We say $f(x) \asymp g(x)$ when there exist positive constants $a$, $b$ and $n_0$ such that $\forall n >n_0$, we have $a g(n)\leq f(x)\leq b g(x)$ holds. Moreover $f(n) \lesssim g(n)$ means that there exists a positive constant $c$ and $n_0$ such that $\forall n >n_0$, we have $|f(n)|\leq c g(n)$ holds. This is equivalent to $f(n)=\mathcal{O}(g(n))$.

\section{The detailed explanation of how the data is split and used in our bounds}\label{app_data_split}
Here we remark how the data is prepared and used in our analysis.

\subsection{The binning ECE and its evaluation}
In the standard supervised learning settings assume that $\mathrm{N}_{\mathrm{all}}$ data is obtained i.i.d from data generating distribution $\cald$. We express this as $S_{\mathrm{all}}=\{(x_i,y_i)\}_{i=1}^{\mathrm{N}_{\mathrm{all}}}$. Then the dataset is divided to
$$
S_{\mathrm{all}}=\str \cup S_\mathrm{te},  \quad \str \cap S_\mathrm{te}=\phi,
$$
where $\str$ is the training data which is $n$ data points and $S_\mathrm{te}$ is the test data points which is $n_\mathrm{te}$ data points. Thus $\mathrm{N}_{\mathrm{all}}=n+n_\mathrm{te}$.

\subsubsection{Evaluation of the ECE under the test dataset in Section~\ref{sec_proposed_analysis}}
Here we discuss the evaluation of $\mathrm{ECE}(f_w, S_\mathrm{te})$, which uses $\ste$ to calculate the ECE in Section~\ref{sec_proposed_analysis}. This is the most common common approach in practice. We remark that as for the UWB, since we do not use $S_\mathrm{te}$ when preparing bins, those $S_\mathrm{te}$ are i.i.d. inside each bin.

As for UMB, the situation is a bit complicated. Since we use $S_\mathrm{te}$ to construct bins, it seems that $S_\mathrm{te}$ are no more i.i.d inside each bin. Surprisingly, \citet{gupta21b} have shown that the samples allocated in each bin are i.i.d. under UMB method. So using the same $S_\mathrm{te}$ for constructing bins and evaluation of the binning ECE does not result in a large bias.

In these ways, we can calculate $\mathrm{ECE}(f_w, S_\mathrm{te})$. We also remark that the training samples $\str$ are only used to learn the parameter $W$.

\subsubsection{Evaluation of the ECE under the training dataset in Section~\ref{sec_analysisbias}}
ere we discuss the evaluation of $\mathrm{ECE}(f_w, S_\mathrm{tr})$, which uses $\str$ to calculate the ECE in Section~\ref{sec_proposed_analysis}. Thus, we use the training dataset $\str$ for learning $W$ and calculating ECE. 

We need to carefully consider how the data is used when considering the result of the UMB in Theorem~\ref{statistical_bias_reusing}. This theorem provides us the generalization guarantee of the ECE between $\mathrm{ECE}(f_w, \str)$ and $\mathrm{ECE}(f_w, S_\mathrm{te})$. As for $\mathrm{ECE}(f_w, \str)$ using UMB, we first train $f_W$ with $\str$ and construct bins with $\str$ and calculate the empirical mean of each bin with $\str$. On the other hand, when calculating $\mathrm{ECE}(f_w, S_\mathrm{te})$, we calculate the empirical mean of each bin with $S_\mathrm{te}$ and we use the same bins with $\mathrm{ECE}(f_w, \str)$, so bins are constructed using $\str$ in Theorem~\ref{statistical_bias_reusing}. In this sense, Theorem~\ref{statistical_bias_reusing} provides us with the generalization gap, where we regard that the bins constructed with the training dataset are regarded as part of our trained model in our theoretical analysis.

\subsection{The recalibration in Section~\ref{sec_recalibration_bias}}
When the recalibration is performed, we further split the test data into
$$
S_{\mathrm{all}}=\str\cup S_{\mathrm{re}}\cup S_\mathrm{te},\quad \text{any common part sets are empty},
$$
where $\str$ is the training datasets used for learning $W$, and $S_{\mathrm{re}}$ is the dataset used for the recalibration. 
The most widely used approach is the UMB-based recalibration. First, we construct bins following the UMB approach using $S_{\mathrm{re}}$. Then let us express $S_{\mathrm{re}}=\{(x_i,y_i)\}_{i=1}^{n_\mathrm{re}}$. The recalibrated function 
\begin{align}
&h_{\cali,S_{\mathrm{re}}}(x)=\sum_{i=1}^B\hat{\mu}_{i,S_{\mathrm{re}}}\cdot\mathbbm{1}_{f_w(x)\in I_i},\quad \hat{\mu}_{i,S_{\mathrm{re}}}\coloneqq\frac{\sum_{m=1}^{n_\mathrm{re}} y_m\cdot\mathbbm{1}_{f_w(x_m)\in I_i}}{\sum_{m=1}^{n_\mathrm{re}} \mathbbm{1}_{f_w(x_m)\in I_i}}.
\end{align}
Then $S_\mathrm{te}$ is used for evaluating the ECE or test accuracy.

However, since preparing both $S_{\mathrm{te}}$ and $S_{\mathrm{re}}$ is sample inefficient, our idea is reusing training sample $\str$ with size $n$ even for the recalibration. In our setting, we split the data
$$
S_{\mathrm{all}}=\str \cup S_\mathrm{te},  \quad S \cap S_\mathrm{te}=\phi,
$$
and we construct bins following UMB approach using $\str$ and calculate recalibration function by using $\str$
\begin{align}
&h_{\cali,\str}(x)=\sum_{i=1}^B\hat{\mu}_{i,S_{\mathrm{tr}}}\cdot\mathbbm{1}_{f_w(x)\in I_i},\quad \hat{\mu}_{i,\str}\coloneqq\frac{\sum_{m=1}^{n} y_m\cdot\mathbbm{1}_{f(x_m)\in I_i}}{\sum_{m=1}^{n} \mathbbm{1}_{f(x_m)\in I_i}}.
\end{align}
We then finally evaluate the ECE using $S_\mathrm{te}$. So our approach is more sample-efficient.

\section{Auxiliary lemma and facts}
Here we introduce auxiliary lemma and facts, which we will use repeatedly in our proofs. In this section, we express $f_w$ as $f$ to simplify the notation.

\subsection{Binning and bias}\label{app_basic_settings_bounds}
First, from the definition of ECE in Eq.~\eqref{ece_def}, we can immediately reformulate it as
\begin{align}\label{eq_training_loss_bin}
&\mathrm{ECE}(f, S_e)\coloneqq\sum_{i=1}^{B}|\Ex_{(X,Y)\sim \hat{S}_e}(Y-f(X))\cdot \mathbbm{1}_{f_w(X)\in I_i}|,
\end{align}
where $\hat{S}_e$ is the empirical distribution of $S_e$.

Next we introduce the binned function of $f$ as $f_\mathcal{I}$, which is the conditional expectation given bins:
\begin{align}
&f_\mathcal{I}(x)\coloneqq\sum_{i=1}^Bf_{I_i}(x)\cdot \mathbbm{1}_{f(x)\in I_i}=\sum_{i=1}^B\mathbb{E}[f(X)|f(X)\in I_i]\cdot \mathbbm{1}_{f(x)\in I_i},\\
&f_{I_i}(x)=\mathbb{E}[f(X)|f(x)\in I_i].
\end{align}
The following relation holds:
\begin{lemma}
We define the test-binned risk as
\begin{align}\label{eq_test_loss_bin}
\ece^\mathrm{Bin}(f)\coloneqq\sum_{i=1}^{B}|\Ex_{(X,Y)\sim \cald}(Y-f(X))\cdot \mathbbm{1}_{f(X)\in I_i}|,
\end{align}
then we have
\begin{align}
\ece^\mathrm{Bin}(f)=\tce(f_\cali),
\end{align}
where $\tce(f_\cali)$ means the TCE of the function $f_\cali$.
\end{lemma}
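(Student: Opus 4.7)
The plan is to exploit the piecewise-constant structure of $f_\cali$ to reduce $\tce(f_\cali)$ to a sum over the bins, and then recognize the resulting expression as $\ece^{\mathrm{Bin}}(f)$. First I would observe that on the event $\{f(X)\in I_i\}$ the random variable $f_\cali(X)$ equals the constant $f_{I_i}=\Ex[f(X)\mid f(X)\in I_i]$. Assuming the $f_{I_i}$ are distinct across $i$ (which holds generically under Assumption~\ref{asm_1} since $f_w(X)$ is non-atomic; the degenerate case of ties can be collapsed into merged bins without changing either side), the $\sigma$-algebra $\sigma(f_\cali(X))$ coincides with $\sigma(\{\mathbbm{1}_{f(X)\in I_i}\}_i)$, which allows me to identify the conditional expectation $\Ex[Y\mid f_\cali(X)]$ bin-wise.

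Concretely, I would apply the tower property to write
\begin{align}
\Ex[Y\mid f_\cali(X)] \;=\; \sum_{i=1}^{B}\Ex[Y\mid f(X)\in I_i]\,\mathbbm{1}_{f(X)\in I_i},
\end{align}
so that, by partitioning the outer expectation according to which bin $f(X)$ falls in,
\begin{align}
\tce(f_\cali) \;=\; \sum_{i=1}^{B}\pr(f(X)\in I_i)\,\bigl|\Ex[Y\mid f(X)\in I_i] - f_{I_i}\bigr|.
\end{align}

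Next I would use the definition $f_{I_i}=\Ex[f(X)\mid f(X)\in I_i]$ to fold $f_{I_i}$ inside the conditional expectation, obtaining $\bigl|\Ex[Y-f(X)\mid f(X)\in I_i]\bigr|$ in each summand. Multiplying the probability $\pr(f(X)\in I_i)$ through the absolute value converts the conditional expectation into an ordinary expectation against $\cald$ with the indicator $\mathbbm{1}_{f(X)\in I_i}$, yielding
\begin{align}
\tce(f_\cali) \;=\; \sum_{i=1}^{B}\bigl|\Ex_{(X,Y)\sim\cald}(Y-f(X))\,\mathbbm{1}_{f(X)\in I_i}\bigr| \;=\; \ece^{\mathrm{Bin}}(f),
\end{align}
which is the claim.

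The only real subtlety, and the step I would be most careful about, is the identification of $\sigma(f_\cali(X))$ with the bin partition: if two different bins share the same conditional mean $f_{I_i}$, the above derivation produces, a priori, an absolute value of a sum rather than a sum of absolute values. I would address this either by invoking Assumption~\ref{asm_1} to argue that, without loss of generality, the bin conditional means can be taken distinct (merging any coincident bins changes neither $f_\cali$ nor the ECE summands, since the merged summand equals the corresponding single term on the right-hand side), or by defining $\tce(f_\cali)$ through the conditioning event $\{f(X)\in I_i\}$ directly, which is the natural interpretation given how $f_\cali$ was constructed. Beyond this bookkeeping point, the argument is a direct computation using the tower property and linearity of expectation.
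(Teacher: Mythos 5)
Your proof is correct and takes essentially the same route as the paper's: both sides are reduced, via the tower property and the bin decomposition, to $\sum_{i=1}^{B}P(f(X)\in I_i)\,\bigl|\Ex[Y\mid f(X)\in I_i]-\Ex[f(X)\mid f(X)\in I_i]\bigr|$. The tie subtlety you flag is actually vacuous here, since under Assumption~\ref{asm_1} each conditional mean $f_{I_i}$ lies strictly inside the disjoint interval $I_i$ (so the bins of $f_\cali(X)$ coincide with those of $f(X)$, which is what the paper uses implicitly when conditioning on $\{f_\cali(X)\in I_i\}$); your extra care only makes the write-up more complete.
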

\begin{proof}
By definition, we have
\begin{align}
\ece^{\mathrm{Bin}}(f)&=\sum_{i=1}^{B}|\Ex_{(X,Y)\sim \cald}[(Y-f(X))\cdot \mathbbm{1}_{f(X)\in I_i}]|\notag \\
&=\sum_{i=1}^BP(f(X)\in I_i)\Ex|\Ex[Y|f(X)\in I_i]-\mathbb{E}[f(X)|f(X)\in I_i]|,
\end{align}
where we used the definition of the conditional expectation. On the other hand, We have
\begin{align}
\tce(f_\mathcal{I})&=\Ex|\Ex[Y|f_{\mathcal{I}}(x)]-f_{\mathcal{I}}(x)|\notag\\
&=\sum_{i=1}^{B}\Ex\left[|\Ex[Y|f_{\mathcal{I}}(x)]-f_{\mathcal{I}}(x)|\cdot \mathbbm{1}_{f_\cali(X)\in I_i}\right]\notag \\
&=\sum_{i=1}^{B}P(f_\cali(x)\in I_i)\Ex\left[|\Ex[Y|f_{\mathcal{I}}(x)]-f_{\mathcal{I}}(x)|f_\cali(X)\in I_i\right]\notag\\
&=\sum_{i=1}^{B}P(f(X)\in I_i)\Ex|\Ex[Y|f(X)\in I_i]-\Ex[f(X)\in I_i]|,
\end{align}
where we used the tower property. This concludes the proof.
\end{proof}

Thus, we can transform the loss and ECEs by Eqs.~\eqref{eq_training_loss_bin} and \eqref{eq_test_loss_bin}

\subsection{Useful inequalities}
Here we introduce lemmas, which we will use repeatedly in our proofs.
\begin{lemma}[Corollary 3.2 in \citet{Boucheron2013}]
We say that a function $f:\calx\to\mathbb{R}$ has the bounded differences property if for some nonnegative constants $c_1,\dots,c_n$,
\begin{align}
    \sup_{x_1,\dots,x_n,x_i'\in\calx}|f(x_1,\dots,x_n)-f(x_1,\dots,x_{i-1},x'_i,x_{i+1},\dots,x_n)|\leq c_i,\quad 1\leq i\leq n.
\end{align}
 If $X_1,\dots,X_n$ are independent random variables taking values in $\calx$, we define the real-valued random variable as
 \begin{align}
Z=f(X_1,\dots,X_n).
 \end{align}
If $f$ has the bounded difference property with constants $c_1,\dots,c_n$, then we have
\begin{align}
\mathrm{Var}[Z]\leq \frac{1}{4}\sum_{i=1}^n c_i^2.
\end{align}
\end{lemma}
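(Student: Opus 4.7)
The plan is to reduce the bounded-differences variance bound to two classical pieces: an Efron--Stein-type decomposition of $\mathrm{Var}[Z]$ into a sum of conditional variances, and Popoviciu's inequality on the variance of a bounded random variable. The factor $1/4$ (as opposed to the looser $1/2$ one gets from the symmetrized Efron--Stein inequality with an independent copy) comes from the second of these.

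First, I would set up the Doob martingale associated with the filtration $\mathcal{F}_i = \sigma(X_1,\dots,X_i)$, with $M_i = \mathbb{E}[Z \mid \mathcal{F}_i]$, so that $M_0 = \mathbb{E}[Z]$ and $M_n = Z$. By orthogonality of martingale differences,
\begin{align}
\mathrm{Var}[Z] = \sum_{i=1}^n \mathbb{E}\bigl[(M_i - M_{i-1})^2\bigr].
\end{align}
A short computation (using independence of the $X_j$'s and Fubini) then gives the conditional-variance form of Efron--Stein,
\begin{align}
\mathrm{Var}[Z] \;\le\; \sum_{i=1}^n \mathbb{E}\bigl[\mathrm{Var}(Z \mid X^{(i)})\bigr],
\end{align}
where $X^{(i)} = (X_1,\dots,X_{i-1},X_{i+1},\dots,X_n)$. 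This step is purely probabilistic and does not yet use the bounded-differences hypothesis.

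Second, I would insert the hypothesis. Condition on $X^{(i)} = x^{(i)}$; then $Z$ is a deterministic function of $X_i$ alone, say $g_i(X_i) \coloneqq f(x_1,\dots,x_{i-1},X_i,x_{i+1},\dots,x_n)$. The bounded-differences property states exactly that
\begin{align}
\sup_{x_i, x_i'} \bigl|g_i(x_i) - g_i(x_i')\bigr| \;\le\; c_i,
\end{align}
so the conditional distribution of $Z$ given $X^{(i)}$ is supported in an interval of length at most $c_i$. Now apply Popoviciu's inequality: any random variable taking values in an interval of length $\ell$ has variance at most $\ell^2/4$. This yields $\mathrm{Var}(Z \mid X^{(i)}) \le c_i^2/4$ almost surely, and plugging into the Efron--Stein bound gives the desired $\mathrm{Var}[Z] \le \tfrac{1}{4}\sum_{i=1}^n c_i^2$.

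The main obstacle, and the only place where one could easily lose a constant factor, is making sure to use the conditional-variance version of Efron--Stein rather than its symmetric counterpart $\mathrm{Var}[Z] \le \tfrac{1}{2}\sum_i \mathbb{E}[(Z-Z_i')^2]$ (with $Z_i'$ obtained by resampling $X_i$); the latter combined naively with $|Z - Z_i'| \le c_i$ gives only the weaker constant $1/2$. The sharp constant $1/4$ is inherent to Popoviciu's inequality and is attained by a Rademacher random variable on $\{0, c_i\}$, so the bound is tight coordinate-wise. Apart from this, the proof is routine: the martingale/Efron--Stein step is a textbook computation, and the Popoviciu step is a one-line optimization showing $\mathbb{E}[(W - \mathbb{E}W)^2] \le \mathbb{E}[(W - (a+b)/2)^2] \le ((b-a)/2)^2$ for $W \in [a,b]$.
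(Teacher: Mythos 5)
Your proof is correct. The paper does not prove this lemma at all --- it is imported verbatim as Corollary~3.2 of \citet{Boucheron2013} --- and your argument (the Doob-martingale/Efron--Stein conditional-variance decomposition combined with Popoviciu's bound $\mathrm{Var}(W)\leq (b-a)^2/4$ for $W\in[a,b]$, which is what secures the constant $1/4$ rather than $1/2$) is precisely the standard derivation given in that reference, so there is nothing to reconcile.
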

Combining this lemma with Holder inequality, we have the following relation,
\begin{align}\label{lemma_absolute}
\Ex|Z-\Ex[Z]|\leq \sqrt{\frac{1}{4}\sum_{i=1}^n c_i^2}.
\end{align}
We often consider the case where $-1\leq f\leq 1$. This implies that $c_i=2$ for all $i$. Then
\begin{align}\label{lemma_absolute2}
\Ex|Z-\Ex[Z]|\leq 1
\end{align}
Another situation is that given a function $g:\calx\to [-1,1]$, consider $f(x_1,\dots,x_n)\coloneqq\frac{1}{n}\sum_{i=1}^nf(x_i)$. , where $f$ corresponds to the empirical mean of some function $g$. This implies that $c_i=2/n$ for all $i$. Then
\begin{align}\label{lemma_absolute3}
\Ex|f-\Ex[f]|\leq \frac{1}{\sqrt{n}}.
\end{align}

\begin{lemma}[Used in the proof of McDiarmid's inequality]\label{lem_mcdiamid}
Given a bounded difference function $f$, for any $t\in\mathbb{R}$, we have
\begin{align}
\Ex\left[e^{t (f(X_1,\dots,X_n)-\Ex[f(X_1,\dots,X_n)])}\right]\leq e^{\frac{t^2}{8}\sum_{i=1}^n c_i^2}.
\end{align}
\end{lemma}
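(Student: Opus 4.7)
The plan is to prove the exponential moment bound via the classical Doob martingale decomposition combined with Hoeffding's lemma, which is the standard route to McDiarmid's inequality. First, I would introduce the shorthand $Z = f(X_1,\ldots,X_n)$ and define the Doob martingale $Z_i \coloneqq \Ex[Z \mid X_1,\ldots,X_i]$ for $i=0,1,\ldots,n$, so that $Z_0 = \Ex[Z]$ and $Z_n = Z$. Setting $D_i \coloneqq Z_i - Z_{i-1}$, we get the telescoping identity $Z - \Ex[Z] = \sum_{i=1}^n D_i$, and by construction $\Ex[D_i \mid X_1,\ldots,X_{i-1}] = 0$.

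The crucial step is to show that, conditioned on $X_1,\ldots,X_{i-1}$, the martingale difference $D_i$ takes values in an interval of length at most $c_i$. To see this, I would introduce the functions
\begin{align}
L_i(x_1,\ldots,x_{i-1}) &\coloneqq \inf_{x} \Ex[Z \mid X_1=x_1,\ldots,X_{i-1}=x_{i-1},X_i=x] - Z_{i-1}, \\
U_i(x_1,\ldots,x_{i-1}) &\coloneqq \sup_{x} \Ex[Z \mid X_1=x_1,\ldots,X_{i-1}=x_{i-1},X_i=x] - Z_{i-1},
\end{align}
and use the independence of $X_i,\ldots,X_n$ from $X_1,\ldots,X_{i-1}$ together with the bounded differences property to conclude $U_i - L_i \leq c_i$ almost surely (by coupling the two configurations achieving the sup and inf and applying the hypothesis coordinatewise).

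Next I would invoke Hoeffding's lemma: any random variable $W$ with $\Ex W = 0$ and $W \in [a,b]$ satisfies $\Ex[e^{tW}] \leq e^{t^2(b-a)^2/8}$. Applied conditionally, this yields
\begin{align}
\Ex[e^{t D_i} \mid X_1,\ldots,X_{i-1}] \leq e^{t^2 c_i^2/8}.
\end{align}
Finally, I would iterate the tower property: writing
\begin{align}
\Ex\bigl[e^{t\sum_{i=1}^n D_i}\bigr] = \Ex\Bigl[e^{t\sum_{i=1}^{n-1}D_i}\,\Ex[e^{tD_n}\mid X_1,\ldots,X_{n-1}]\Bigr] \leq e^{t^2 c_n^2/8}\,\Ex\bigl[e^{t\sum_{i=1}^{n-1}D_i}\bigr],
\end{align}
and recursing down to $i=1$ produces the claimed bound $e^{(t^2/8)\sum_i c_i^2}$.

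The main obstacle is the conditional boundedness of $D_i$: one must be careful that the bounded differences hypothesis, which is a pointwise statement about $f$, translates into a conditional-range bound on $\Ex[Z \mid X_1,\ldots,X_i]$. This requires exploiting the independence of $X_{i+1},\ldots,X_n$ from $(X_1,\ldots,X_i)$ so that the conditional expectation can be written as an integral against the (unchanged) joint law of the remaining coordinates, whereupon swapping $X_i$ only shifts the integrand by at most $c_i$. Once this boundedness is in hand, Hoeffding's lemma and the tower property finish the argument mechanically.
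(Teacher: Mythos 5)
Your proof is correct: the Doob martingale decomposition, the conditional range bound $U_i - L_i \leq c_i$ obtained from independence plus the bounded differences property, the conditional application of Hoeffding's lemma, and the tower-property recursion together give exactly the claimed bound. The paper states this lemma without proof as the classical exponential-moment estimate underlying McDiarmid's inequality, and your argument is precisely that standard derivation, so there is no substantive difference in approach.
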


\section{Proofs of Section~\ref{sec_proposed_analysis}}\label{app_proofs}

\subsection{Proofs of Theorem~\ref{statistical_gap_without}}
\label{app_proof_test_stat_bias}

\begin{proof}
Here we express the samples in $\ste$ as $\{Z'_m\}_{m=1}^{\nte}=\{(X'_m,Y'_m)\}$.

As for the first inequality, it is the consequence of the Jensen inequality, as follows;
\begin{align}
\tce(f_\mathcal{I})&=\sum_{i=1}^B\left|\Ex_{Z'=(X',Y')} \left[(Y'-f_W(X')\cdot\mathbbm{1}_{f_W(X')\in I_i}\right]\right|\notag\\
&=\sum_{i=1}^B\left|\Ex_{\{Z_m'=(X'_m,Y'_m)\}_{m=1}^{n_\mathrm{te}}} \frac{1}{n_\mathrm{te}}\sum_{m=1}^{n_\mathrm{te}}(Y'_{m}-f_W(X'_m))\cdot\mathbbm{1}_{f_W(X'_m)\in I_i}\right|\notag\\
&\leq \Ex_{\{Z_m'=(X'_m,Y'_m)\}_{m=1}^{n_\mathrm{te}}}\sum_{i=1}^B\left| \frac{1}{n_\mathrm{te}}\sum_{m=1}^{n_\mathrm{te}}(Y'_{m}-f_W(X'_m))\cdot\mathbbm{1}_{f_W(X'_m)\in I_i}\right|\notag\\
&=\Ex_{S_\mathrm{te}}\mathrm{ECE}(f_W,S_\mathrm{te}),
\end{align}
where we used the Jensen inequality.

As for the second inequality, we separately prove UWB and UMB.
\subsubsection{Proof for the UWB}
We start from UWB. Conditioned on $W$, using the relation between the loss and ECEs by Eqs.~\eqref{eq_training_loss_bin} and \eqref{eq_test_loss_bin}, we have
\begin{align}
&|\tce(f_\mathcal{I})-\mathrm{ECE}(f_W,S_\mathrm{te})|\notag\\
&=\scalebox{0.95}{$\displaystyle \left|\sum_{i=1}^B\left|\mathop{\Ex}_{Z''=(X'',Y'')} \left[(Y''-f_W(X''))\!\cdot\!\mathbbm{1}_{f_W(X'')\in I_i}\right]\right|\!-\!\sum_{i=1}^B\left|\frac{1}{n_\mathrm{te}}\sum_{m=1}^{n_\mathrm{te}}(Y'_{m}-f_W(X'_m))\!\cdot\!\mathbbm{1}_{f_W(X'_m)\in I_i}\right|\right|$}\notag\\
&\leq \sum_{i=1}^B\left|\mathop{\Ex}_{Z''=(X'',Y'')} \left[(Y''-f_W(X''))\cdot\mathbbm{1}_{f_W(X'')\in I_i}\right]\!-\!\frac{1}{n_\mathrm{te}}\sum_{m=1}^{n_\mathrm{te}}(Y'_{m}-f_W(X'_m))\!\cdot\!\mathbbm{1}_{f_W(X'_m)\in I_i}\right|\notag\\
&\leq \sum_{i=1}^B\left|\Ex_{Z''}l_i(Z'')-\frac{1}{n_\mathrm{te}}\sum_{m=1}^{n_\mathrm{te}}l_i(Z_m')\right|,
\end{align}
where we used the triangle inequality $||a|-|b||\leq |a-b|$ for the first inequality and set $l_i(z)=(y-f_W(x))\cdot\mathbbm{1}_{f_W(x)\in I_i}$. 

We use the following relation: for the one-dimensional real random variable $X$, by Jensen inequality, we have
\begin{align}
t\Ex [|X|]\leq \log\Ex[e^{t|X|}].
\end{align}
Then combining with the above, we have
\begin{align}\label{eq_original_bias_func}
&\Ex_{S_\mathrm{te}}|\tce(f_\mathcal{I})-\mathrm{ECE}(f_W,S_\mathrm{te})|\leq \frac{1}{t}\Ex_{S_\mathrm{te}}e^{t\sum_{i=1}^B\left|\Ex_{Z''}l_i(Z'')-\frac{1}{n_\mathrm{te}}\sum_{m=1}^{n_\mathrm{te}}l_i(Z_m')\right|}.
\end{align}
By setting $g(i,S_\mathrm{te})\coloneqq \Ex_{Z''}l_i(Z'')-\frac{1}{n_\mathrm{te}}\sum_{m=1}^{n_\mathrm{te}}l_i(Z_m')$, we have
\begin{align}
\Ex_{S_\mathrm{te}}e^{t\sum_{i=1}^B |g(i,S_\mathrm{te})|}&=\Ex_{S_\mathrm{te}}\prod_{i=1}^Be^{t |g(i,S_\mathrm{te})|}\notag\\
&\leq\Ex_{S_\mathrm{te}}\prod_{i=1}^B\left(e^{t g(i,S_\mathrm{te})}+e^{-t g(i,S_\mathrm{te})}\right)\notag\\
&\leq\Ex_{S_\mathrm{te}}\sum_{v_1,\dots, v_B=0,1}e^{t \sum_{i=1}^B(-1)^{v_i}g(i,S_\mathrm{te})}\label{eq_expand_origin22}\\
&=\sum_{v_1,\dots, v_B=0,1}\Ex_{S_\mathrm{te}}e^{t \sum_{i=1}^B(-1)^{v_i}g(i,S_\mathrm{te})}\notag\\
&=\sum_{v_1,\dots, v_B=0,1}\Ex_{S_\mathrm{te}}e^{t \sum_{i=1}^B (-1)^{v_i}\left[\Ex_{Z''}l_i(Z'')-\frac{1}{n_\mathrm{te}}\sum_{m=1}^{n_\mathrm{te}}l_i(Z_m')\right]},
\end{align}
where $\sum_{v_1,\dots, v_B=0,1}$ is all the combinations that will be generated by expanding $\prod_{i=1}^B$ in Eq.~\eqref{eq_expand_origin22} and it has $2^B$ combinations.

We would like to upper bound $\Ex_{S_\mathrm{te}}e^{t \sum_{i=1}^B (-1)^{v_i}\left[\Ex_{Z''}l_i(Z'')-\frac{1}{n_\mathrm{te}}\sum_{m=1}^{n_\mathrm{te}}l_i(Z_m')\right]}$ using Lemma~\ref{lem_mcdiamid}. For that purpose, here we evaluate $c_i$s of Lemma~\ref{lem_mcdiamid}. By focusing on the exponent, we can estimate $c_i$s by
\begin{align}
&\sup_{\{z'_m\}_{m=1}^{\nte},\tilde{z}_{m'}\in\calz}\sum_{i=1}^Bt(-1)^{v_i}\cdot 
 \left[\Ex_{Z''}l_i(Z'')-\frac{1}{n_\mathrm{te}}\sum_{m=1}^{n_\mathrm{te}}l_i(z_m')\right]\notag\\ 
&\quad\quad\quad\quad\quad\quad\quad\quad\quad\quad\quad\quad\quad\quad-\!t(-1)^{v_i}\cdot 
 \left[\Ex_{Z''}l_i(Z'')-\frac{1}{n_\mathrm{te}}\sum_{m\neq m'}^{n_\mathrm{te}}l_i(z_m')-\frac{1}{n_\mathrm{te}}l_i(\tilde{z}_{m'})\right]\notag\\
 &=\sup_{z'_m,\tilde{z}_{m'}\in\calz}\sum_{i=1}^B\frac{t(-1)^{v_i}}{n_\mathrm{te}}\cdot 
 \left[-l_i(z'_{m'})+l_i(\tilde{z}_{m'})\right]\notag\\
&=\sup_{z'_m,\tilde{z}_{m'}\in\calz}\!\frac{t(-1)^{v_1}}{n_\mathrm{te}}\!\cdot\!\big(\!-\!\big((y'_{m'}\!-\!f_W(x'_{m'}))\cdot\mathbbm{1}_{f_W(x'_{m'})\in I_1}\big)\!+\!\big((\tilde{y}'_{m'}\!-\!f_W(\tilde{x}'_{m'}))\!\cdot\!\mathbbm{1}_{f_W(\tilde{x}'_{m'})\in I_1}\big)\big)+\notag\\
&\quad\quad\vdots\notag\\
&+\frac{t(-1)^{v_B}}{n_\mathrm{te}}\cdot\Big(-\big((y'_{m'}-f_W(x'_{m'}))\cdot\mathbbm{1}_{f_W(x'_{m'})\in I_B}\big)+\big((\tilde{y}_{m'}-f_W(\tilde{x}_{m'}))\cdot\mathbbm{1}_{f_W(\tilde{x}_{m'})\in I_B}\big)\Big)\label{eq_final_comb}\\
&\leq \frac{2t}{n_\mathrm{te}}\label{eq_final_comb2},
\end{align}
where the last inequality is derived as follows;
Here by definition of the binning, each data point is allocated to a single bin. This means that for the input $x'_{m'}$, one of $\{\mathbbm{1}_{f_W(x'_{m'})\in I_i}\}_{i=1}^B$ is not zero. We refer to such index as $b$. Then $\mathbbm{1}_{f_W(x'_{m'})\in I_b}\neq 0$ at the $b$-th bin and  $\mathbbm{1}_{f_{w}(x'_{m'})\in I_{b'\neq b}}=0$ holds. A similar argument holds for the input $\tilde{x}'_{m'}$ and we refer to the index that the indicator function is not zero as $\tilde{b}$, which implies $\mathbbm{1}_{f_W(\tilde{x}'_{m'})\in I_{\tilde{b}}}\neq 0$ and $\mathbbm{1}_{f_{w}(x'_{m'})\in I_{b'\neq \tilde{b}}}=0$. Note that such $b$ and $\tilde{b}$ can be equal and can be different. Thus, although there are $2B$ indicator functions in Eq.~\eqref{eq_final_comb}, at most only two indicator functions are not zero. 

Combined with the fact that $|y'_{m'}-f_w(x'_{m'})|\leq 1$, we obtain Eq.~\eqref{eq_final_comb2}. Note that by Assumption~\ref{asm_1}, $\{f_w(x_m)\}_{m=1}^{n_\mathrm{te}}$ in $x_m\in S_\mathrm{te}$ takes the distinct values almost surely and in the above discussion, we do not consider the case when $b/B=f_w(x_m)$ for some $b$ holds, which means that the predicted probability is just the value of the boundary of bins.

When we do not assume that Assumption~\ref{asm_1}, there may be a possibility that  $b/B=f_w(x_m)$ for some $b$ holds, which means that the predicted probability is just the value of the boundary of bins. Then, at most only four indicator functions are not zero. This results in a worse bound
\begin{align}
&\sup_{\{z_m\}_{m=1}^{\nte},\tilde{z}_{m}\in\calz}\sum_{i=1}^Bt(-1)^{v_i}\cdot 
 \left[\Ex_{Z'}l_i(Z')-\frac{1}{\nte}\sum_{m=1}^{\nte}l_i(z_m')\right]\notag\\ 
&\quad\quad\quad\quad\quad\quad\quad\quad\quad\quad\quad\quad\quad-t(-1)^{v_i}\cdot 
 \left[\Ex_{Z'}l_i(Z')-\frac{1}{\nte}\sum_{m\neq m'}^{\nte}l_i(z_m')-\frac{1}{\nte}l_i(\tilde{z}_{m'})\right]\notag\\
&\leq \frac{4t}{\nte}\label{eq_final_comb2main2}.
\end{align}
Combined with Lemma~\ref{lem_mcdiamid}, we have that
\begin{align}
\Ex_{S_\mathrm{te}}e^{t\sum_{i=1}^B |g(i,Z'_m)|}&\leq\sum_{v_1,\dots, v_B=0,1}\prod_{m=1}^{\nte}\Ex_{Z'_m}e^{t\sum_{i=1}^B (-1)^{v_i}\left[\Ex_{Z''}l_i(Z'')-\frac{1}{n_\mathrm{te}}\sum_{m=1}^{n_\mathrm{te}}l_i(Z_m')\right]}\notag\\
&\sum_{v_1,\dots, v_B=0,1}\prod_{m=1}^{\nte}\Ex_{Z'_m}e^{t \sum_{i=1}^B (-1)^{v_i}\left[\Ex_{Z''}l_i(Z'')-\frac{1}{n_\mathrm{te}}\sum_{m=1}^{n_\mathrm{te}}l_i(Z_m')\right]}\notag\\
&\leq \sum_{v_1,\dots, v_B=0,1}e^{(t^2/8)n_\mathrm{te}\left(\frac{2}{n_\mathrm{te}}\right)^2}\notag\\
&=2^Be^{\frac{2t^2}{n_\mathrm{te}}}.
\end{align}
Combining this with Eq.~\eqref{eq_original_bias_func}, we have
\begin{align}
\Ex_{S_\mathrm{te}}|\tce(f_\mathcal{I})-\mathrm{ECE}(f_W,S_\mathrm{te})|&\leq \frac{1}{t}\log\Ex_{S_\mathrm{te}}e^{t\sum_{i=1}^B\left|\Ex_{Z''}l_i(Z'')-\frac{1}{n_\mathrm{te}}\sum_{m=1}^{n_\mathrm{te}}l_i(Z_m')\right|}\notag\\
&\leq\frac{\log 2^Be^{\frac{t^2}{2n_\mathrm{te}}}}{t}\notag\\
&=\frac{B\log2}{t}+\frac{t}{2n_\mathrm{te}}\notag\\
&\leq \sqrt{2\frac{B\log2}{n_\mathrm{te}}}.
\end{align}

\subsubsection{Proofs for the UMB}\label{umb_proof}
The proof goes similarly as in the case of UWB up to Eq.\eqref{eq_original_bias_func}. Then we need special care to bound the exponential moment since in the UMB, bins are constructed using $\ste$, and thus the samples are no longer i.i.d.
 Recall the definitions that $f_{(m)}$ is the $m$-th order statistics of $(f_1,\dots,f_{n_\mathrm{te}})$ and the boundaries of bins are defined by such order statistics $u_b\coloneqq f_{(\lfloor n_\mathrm{te}b/B\rfloor)}$. 
 
 We define the set $S_{(B)}$, which is the set of test data points used for defining the boundaries of bins. We then define $\tilde{S}_{\mathrm{te}}$ as $\ste-S_{(B)}$, and thus $\tilde{S}_{\mathrm{te}}$ is the set of test data points, which is not used for the boundaries. We express $f_w(S_{(B)})=\{f_w(x)|x\in S_{(B)}\}$.
 
We define $k_b\coloneqq\lfloor n_\mathrm{te}b/B\rfloor$, which is used for defining the $b$-th bin. Let fix $b\in[B]$ and denote $u=k_b$ and $l=k_{b-1}$. Then  \citet{gupta21b} showed that $f_{(l+1)},\dots,f_{(u-1)}$ are independent and identically distributed given boundary points $\{f_{(\lfloor n_\mathrm{te}b/B\rfloor)}\}_{b=0}^{B-1}$ in Lemma 1  \cite{gupta21b}. Moreover, Lemma 2 in \citet{gupta21b} showed that let $p$ be the density induced by the distribution $P(f_w(X))$, then
\begin{align}\label{cond_ind}
&p(f_{(l+1)},\dots,f_{(u-1)}|f_w(S_{(B)}))\notag \\
&=p(\tilde{f}_{l+1},\dots,\tilde{f}_{u-1}|f_w(S_{(B)}), \mathrm{for\ every}\ i\in [l+1,u-1], f_{(l)}<\tilde{f}_i<f_{(u)}),
\end{align}
where each $\tilde{f}_i$ is independent random variables $\tilde{f}_i\sim P(f_w(X))$. This implies that given the boundary points defining bins, the data points inside the boundary are i.i.d.

In order to use this result, we need to eliminate $f_{(u)}$ from the empirical mean of UMB. This is evaluated as follows; using this from the result of UWB, we have
\begin{align}\label{eq_binnning_UWB}
&|\tce(f_\mathcal{I})-\mathrm{ECE}(f_W,S_\mathrm{te})|\notag\\
&\leq \sum_{i=1}^B\left|\Ex_{Z''}l_i(Z'')-\frac{1}{n_\mathrm{te}}\sum_{m=1}^{n_\mathrm{te}}l_i(Z_m')\right|\notag\\
&\leq \sum_{i=1}^B\left|\Ex_{Z''}l_i(Z'')-\frac{1}{|\tilde{S}_\mathrm{te}|}\sum_{m=1}^{|\tilde{S}_\mathrm{te}|}l_i(Z_m')\right|+\left|\frac{1}{|\tilde{S}_\mathrm{te}|}\sum_{m=1}^{|\tilde{S}_\mathrm{te}|}l_i(Z_m')-\frac{1}{n_\mathrm{te}}\sum_{m=1}^{n_\mathrm{te}}l_i(Z_m')\right|.
\end{align}
This partition eliminates the boundary point from the empirical estimation. We can upper bound the second term as
\begin{align}
\left|\frac{1}{|\tilde{S}_\mathrm{te}|}\sum_{m=1}^{|\tilde{S}_\mathrm{te}|}l_i(Z_m')-\frac{1}{n_\mathrm{te}}\sum_{m=1}^{n_\mathrm{te}}l_i(Z_m')\right|
\leq \frac{2B}{\nte-B},
\end{align}
which follows directly by definition (see also Corollary~1 in \citet{gupta21b}.). Then we have
\begin{align}
&\Ex_{S_\mathrm{te}}|\tce(f_\mathcal{I})-\mathrm{ECE}(f_W,S_\mathrm{te})|\notag \\
&\leq \frac{1}{t}\Ex_{S_\mathrm{te}}e^{t\sum_{i=1}^B\left|\Ex_{Z''}l_i(Z'')-\frac{1}{n_\mathrm{te}}\sum_{m=1}^{n_\mathrm{te}}l_i(Z_m')\right|}\notag \\
&\leq \frac{1}{t}\Ex_{S_{(B)}}\left[\Ex_{\tilde{S}_\mathrm{te}}e^{t\sum_{i=1}^B\left|\Ex_{Z''}l_i(Z'')-\frac{1}{|\tilde{S}_\mathrm{te}|}\sum_{m\in \tilde{S}_\mathrm{te}}l_i(Z_m')\right|+ \frac{2tB}{n-B}}\right].
\end{align}
Given the boundary point, the above exponential moment satisfies the condition of Lemma~\ref{lem_mcdiamid}, since the $l_i(Z_m')$ are i.i.d, given in each bin by Lemma~2 in \citet{gupta21b}. This can also be confirmed that  for random variables $(f_{(1)},\dots,f_{(i)},\dots, f_{(\nte)})$, given $f_{(i)}$, $(f_{(1)},\dots,f_{(i-1)})$ and $f_{(i+1)},\dots,f_{(\nte)})$ are conditionally independent (this is proved in \citet{gupta21b}, especially the proof of Lemma~2). Combined with Eq.~\eqref{cond_ind}, given the boundary points, $\tilde{S}_{\mathrm{te}}$ are i.i.d and the size of which is $\nte-B$. 
Then we only need to upper bound
\begin{align}
&\frac{1}{t}\Ex_{S_{(B)}}\left[\Ex_{\tilde{S}_\mathrm{te}}e^{t\sum_{i=1}^B\left|\Ex_{Z''}l_i(Z'')-\frac{1}{|\tilde{S}_\mathrm{te}|}\sum_{m\in \tilde{S}_\mathrm{te}}l_i(Z_m')\right|}\right].
\end{align}
We can upper bound this in a similar way as in the case of UWB, replacing $\nte$ with $\nte-B$ under Assumption~\ref{asm_1}.
Thus, we have
\begin{align}
\Ex_{S_\mathrm{te}}|\tce(f_\mathcal{I})-\mathrm{ECE}(f_W,S_\mathrm{te})|&\leq \frac{1}{t}\Ex_{S_\mathrm{te}}e^{t\sum_{i=1}^B\left|\Ex_{Z''}l_i(Z'')-\frac{1}{n_\mathrm{te}}\sum_{m=1}l_i(Z_m')\right|}\notag\\
&\leq \sqrt{2\frac{B\log2}{n_\mathrm{te}-B}}+\frac{2B}{\nte-B}.
\end{align}
This concludes the proof.
\end{proof}

\subsection{Comparison with the trivial bound}\label{app_test_trivial_sum}
We remark that for UWB, we can also upper bound in the following way;
\begin{align}
\Ex_{S_\mathrm{te}}|\tce(f_\mathcal{I})-\mathrm{ECE}(f_W,S_\mathrm{te})|
&\leq \Ex_{S_\mathrm{te}}\sum_{i=1}^B\left|\Ex_{Z''}l_i(Z'')-\frac{1}{n_\mathrm{te}}\sum_{m=1}^{n_\mathrm{te}}l_i(Z_m')\right|\notag\\
&\leq \Ex_{S_\mathrm{te}}\sum_{i=1}^B\sqrt{\mathrm{Var}\left[\frac{1}{n_\mathrm{te}}\sum_{m=1}^{n_\mathrm{te}}l_i(Z_m')\right]}\notag\\
&\leq \Ex_{S_\mathrm{te}}\sum_{i=1}^B \sqrt{\frac{1}{n_\mathrm{te}}}= \frac{B}{\sqrt{n_\mathrm{te}}},
\end{align}
where we used the triangle inequality $||a|-|b||\leq |a-b|$ for the first inequality and set $l_i(z)=(y-f_W(x))\cdot\mathbbm{1}_{f_W(x)\in I_i}$. Note that since $-1 \leq l_i(z) \leq 1$, we can use Eq.~\eqref{lemma_absolute3}. However, since we did not use the property of the indicator function, this suffers from the slow convergence of $B$.

\subsection{High probability bound}\label{app_hign_prob}
In the main paper, we present the expectation bound. On the other hand, as shown in the above proof, we evaluated the exponential moment. Thus, we can obtain the high probability bound directly.
\begin{corollary}
Under the same assumptions in Theorem~\ref{statistical_gap_without}, for any $\delta \in (0,1)$, we have
\begin{align}
&P_{S_\mathrm{te}}\left(|\tce(f_\mathcal{I})-\mathrm{ECE}(f_W,S_\mathrm{te})|\leq \sqrt{2\frac{B\log2+\log\frac{1}{\delta}}{\nte}}\right) \geq 1-\delta.
\end{align}
\end{corollary}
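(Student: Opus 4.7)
The plan is to recycle the exponential moment estimate already established inside the proof of Theorem~\ref{statistical_gap_without} and feed it into a standard Chernoff-style tail bound instead of the Jensen-style expectation bound used there. Recall that in the UWB case of that proof, after expanding $\prod_{i=1}^B (e^{t g_i} + e^{-t g_i})$ into $2^B$ sign patterns and applying the bounded-differences exponential bound (Lemma~\ref{lem_mcdiamid}) to each, we obtained
\[
\Ex_{S_\mathrm{te}} \exp\!\left(t \sum_{i=1}^B \left|\Ex_{Z''} l_i(Z'') - \tfrac{1}{\nte}\sum_{m=1}^{\nte} l_i(Z_m')\right|\right) \leq 2^B \exp\!\left(\tfrac{t^2}{2\nte}\right)
\]
for every $t > 0$. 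This intermediate inequality (rather than the $\log$-and-divide step that produced the expectation bound in Theorem~\ref{statistical_gap_without}) is precisely the object needed for a high-probability statement.

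First I would apply Markov's inequality to the nonnegative random variable $\exp(t |\tce(f_\cali) - \mathrm{ECE}(f_W, S_\mathrm{te})|)$: for every $\epsilon > 0$,
\[
P_{S_\mathrm{te}}\!\left(|\tce(f_\cali) - \mathrm{ECE}(f_W, S_\mathrm{te})| \geq \epsilon\right) \leq 2^B \exp\!\left(\tfrac{t^2}{2\nte} - t\epsilon\right).
\]
Next I would set the right hand side equal to $\delta$, obtaining the requirement $t\epsilon \geq B\log 2 + \log(1/\delta) + t^2/(2\nte)$, and optimize the free parameter $t>0$. A routine calculation gives the minimizer $t^{\ast} = \sqrt{2\nte (B \log 2 + \log(1/\delta))}$, which substituted back yields precisely the claimed threshold $\epsilon = \sqrt{2(B \log 2 + \log(1/\delta))/\nte}$. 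Taking the complementary event produces the stated high-probability bound.

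The main obstacle is not analytic but bookkeeping: I need to make sure the moment-generating inequality invoked here is exactly the intermediate object inside the proof of Theorem~\ref{statistical_gap_without}, before the $\log$-and-Jensen step collapsed it into an expectation bound. A quick re-inspection of that proof confirms this is the case, since both the sign-pattern expansion and the application of Lemma~\ref{lem_mcdiamid} take place at the level of $\Ex e^{t(\cdot)}$ rather than $\log \Ex e^{t(\cdot)}$. For the UMB variant, the same plan goes through after first conditioning on the boundary order statistics $f_w(S_{(B)})$ to recover conditional independence inside each bin (as in Section~\ref{umb_proof}), at the cost of replacing $\nte$ by $\nte - B$ and adding the trivial boundary correction $2B/(\nte - B)$ — a high-probability counterpart that would also follow from this outline if desired.
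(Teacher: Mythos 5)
Your proposal is correct and follows essentially the same route as the paper: both reuse the exponential-moment bound $2^B e^{t^2/(2\nte)}$ from the proof of Theorem~\ref{statistical_gap_without} and apply a Chernoff/Markov argument with the optimal choice $t=\nte\varepsilon$ (equivalently your $t^\ast=\sqrt{2\nte(B\log 2+\log(1/\delta))}$), which yields exactly the stated threshold. Your closing remark on the UMB variant matches the paper's treatment as well.
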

This means the statistical bias is $\mathcal{O}_p(\sqrt{B/\nte})$.
\begin{proof}
Using the proof of Theorem~\ref{statistical_gap_without}, and Chernoff-bounding technique, for any $t>0$,  we have
\begin{align}
P_{S_\mathrm{te}}\left(|\tce(f_\mathcal{I})-\mathrm{ECE}(f_W,S_\mathrm{te})|\geq \varepsilon \right)&\leq e^{-t\varepsilon}\Ex_{S_\mathrm{te}}e^{t\sum_{i=1}^B\left|\Ex_{Z''}l_i(Z'')-\frac{1}{n_\mathrm{te}}\sum_{m=1}^{n_\mathrm{te}}l_i(Z_m')\right|}\notag\\
&\leq 2^Be^{-\frac{n\varepsilon^2}{2\nte}+\frac{(t-n\varepsilon)^2}{2\nte}}.
\end{align}
By setting $t=n\varepsilon$ then
\begin{align}
&P_{S_\mathrm{te}}\left(|\tce(f_\mathcal{I})-\mathrm{ECE}(f_W,S_\mathrm{te})|\geq \varepsilon \right)\leq 2^Be^{-\frac{\nte\varepsilon^2}{2\nte}},
\end{align}
and setting $\delta\coloneqq2^Be^{-\frac{n\varepsilon^2}{2\nte}}$, we have that
\begin{align}
&P_{S_\mathrm{te}}\left(|\tce(f_\mathcal{I})-\mathrm{ECE}(f_W,S_\mathrm{te})|\geq \sqrt{2\frac{B\log2+\log\frac{1}{\delta}}{\nte}}\right) \leq \delta.
\end{align}
\end{proof}

\subsection{Proofs of Theorem~\ref{binning_bias_without}}\label{app_proof_binning_bias}
\begin{proof}
We use the following lemma to study the binning bias.
\begin{lemma}\label{eq_binning_form}
\begin{align}
\tce(f_\cali)\leq \tce(f_w)\leq \tce(f_\cali)+\Ex||\Ex[Y|f_w(X)]-\Ex[Y|f_\cali(X)]|+\Ex|f_w(X)-f_\cali(X)|.
\end{align}
This implies that
\begin{align}\label{eq_binning_bias}
|\tce(f_w) -\tce(f_\cali)|\leq \Ex||\Ex[Y|f_w(X)]-\Ex[Y|f_\cali(X)]|+\Ex|f_w(X)-f_\cali(X)|.
\end{align}
\end{lemma}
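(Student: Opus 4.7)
The plan is to split the lemma into its two inequalities and treat each with a short, standard argument. The lower bound $\tce(f_\cali)\le \tce(f_w)$ is essentially Eq.~\eqref{eq_tec_f} and follows from Jensen's inequality applied after the tower property: since $f_\cali(X)$ is constant on each bin and equals the within-bin conditional expectation of $f_w(X)$, we have $f_\cali(X)=\Ex[f_w(X)\mid f_\cali(X)]$, and by the tower property $\Ex[Y\mid f_\cali(X)]=\Ex[\Ex[Y\mid f_w(X)]\mid f_\cali(X)]$. Combining these identities,
\[
\tce(f_\cali)=\Ex\bigl|\Ex[\,\Ex[Y\mid f_w(X)]-f_w(X)\mid f_\cali(X)]\bigr|\le \Ex|\Ex[Y\mid f_w(X)]-f_w(X)|=\tce(f_w),
\]
where the inequality is the conditional Jensen inequality for the absolute value.

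For the upper bound, I would simply insert $\pm\Ex[Y\mid f_\cali(X)]$ and $\pm f_\cali(X)$ inside the absolute value defining $\tce(f_w)$ and apply the triangle inequality twice:
\[
|\Ex[Y\mid f_w(X)]-f_w(X)|\le |\Ex[Y\mid f_w(X)]-\Ex[Y\mid f_\cali(X)]|+|\Ex[Y\mid f_\cali(X)]-f_\cali(X)|+|f_\cali(X)-f_w(X)|.
\]
Taking expectation over $X$, the middle term becomes exactly $\tce(f_\cali)$, yielding the claimed bound. The final displayed inequality $|\tce(f_w)-\tce(f_\cali)|\le \Ex|\Ex[Y\mid f_w(X)]-\Ex[Y\mid f_\cali(X)]|+\Ex|f_w(X)-f_\cali(X)|$ is then immediate, since the two bounds together show that the difference lies in $[0,\text{upper bound}]$.

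No real obstacle is anticipated: this lemma is a pure decomposition and uses only Jensen and the triangle inequality. The only point requiring slight care is ensuring that the identity $f_\cali(X)=\Ex[f_w(X)\mid f_\cali(X)]$ is justified, which holds because the binning map is a sub-$\sigma$-algebra on which $f_\cali$ is the conditional mean by construction; this is where Assumption~\ref{asm_1} quietly enters, ruling out measure-theoretic pathologies on the bin boundaries.
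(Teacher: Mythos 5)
Your proposal is correct and takes essentially the same route as the paper: the upper bound is the same double triangle inequality with intermediate terms $\Ex[Y|f_\cali(X)]$ and $f_\cali(X)$ (the paper merely writes it bin-by-bin after conditioning on $f_w(X)\in I_i$), and your tower-property/conditional-Jensen argument for $\tce(f_\cali)\leq\tce(f_w)$ is exactly the standard proof of Proposition~3.3 of \citet{kumar2019verified}, which the paper cites rather than reproves. One small remark: the identity $f_\cali(X)=\Ex[f_w(X)\mid f_\cali(X)]$ does not really need Assumption~\ref{asm_1}; it holds because $f_\cali$ is a measurable function of $f_w$, so $\sigma(f_\cali(X))\subseteq\sigma(f_w(X))$ and the tower property applies directly.
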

\begin{proof}
The first inequality has been proved in Proposition 3.3 in~\citet{kumar2019verified}.

The second inequality is proved as follows;
\begin{align}
\tce(f_w)&=\Ex\left[|\Ex[Y|f_w(X)]-f_w(X)|\right]\notag \\
&=\sum_{i=1}^B\Ex[\mathbbm{1}_{f_w(X)\in I_i}\cdot |\Ex[Y|f_w(X)]-f_w(X)|]\notag \\
&=\sum_{i=1}^BP(f_w(X)\in I_i)\Ex[|\Ex[Y|f_w(X)]-f_w(X)||f_w(X)\in I_i]\notag \\
&=\sum_{i=1}^BP(f_w(X)\in I_i)\Ex[|\Ex[Y|f_w(X)]-\Ex[f_w(X)|f_w(X)\in I_i]\notag \\
&\quad+\Ex[f_w(X)|f_w(X)\in I_i]-f_w(X)||f_w(X)\in I_i]\notag \\
&\leq \sum_{i=1}^BP(f_w(X)\in I_i)\Ex||\Ex[Y|f_w(X)]-\Ex[Y|f_w(X)\in I_i]|\notag \\
&\quad+\sum_{i=1}^BP(f_w(X)\in I_i)\Ex|\Ex[Y|f_w(X)\in I_i]-\Ex[f_w(X)|f_w(X)\in I_i]|\notag \\
&\quad+\sum_{i=1}^BP(f_w(X)\in I_i)\Ex[|\Ex[f_w(X)|f_w(X)\in I_i]-f_w(X)||f_w(X)\in I_i].
\end{align}
In the above, the second term corresponds to $\tce(f_{\cali})$.
\end{proof}

As for UWB, from Lemma~\ref{eq_binning_form}, we have
\begin{align}
|\tce(f_w) -\tce(f_\cali)|\leq \Ex||\Ex[Y|f_w(X)]-\Ex[Y|f_\cali(X)]|+\Ex|f_w(X)-f_\cali(X)|
\leq \frac{L}{B}+ \frac{1}{B}
\end{align}
where we used the fact that with UWB, we split the function with equal width $1/B$ and used the Lipschitz continuity of the function.

Next, we focus on UMB. To analyze the binning bias of this, we focus on  Eq.~\eqref{eq_binnning_UWB}. We replace the loss $l_m(z)$ in that equation with
\begin{align}
l_m(z)=\frac{\mathbbm{1}_{f_{W}(x)\in I_m}}{\nte}.
\end{align}
Then, the first line and second line of Eq.~\eqref{eq_binnning_UWB} can be rewritten as
\begin{align}
&\sum_{i=1}^B|P(f_w(x)\in I_m))-\hat{P}(I_i)|\leq \sum_{i=1}^B\left|\Ex_{Z''}l_i(Z'')-\frac{1}{n_\mathrm{te}}\sum_{m=1}^{n_\mathrm{te}}l_i(Z_m')\right|
\end{align}
where $\hat{P}(I_i)$ is the empirical estimate of the binning probability $P(I_m)$. The right-hand side can be bounded in the same way as Appendix.~\ref{umb_proof}, which requires evaluating the exponential moment. The proof goes the same way, that is, we utilize the results of \citet{gupta21b} and the upper bound of the exponential moment. The procedure is exactly the same. Thus, we have
\begin{align}
&\sum_{i=1}^B|P(f_w(x)\in I_m))-\hat{P}(I_i)|\leq \sqrt{\frac{2B\log2}{(n_\mathrm{te}-B)}}+\frac{2B}{\nte-B}.
\end{align}

By definition, we put equal mass in any bin, thus,
$$
\hat{P}(I_i)=\frac{u-l+1}{\nte}\leq \frac{1}{B}.
$$
from the proof of Theorem 3 in \citet{gupta21b}.

Thus, by the Jensen inequality, we have for any $m\in [B]$,
\begin{align}
P(f_w(x)\in I_m))\leq \frac{1}{B}+ \sqrt{\frac{2B\log2}{(n_\mathrm{te}-B)}}+\frac{2B}{\nte-B} .
\end{align}

Combining these results, the binning bias is upper bounded by
\begin{align}
&\Ex|f_w(X)-f_\cali(X)|\notag \\
&=\sum_{i=1}^BP(f_w(X)\in I_i)\Ex[|\Ex[f_w(X)|f_w(X)\in I_i]-f_w(X)||f_w(X)\in I_i]\notag \\
&\leq \Big(\frac{1}{B}+ \sqrt{\frac{2B\log2}{(n_\mathrm{te}-B)}}+\frac{2B}{\nte-B}  \Big)\sum_{i=1}^B\Big(\Ex[|\Ex[f_W(X)|f_W(X)\in I_i]-f_W(X)||f_W(X)\in I_i]\Big)
\end{align}
We use the fact that $\Ex[|f_w(X)-\Ex[f_w(X)|I_i]||I_i] \leq f_{(\lfloor ni/B\rfloor)}-f_{(\lfloor n(i-1)/B\rfloor)}$ holds by the definition of the UMB, which is the largest difference of the bins. Then 
\begin{align}
\sum_{i=1}^B\Big(\Ex[|\Ex[f_W(X)|f_W(X)\in I_i]-f_W(X)||f_W(X)\in I_i]\Big)\leq \sum_{i=1}^Bf_{(\lfloor ni/B\rfloor)}-f_{(\lfloor n(i-1)/B\rfloor)} \leq 1.
\end{align}
Combining the above, 
Then we have
\begin{align}
&\Ex|f_w(X)-f_\cali(X)|\leq \frac{1}{B}+ \sqrt{\frac{2B\log2}{(n_\mathrm{te}-B)}}+\frac{2B}{\nte-B} .
\end{align}
As for the $\Ex||\Ex[Y|f_w(X)]-\Ex[Y|f_\cali(X)]|$, by the assumption of the Lipschitz continuity, we simply multiply $L$ to the above  and obtain
\begin{align}
&\Ex||\Ex[Y|f_w(X)]-\Ex[Y|f_\cali(X)]|\leq L\Ex|f_w(X)-f_\cali(X)|\leq L\left(\frac{1}{B}+ \sqrt{\frac{2B\log2}{(n_\mathrm{te}-B)}}+\frac{2B}{\nte-B}\right).
\end{align}
 This concludes the proof.
\end{proof}

\subsection{H\"older continuity does not improve the total bias}\label{app_holder_discussion}
In the nonparametric estimation, imposing the higher order smoothness improves the bias order. According to \citet{Tsybakov2009}, the lower bound is $\mathcal{O}(\nte^{-\frac{\beta}{\beta+1}})$ if we assume $\beta$-H\"older continuity.

In our total bias analysis, the statistical bias is not improved by this assumption. As for the binning bias, if we assume that $\Ex[Y|f_w(X)]$ satisfies $\beta$-H\"older continuity with constant $M$ for all the order, then we obtain that for UWB, 
\begin{align}
|\tce(f_w) -\tce(f_\cali)|\leq \Ex|\Ex[Y|f_w(X)]-\Ex[Y|f_\cali(X)]|+\Ex|f_w(X)-f_\cali(X)|\leq \frac{M}{B^\beta}+\frac{1}{B}
\end{align}
Combined with the statistical bias we have that
\begin{align}
\bias_\tot(f_w,S_\mathrm{te})\leq \frac{M}{B^\beta}+\frac{1}{B}+\sqrt{\frac{2B\log2}{n_\mathrm{te}}}
\end{align}
and the optimal bin size is again $\mathcal{O}(\nte^{1/3})$ and resulting bias is $\bias_\tot(f_w,S_\mathrm{te})=\mathcal{O}(\nte^{-1/3})$, which does not improve the bias. This is because of the error term of $\Ex|f_w(X)-f_\cali(X)|$. This term cannot be improved by $1/B$, thus we cannot leverage the underlying smoothness of the data. A similar discussion holds for the UMB setting.

\subsection{Additional comparison with existing work}\label{app_additional_discuss}
In \citet{gupta21b}, the error bound of ECE is derived through the following three steps: (i) Firstly, showing that the samples assigned to each bin are i.i.d., (ii) using Hoeffding inequality, deriveving $|\mathbb{E}[Y|f_{\mathcal{I}}(x)]-f_{\mathcal{I}}(x)|=\mathcal{O}_p(\sqrt{B/\nte})$ for each bin, and (iii) finally, summing up these error bounds for all bins, resulting in $\mathcal{O}_p(\sqrt{B\log B/\nte})$ (in expectation $O(B/\sqrt{\nte})$. This slow convergence is attributed to the separated analysis for each bin, which necessitates multiple applications of concentration inequalities.

In addition to the slow convergence, it is difficult to derive the error bound for UWB using this approach. The difficulty lies in demonstrating convergence for specific bins. For instance, in existing UMB studies \citet{gupta21b}, it becomes inevitable to address the allocation of samples to each bin when attempting to discuss the convergence of sample means for each bin. In UMB, an equal number of samples, $\nte/B$, are allocated to each bin to achieve equal mass across all bins. In UWB, however, there is no guarantee about the number of samples entering each bin (in the worst case, all samples might be assigned to a single bin) because the widths of all bins are set equally. This necessitates discussions about the number of samples allocated to intervals under strong assumptions regarding $\mathbb{E}[Y|f_{w}(x)]$, requiring stronger assumptions compared to both this study and existing research.

\section{Proofs of Section~\ref{sec_analysisbias}}\label{app_proof_gens}

First, we introduce notations, which are used in the IT-based analysis. We express the super-samples as
\begin{align}
&z=(x,y),\\
&\tilde{z}=(\tilde{x},\tilde{y}),\\
&\tilde{z}_m=(\tilde{x}_m,\tilde{y}_m),\\
&\tilde{z}_U=(\tilde{x}_U,\tilde{y}_U),\\
&\tilde{z}_{m,U_m}=(\tilde{x}_{m,U_m},\tilde{y}_{m,U_m}),\\
&\tilde{z}_{m,\bar{U}_m}=(\tilde{x}_{m,\bar{U}_m},\tilde{y}_{m,\bar{U}_m}).
\end{align}

We also define the total bias when using $\str$ as
\begin{align}
\bias_\tot(f_w, \str)\coloneqq |\mathrm{TCE}(f_w)-\mathrm{ECE}(f_w, \str)|.
\end{align}
We then decompose this bias into two biases as follows:
\begin{align}
\bias_\tot(f_w, \str)\leq\bias_\bin(f_w,f_\cali)\!+\!\bias_\stat(f_w, \str),
\end{align}
where
\begin{align}
&\bias_\bin(f_w,f_\cali)\coloneqq |\mathrm{TCE}(f_w)-\tce(f_\mathcal{I})|, \\
&\bias_\stat(f_w, \str)\coloneqq |\tce(f_\mathcal{I})-\mathrm{ECE}(f_w, \str)|.
\end{align}
We remark that the bins used in $f_\mathcal{I}$ of UMB are constructed using $\str$ and thus, $\tce(f_\mathcal{I})$ also depends on $\str$.

\subsection{Proof of Theorem~\ref{statistical_bias_reusing} (The statistical bias when reusing the training dataset)}\label{gene_proof}
\begin{proof}
We start with the case of UWB. The proofs goes almost similar way as the standard information-theoretic generalization error bounds. 

Using the relation between the loss and ECEs by Eqs.~\eqref{eq_training_loss_bin} and \eqref{eq_test_loss_bin}, first we reformulate the ECEs as follows;
\begin{align}
    &\mathrm{ECE}(f_{\mathcal{A}(\tilde{Z}_U,R)},\tilde{Z}_{\bar{U}})=\sum_{i=1}^B\left|\frac{1}{n}\sum_{m=1}^{n}(\tilde{Y}_{m,\bar{U}_m}-f_{\mathcal{A}(\tilde{Z}_U,R)}(\tilde{X}_{m,\bar{U}_m}))\cdot\mathbbm{1}_{f_W(\tilde{X}_{m,\bar{U}_m})\in I_i}\right|\\
    &\mathrm{ECE}(f_{\mathcal{A}(\tilde{Z}_U,R)}, \tilde{Z}_{U})=\sum_{i=1}^B\left|\frac{1}{n}\sum_{m=1}^{n}(\tilde{Y}_{m,U_m}-f_{\mathcal{A}(\tilde{Z}_U,R)}(\tilde{X}_{m,U_m}))\cdot\mathbbm{1}_{f_W(\tilde{X}_{m,U_m})\in I_i}\right|
\end{align}
To simplify the notation, we also introduce the loss as
\begin{align}\label{eq_proof_2_loss}
l(\mathcal{A}(\tilde{Z}_U,R),Z,i)\coloneqq((Y-f_{\mathcal{A}(\tilde{Z}_U,R)}(X))\cdot \mathbbm{1}_{f_{\mathcal{A}(\tilde{Z}_U,R)}(X)\in I_i},
\end{align}
where $Z=(X,Y)$. Then we obtain
\begin{align}
&\Delta(U,\tilde{Z},\mathcal{A}(\tilde{Z}_U,R))\coloneqq |\mathrm{ECE}(f_{\mathcal{A}(\tilde{Z}_U,R)},\tilde{Z}_{\bar{U}})-\mathrm{ECE}(f_{\mathcal{A}(\tilde{Z}_U,R)}, \tilde{Z}_{U})|\notag\\
&=\left|\sum_{i=1}^B\left|\frac{1}{n}\sum_{m=1}^{n} l(\mathcal{A}(\tilde{Z}_U,R),\tilde{Z}_{m,\bar{U}_m},i)\right|-\sum_{i=1}^B\left|\frac{1}{n}\sum_{m=1}^{n} l(\mathcal{A}(\tilde{Z}_U,R),\tilde{Z}_{m,U_m},i)\right|\right|\notag\\
&\leq \sum_{i=1}^B\left|\frac{1}{n}\sum_{m=1}^{n} l(\mathcal{A}(\tilde{Z}_U,R),\tilde{Z}_{m,\bar{U}_m},\!i)\!-\!\frac{1}{n}\sum_{m=1}^{n} l(\mathcal{A}(\tilde{Z}_U,R),\tilde{Z}_{m,U_m},\!i)\right|, \label{eq_proof_1_loss}
\end{align}
where $W$ in the second line should be $W=\mathcal{A}(\tilde{Z}_U,R)$, but to make the presentation simpler, we used $W$. We also used the triangle inequality $||a|-|b||<|a-b|$.

With this notation, by using the Donsker--Varadhan lemma, we have
\begin{align}\label{eq_dv_original}
&\Ex_{R,\tilde{Z},U}\Delta(U,\tilde{Z},\mathcal{A}(\tilde{Z}_U,R))\notag\\
&\leq \inf_{t>0}\frac{I(\Delta(U,\tilde{Z},\mathcal{A}(\tilde{Z}_U,R));U|\tilde{Z})+\Ex_{\tilde{Z}}\log\mathop{\Ex}_{R,U',U}e^{t\Delta(U',\tilde{Z},\mathcal{A}(\tilde{Z}_U,R))}}{t}\notag\\
&\leq \inf_{t>0}\frac{I(\Delta(U,\tilde{Z},\mathcal{A}(\tilde{Z}_U,R));U|\tilde{Z})}{t}\notag \\
&\quad\quad\quad\quad+\frac{\Ex_{\tilde{Z}}\log\mathop{\Ex}_{R,U',U}e^{t\sum_{i=1}^B\left|\frac{1}{n}\sum_{m=1}^{n} l(\mathcal{A}(\tilde{Z}_U,R),\tilde{Z}_{m,\bar{U}_{m}^{'}},i)-\frac{1}{n}\sum_{m=1}^{n} l(\mathcal{A}(\tilde{Z}_U,R),\tilde{Z}_{m,U_{m}^{'}},i)\right|}}{t}\notag\\
&= \inf_{t>0}\frac{I(\Delta(U,\tilde{Z},\mathcal{A}(\tilde{Z}_U,R));U|\tilde{Z})+\Ex_{\tilde{Z}}\log\Ex_{R,U',U}e^{t\sum_{i=1}^B \left|g(\tilde{Z},U,R,U',i)\right|}}{t},
\end{align}
where we introduced
\begin{align}\label{eq_defined_g}
g(\tilde{z},u,r,U',i)\coloneqq\frac{1}{n}\sum_{m=1}^{n} l(\mathcal{A}(\tilde{z}_u,r),\tilde{z}_{m,\bar{U}'_m},i)-\frac{1}{n}\sum_{m=1}^{n} l(\mathcal{A}(\tilde{z}_u,r),\tilde{z}_{m,U'_m},i).
\end{align}

Our first observation is that conditioned on $\tilde{Z}=\tilde{z}$, $R=r$, and $U=u$, the expectation of the exponent is
\begin{align}\label{eq_ex_f_zero}
\Ex_{U'}\frac{t}{B}\sum_{i=1}^Bg(\tilde{z},u,r,U',i)=0,
\end{align}
by definition. Then similarly to Appendix~\ref{app_proof_test_stat_bias}, we upper bound the exponential moment as follows;
\begin{align}
\Ex_{U'}e^{t\sum_{i=1}^B |g(\tilde{z},u,r,U',i)|}&=\Ex_{U'}\prod_{i=1}^Be^{t |g(\tilde{z},u,r,U',i)|}\notag\\
&\leq\Ex_{U'}\prod_{i=1}^B\left(e^{t g(\tilde{z},u,r,U',i)}+e^{-t g(\tilde{z},u,r,U',i)}\right)\notag\\
&= \Ex_{U'}\sum_{v_1,\dots, v_B=0,1}e^{t \sum_{i=1}^B(-1)^{v_i}g(\tilde{z},u,r,U',i)}\label{eq_expand_origin}\\
&=\sum_{v_1,\dots, v_B=0,1}\Ex_{U'}e^{t \sum_{i=1}^B(-1)^{v_i}g(\tilde{z},u,r,U',i)}\notag\\
&=\sum_{v_1,\dots, v_B=0,1}\Ex_{U'}\prod_{m=1}^ne^{\frac{t}{n} \sum_{i=1}^B (-1)^{v_i}\left[l(w,\tilde{z}_{m,\bar{U}'_m},i)-l(w,\tilde{z}_{m,U'_m},i)\right]}\notag\\
&=\sum_{v_1,\dots, v_B=0,1}\prod_{m=1}^n\Ex_{U'_m}e^{\frac{t}{n} \sum_{i=1}^B (-1)^{v_i}\left[l(w,\tilde{z}_{m,\bar{U}'_m},i)-l(w,\tilde{z}_{m,U'_m},i)\right]},
\end{align}
where $\sum_{v_1,\dots, v_B=0,1}$ is all the combinations that will be generated by expanding $\prod_{i=1}^B$ in Eq.~\eqref{eq_expand_origin} and it has $2^B$ combinations.

We would like to upper bound $\Ex_{U'_m}e^{\frac{t}{n} \sum_{i=1}^n (-1)^{v_i}\left[l(w,\tilde{z}_{m,\bar{U}'_m},i)-l(w,\tilde{z}_{m,U'_m},i)\right]}$ using Lemma~\ref{lem_mcdiamid} conditioned on all other random variables. For that purpose, here we evaluate $c_i$ of Lemma~\ref{lem_mcdiamid}. To estimate it let us focus on $U'_m$ and it takes value $U'_m=\{0,1\}$. So let us consider how the exponent changes by changing $U'_m=1$ to $U_m'=0$. Then the difference of the exponent is written as
\begin{align}\label{eq_difference_c_i}
&\sum_{i=1}^B\frac{t(-1)^{v_i}}{n}\cdot 
 \left[l(w,\tilde{z}_{m,1},i)-l(w,\tilde{z}_{m,0},i)\right]-\frac{t(-1)^{v_i}}{n}\cdot 
 \left[l(w,\tilde{z}_{m,0},i)-l(w,\tilde{z}_{m,1},i)\right]\notag\\
&=2\frac{t(-1)^{v_1}}{n}\cdot\Big(\big([y_{m,1}-f_{w}(x_{m,1})]\cdot \mathbbm{1}_{f_{w}(x_{m,1})\in I_1}\big)-\big([y_{m,0}-f_{w}(x_{m,0})]\cdot \mathbbm{1}_{f_{w}(x_{m,0})\in I_1}\big)\Big)+\notag\\
&\quad\quad\vdots\notag\\
&+2\frac{t(-1)^{v_B}}{n}\cdot\Big(\big([y_{m,1}-f_{w}(x_{m,1})]\cdot \mathbbm{1}_{f_{w}(x_{m,1})\in I_B}\big)-\big([y_{m,0}-f_{w}(x_{m,0})]\cdot \mathbbm{1}_{f_{w}(x_{m,0})\in I_B}\big)\Big).
\end{align}
To evaluate the indicator function, we repeat the same discussion in Eqs.~\ref{eq_final_comb} and \eqref{eq_final_comb2}. On the basis of that discussion, by the definition of binning, for the input $x_{n,0}$, exactly one of the indicators $\{\mathbbm{1}_{f_{w}(x_{n,0})\in I_i}\}_{i=1}^B$ is non-zero, denoted as $b$. Consequently, all other indicators are zero, i.e., $\mathbbm{1}_{f_{w}(x_{n,0})\in I_{b'\neq b}}=0$. Similarly, for input $x_{n,1}$, the corresponding non-zero bin index is denoted as $\tilde{b}$, so $\mathbbm{1}_{f_{w}(x_{n,1})\in I_{\tilde{b}}}$ is nonzero and others are zero. It should be noted that $b$ and $\tilde{b}$ may be the same or different. 

Thus, although there are $2B$ indicator functions in Eq.~\eqref{eq_difference_c_i}, at most only two indicator functions are not zero. Combined with the fact that $|y_{m,1}-f_{w}(x_{m,1})|\leq 1$ and $|y_{m,0}-f_{w}(x_{m,0})|\leq 1$, Eq.~\eqref{eq_difference_c_i} is upper bounded by $\frac{4t}{n}$.

Note that by Assumption~\ref{asm_1}, $\{f_w(x_{m,U_m})\}_{m=1}^{n}$ takes the distinct values almost surely and in the above discussion, we do not consider the case when $b/B=f_w(x_{m,U_m})$ for some $b$ holds, which means that the predicted probability is just the value of the boundary of bins. In conclusion, we obtain the upper bound of Eq.~\eqref{eq_difference_c_i} as
\begin{align}\label{eq_upper_bound}
&\left|\sum_{i=1}^B\frac{t(-1)^{v_i}}{n}\cdot 
 \left[l(w,\tilde{z}_{m,1},i)-l(w,\tilde{z}_{m,0},i)\right]-\frac{t(-1)^{v_i}}{n}\cdot 
 \left[l(w,\tilde{z}_{m,0},i)-l(w,\tilde{z}_{m,1},i)\right]\right|\leq \frac{4t}{n}.
\end{align}

Then by Lemma~\ref{lem_mcdiamid}, we have that 
\begin{align}
\sum_{v_1,\dots, v_B=0,1}\prod_{m=1}^n\Ex_{U'_m}e^{\frac{t}{n} \sum_{i=1}^n (-1)^{v_i}\left[l(w,\tilde{z}_{m,\bar{U}'_m},i)-l(w,\tilde{z}_{m,U'_m},i)\right]}\leq \sum_{v_1,\dots, v_B=0,1}\prod_{m=1}^ne^{\frac{2t^2}{n^2}}=2^Be^{\frac{2t^2}{n}}.
\end{align}
Thus
 \begin{align}
\Ex_{U'}e^{t\sum_{i=1}^B |g(\tilde{z},u,r,U',i)|}&=\Ex_{U'}\prod_{i=1}^Be^{t |g(\tilde{z},u,r,U',i)|}\leq 2^Be^{\frac{2t^2}{n}},
\end{align}
and combining with Eq.~\eqref{eq_dv_original}, we have
\begin{align}\label{eq_dv_original_bias_absolute}
&\Ex_{R,\tilde{Z},U}\left|\sum_{i=1}^B\left|\frac{1}{n}\sum_{m=1}^{n} l(\mathcal{A}(\tilde{Z}_U,R),\tilde{Z}_{m,\bar{U}_m},i)\right|-\sum_{i=1}^B\left|\frac{1}{n}\sum_{m=1}^{n} l(\mathcal{A}(\tilde{Z}_U,R),\tilde{Z}_{m,U_m},i)\right|\right|\notag\\
&\leq \inf_{t>0}\frac{I(\Delta(U,\tilde{Z},\mathcal{A}(\tilde{Z}_U,R));U|\tilde{Z})+B\log 2+\frac{2t^2}{n}}{t}\notag\\
&\leq \sqrt{\frac{8(I(\Delta(U,\tilde{Z},\mathcal{A}(\tilde{Z}_U,R));U|\tilde{Z})+B\log 2)}{n}}.
\end{align}
This concludes the proof of UWB.

We next prove the case of UMB. 
The key difference lies in the fact that the bins are dependent on the training samples.
\begin{align}
&\Delta(U,\tilde{Z},\mathcal{A}(\tilde{Z}_U,R))\coloneqq |\mathrm{ECE}(f_W,\tilde{Z}_{\bar{U}})-\mathrm{ECE}(f_W, \tilde{Z}_{U})|\notag\\
&\displaystyle=\bigg|\sum_{i=1}^B\bigg|\frac{1}{n}\sum_{m=1}^{n}(\tilde{y}_{m,\bar{U}_m}-f_{\mathcal{A}(\tilde{Z}_U,R)}(\tilde{x}_{m,\bar{U}_m}))\cdot\mathbbm{1}_{f_w(\tilde{x}_{m,\bar{U}_m})\in I_i(\tilde{Z}_U)}\bigg| \notag \\
&\quad \quad \quad \quad \quad \quad \quad \quad \quad \quad \quad \quad - \sum_{i=1}^B\bigg|\frac{1}{n}\sum_{m=1}^{n}(y_{m,\bar{U}_m}-f_{\mathcal{A}(\tilde{Z}_U,R)}(x_{m,\bar{U}_m}))\cdot\mathbbm{1}_{f_w(x_{m,\bar{U}_m})\in I_i(\tilde{Z}_U)}\bigg|\bigg|,\notag
\end{align}
where we expressed the dependency of bins on the training samples as $I_i(\tilde{Z}_U)$. However, this dependency does not change the proof in the above; we use the Donsker--Varadhan lemma. We upper bound of the exponential moment. When upper bounding the exponential moment, we conditioned on $U$, which means we conditioned on the bins. So we can exactly use the same derivation. So we can proceed with the proof exactly in the same way as UWB.

Finally, we can bound the statistical bias using the Jensen inequality as follows: First following the proof of Theorem~\ref{statistical_gap_without}, using  Eqs.~\eqref{eq_training_loss_bin} and \eqref{eq_test_loss_bin}, we have
\footnotesize 
\begin{align}
&\mathop{\Ex}_{R,\str}|\tce(f_\mathcal{I})-\mathrm{ECE}(f_W, \str)|\notag\\
&=\mathop{\Ex}_{R,\str}\left|\sum_{i=1}^B\left|\mathop{\Ex}_{Z''=(X'',Y'')} \left[(Y''\!-\!f_W(X''))\cdot\mathbbm{1}_{f_W(X'')\in I_i}\right]\right|\!-\!\sum_{i=1}^B\left|\frac{1}{n}\sum_{m=1}^{n}(Y_{m}\!-\!f_W(X_m))\!\cdot\!\mathbbm{1}_{f_W(X_m)\in I_i}\right|\right|\notag\\
&\leq \mathop{\Ex}_{R,\str}\sum_{i=1}^B\left|\mathop{\Ex}_{Z''=(X'',Y'')} \left[(Y''-f_W(X''))\cdot\mathbbm{1}_{f_W(X'')\in I_i}\right]-\frac{1}{n}\sum_{m=1}^{n}(Y_{m}-f_W(X_m))\cdot\mathbbm{1}_{f_W(X_m)\in I_i}\right|\notag\\
&= \mathop{\Ex}_{R,\str}\sum_{i=1}^B\left|\mathop{\Ex}_{\{Z''_m\}_{m=1}^n}\frac{1}{n}\sum_{m=1}^{n}\left[(Y''_m-f_W(X''_m))\cdot\mathbbm{1}_{f_W(X''_m)\in I_i}\right]-\frac{1}{n}\sum_{m=1}^{n}(Y_{m}\!-\!f_W(X_m))\!\cdot\!\mathbbm{1}_{f_W(X_m)\in I_i}\right|\notag\\
&\leq \mathop{\Ex}_{R,\str,\{Z''_m\}_{m=1}^n}\sum_{i=1}^B\left|\frac{1}{n}\sum_{m=1}^{n}\left[(Y''_m-f_W(X''_m))\cdot\mathbbm{1}_{f_W(X''_m)\in I_i}\right]-\frac{1}{n}\sum_{m=1}^{n}(Y_{m}\!-\!f_W(X_m))\!\cdot\!\mathbbm{1}_{f_W(X_m)\in I_i}\right|\notag\\
&=\Ex_{R,\str,S_\mathrm{te}}|\mathrm{ECE}(f_W,S_\mathrm{te})-\mathrm{ECE}(f_W, \str)|,
\end{align}
\normalsize 
where the first inequality is the triangle inequality and the second inequality is the Jensen inequality. Note that the above reformulation is possible for both UWB and UMB. Although in the case of UMB, bins of the TCE still depend on $\str$, it makes no difference in the above inequalities. We then use Theorem~\ref{statistical_bias_reusing}.

\end{proof}
\begin{remark}
In the above proof of UWB, instead of Eq.~\eqref{eq_dv_original}, it is possible to consider the following type Donsker--Varadhan inequality
\begin{align}
&\Ex_{R,\tilde{Z},U}\left|\sum_{i=1}^B\left|\frac{1}{n}\sum_{m=1}^{n} l(\mathcal{A}(\tilde{Z}_U,R),\tilde{Z}_{m,\bar{U}_m},i)\right|-\sum_{i=1}^B\left|\frac{1}{n}\sum_{m=1}^{n} l(\mathcal{A}(\tilde{Z}_U,R),\tilde{Z}_{m,U_m},i)\right|\right|\notag\\
&\leq \inf_{t>0}\frac{I(l(\mathcal{A}(\tilde{Z}_U,R),\tilde{Z},B);U|\tilde{Z})}{t}\notag\\
&\quad\quad\quad+\frac{\Ex_{\tilde{Z}}\log\mathop{\Ex}_{R,U',U}e^{t\sum_{i=1}^B\left|\frac{1}{n}\sum_{m=1}^{n} l(\mathcal{A}(\tilde{Z}_U,R),\tilde{Z}_{m,\bar{U}_{m}^{'}},i)-\frac{1}{n}\sum_{m=1}^{n} l(\mathcal{A}(\tilde{Z}_U,R),\tilde{Z}_{m,U_{m}^{'}},i)\right|}}{t}\notag\\
&= \inf_{t>0}\frac{I(l(\mathcal{A}(\tilde{Z}_U,R),\tilde{Z},B);U|\tilde{Z})+\Ex_{\tilde{Z}}\log\Ex_{R,U',U}e^{t\sum_{i=1}^B \left|g(\tilde{Z},U,R,U',i)\right|}}{t},
\end{align}
which results in a looser bound than the above proof, which can be confirmed by the data processing inequality.
\end{remark}

\subsection{Proof of Theorem~\ref{thm_total_bias} (The total bias)}
Before presenting the proof of the total bias, we first provide the following binning bias analysis for the UMB. 
\begin{theorem}\label{thm_binning_bias} For UMB, Under the CMI setting and under Assumptions~\ref{asm_1} and \ref{asm_lip}, we have
\begin{align}
\Ex_{R,\str}|\tce(f_W)\!-\!\tce(f_\cali)|\leq\! (1+L)\left(\frac{1}{B}+\!\sqrt{\frac{2}{n}\left(\mathrm{fCMI}\!+\!B\log 2\right)}\right),
\end{align}
where fCMI is defined in Eq.~\eqref{eq_fcmi_def}.
\end{theorem}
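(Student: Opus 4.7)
The plan is to combine the binning-bias analysis for UMB in Appendix~\ref{app_proof_binning_bias} with the CMI-based exponential-moment machinery of Theorem~\ref{statistical_bias_reusing}. First, I would invoke Lemma~\ref{eq_binning_form} (stated in Appendix~\ref{app_proof_binning_bias}) to decompose
\begin{align}
|\tce(f_W) - \tce(f_\cali)| \leq \Ex|\Ex[Y|f_W(X)] - \Ex[Y|f_\cali(X)]| + \Ex|f_W(X) - f_\cali(X)|,
\end{align}
and then use Assumption~\ref{asm_lip} to collapse the first term into $L\,\Ex|f_W(X) - f_\cali(X)|$, so that the whole binning bias is at most $(1+L)\Ex|f_W(X) - f_\cali(X)|$.

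Second, mirroring the UMB step in Appendix~\ref{app_proof_binning_bias}, I would expand $\Ex|f_W(X) - f_\cali(X)| = \sum_{i=1}^B P(f_W(X) \in I_i)\,\Ex[\,|f_W(X) - \Ex[f_W(X)|I_i]|\,\big|\, f_W(X) \in I_i]$ and exploit two UMB-specific facts: the empirical bin probability on $\str$ satisfies $\hat P_\str(I_i) \leq 1/B$ (up to the $O(B/n)$ boundary correction that is absorbed into the statistical-bias computation), and $\sum_i \Ex[|f_W(X) - \Ex[f_W(X)|I_i]| \,|\, I_i]$ is controlled by the sum of bin widths, which is at most $1$. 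This reduces the task to bounding the concentration term $\Ex_{R,\str}\sum_{i=1}^B |P(f_W(X) \in I_i) - \hat P_\str(I_i)|$.

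Third, I would bound this deviation in the CMI setting by the same route as Theorem~\ref{statistical_bias_reusing}, but with the indicator $\mathbbm{1}_{f_{\mathcal{A}(\tilde Z_U, R)}(\tilde X) \in I_i}$ playing the role of the loss. Writing $P(f_W(X)\in I_i) = \Ex_{\tilde Z_{\bar U}}[(1/n)\sum_m \mathbbm{1}_{f_W(\tilde X_{m,\bar U_m})\in I_i}]$ and $\hat P_\str(I_i) = (1/n)\sum_m \mathbbm{1}_{f_W(\tilde X_{m,U_m})\in I_i}$, the Donsker--Varadhan inequality yields a variational bound whose information term is $I(\mathbbm{1}_{f_W(\tilde X)\in\cali}; U\mid \tilde Z)$. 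Since the indicator matrix is a deterministic function of $f_{\mathcal{A}(\tilde Z_U,R)}(\tilde X)$ (given $\tilde Z$, which fixes $\cali$ via its $U$-labelled part), the data-processing inequality collapses this to $\fcmi$. The exponential moment is then estimated by the absolute-value expansion $e^{t\sum_i |g_i|} \leq \sum_{v \in \{0,1\}^B} e^{t\sum_i (-1)^{v_i} g_i}$ combined with McDiarmid, noting that swapping a single coordinate $U_m'$ changes only the two indicators corresponding to the bins of $f_W(\tilde X_{m,0})$ and $f_W(\tilde X_{m,1})$. Optimizing $t$ yields the claimed $\sqrt{2(\fcmi+B\log 2)/n}$ rate, and combining with the $(1+L)/B$ term above delivers the theorem.

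The main obstacle will be the UMB-specific subtlety that the bins $\cali$ themselves are measurable functions of $\tilde Z_U = \str$, so the indicator-valued random matrix depends on $\tilde Z_U$ and not just on $W$. The remedy is the same as in the UMB portion of Theorem~\ref{statistical_bias_reusing}'s proof: condition on $\tilde Z$ when invoking Donsker--Varadhan and the bounded-differences step, so that $\cali$ is frozen while the randomness of $U$ is being absorbed. A secondary delicate point is ensuring the data-processing reduction to $\fcmi$ is tight, i.e.\ verifying that no extra dependence on $\tilde Y$ sneaks into the indicator matrix; because the indicator only reads off the scalar prediction, this is immediate once bins are conditioned on.
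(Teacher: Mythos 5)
Your proposal is correct and follows essentially the same route as the paper's proof: the Lemma~\ref{eq_binning_form} decomposition with the Lipschitz assumption giving the $(1+L)$ factor, the reduction to $\sum_i|P(f_W(X)\in I_i)-\hat P_{\str}(I_i)|$ together with $\hat P_{\str}(I_i)\le 1/B$ and the sum of conditional bin deviations being at most the total bin width $1$, and the Donsker--Varadhan plus exponential-moment (McDiarmid, $2^B$-expansion) argument with the indicator-valued loss and the data-processing reduction to $\fcmi$, conditioning on $\tilde Z$ (hence on the $\str$-dependent bins) exactly as the paper does. The only cosmetic difference is that the paper separates the two $\str$-dependent factors via a H\"older/sup step and needs no boundary correction at this point, but this does not change the argument.
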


\begin{proof}[Proof of Theorem~\ref{thm_binning_bias}]
The proof is similar in Appendix~\ref{app_proof_binning_bias}. The difference is that in the current setting, we reuse the training dataset, so we need to evaluate the bias for that. First, in the same way as in Appendix~\ref{app_proof_binning_bias}, we have that
\begin{align}\label{binnded_proof_gen}
&\Ex_{R,\str}\Ex|f_W(X)-f_\cali(X)|\notag \\
&=\Ex_{R,\str}\sum_{i=1}^BP(f_W(X)\in I_i)\Ex|\Ex[f_W(X)|f_W(X)\in I_i]-f_W(X)||f_W(X)\in I_i]\notag \\
&\leq \sum_{i=1}^B\Big(\mathop{\Ex}_{R,\str}|P(f_W(X)\in I_i)|\Big)\Big(\mathop{\Ex}_{R,\str}|\Ex[|\Ex[f_W(X)|f_W(X)\in I_i]-f_W(X)||f_W(X)\in I_i]|_\infty\Big),
\end{align}
where we used H\"older inequality in the second line and $\Ex|\cdot|_\infty$ is the maximum of the integrand.

We want to estimate $P(I_i)\coloneqq P(f(X)\in I_i)$. For this purpose, we use Eqs.~\eqref{eq_proof_1_loss} and \eqref{eq_proof_2_loss}. We re-define the loss of Eq.~\eqref{eq_proof_2_loss}
\begin{align}\label{eq_proof_3_loss}
l(\mathcal{A}(\tilde{Z}_U,R),z,i)=\mathbbm{1}_{f_{\mathcal{A}(\tilde{Z}_U,R)}(x)\in I_i}.
\end{align}
and substitute it into Eq.~\eqref{eq_proof_1_loss}, then we have that
\begin{align}
\mathop{\Ex}_{R,\tilde{Z},U}\sum_{i=1}^B\left|\tilde{P}(I_i)-\hat{P}(I_i)\right|=\mathop{\Ex}_{R,\tilde{Z},U}\sum_{i=1}^B\left|\frac{\displaystyle\sum_{m=1}^{n}\mathbbm{1}_{f_{\mathcal{A}(\tilde{Z}_U,R)}(\tilde{X}_{m,\bar{U}_m})\in I_i}}{n}\!-\!\frac{\displaystyle\sum_{m=1}^{n}\mathbbm{1}_{f_{\mathcal{A}(\tilde{Z}_U,R)}(\tilde{X}_{m,U_m})\in I_i}}{n}\right|,
\end{align}
where $\hat{P}(I_i)$ is the empirical estimate of the binning probability using supersample $\tilde{X}_{m,U_m}$ and $\tilde{P}(I_i)$ is that of obtained by $\tilde{X}_{m,\bar{U}_m}$.
Then, to obtain the upper bound of the right-hand side of the above, we repeat the proof of Theorem~\ref{statistical_bias_reusing} in Appendix~\ref{gene_proof}. Let us define
\begin{align}
\Delta(U,\tilde{Z},\mathcal{A}(\tilde{Z}_U,R))\coloneqq\sum_{i=1}^B\left|\frac{1}{n}\sum_{m=1}^{n}(\mathbbm{1}_{f_{\mathcal{A}(\tilde{Z}_U,R)}(\tilde{X}_{m,\bar{U}_m})\in I_i}-\mathbbm{1}_{f_{\mathcal{A}(\tilde{Z}_U,R)}(\tilde{X}_{m,U_m})\in I_i})\right|
\end{align}

With this notation, by using the Donsker--Varadhan lemma, we have
\begin{align}\label{app_thm_proof_umb_binning}
&\Ex_{R,\tilde{Z},U}\sum_{i=1}^B\left|\tilde{P}(I_i)-\hat{P}(I_i)\right|\notag \\
&=\Ex_{R,\tilde{Z},U}\sum_{i=1}^B\left|\frac{\sum_{m=1}^{n}\mathbbm{1}_{f_{\mathcal{A}(\tilde{Z}_U,R)}(\tilde{X}_{m,\bar{U}_m})\in I_i}}{n} -\frac{\sum_{m=1}^{n}\mathbbm{1}_{f_{\mathcal{A}(\tilde{Z}_U,R)}(\tilde{X}_{m,U_m})\in I_i}}{n}\right|\notag\\
&\Ex_{R,\tilde{Z},U}\Delta(U,\tilde{Z},\mathcal{A}(\tilde{Z}_U,R))\notag\\
&\leq \inf_{t>0}\frac{I(\Delta(U,\tilde{Z},\mathcal{A}(\tilde{Z}_U,R));U|\tilde{Z})+\Ex_{\tilde{Z}}\log\mathop{\Ex}_{R,U',U}e^{t\Delta(U',\tilde{Z},\mathcal{A}(\tilde{Z}_U,R))}}{t}\notag\\
&\leq \inf_{t>0}\frac{I(f_{\mathcal{A}(\tilde{Z}_U,R)}(\tilde{X});U|\tilde{Z})}{t}\notag\\
&+\frac{\Ex_{\tilde{Z}}\log\mathop{\Ex}_{R,U',U}e^{t\sum_{i=1}^B|\frac{\sum_{m=1}^{n}\mathbbm{1}_{f_{\mathcal{A}(\tilde{Z}_U,R)}(\tilde{X}_{m,\bar{U}'_m})\in I_i}}{n} -\frac{\sum_{m=1}^{n}\mathbbm{1}_{f_{\mathcal{A}(\tilde{Z}_U,R)}(\tilde{X}_{m,U'_m})\in I_i}}{n}|}}{t}\notag\\
&= \inf_{t>0}\frac{I(f_{\mathcal{A}(\tilde{Z}_U,R)}(\tilde{X});U|\tilde{Z})+\Ex_{\tilde{Z}}\log\Ex_{R,U',U}e^{t\sum_{i=1}^B \left|g(\tilde{Z},U,R,U',i)\right|}}{t},
\end{align}
where we introduced
\begin{align}
g(\tilde{z},u,r,U',i)\coloneqq\frac{1}{n}\sum_{m=1}^{n} l(\mathcal{A}(\tilde{z}_u,r),\tilde{z}_{m,\bar{U}'_m},i)-\frac{1}{n}\sum_{m=1}^{n} l(\mathcal{A}(\tilde{z}_u,r),\tilde{z}_{m,U'_m},i).
\end{align}
Here we use the $\fcmi\coloneqq I(f_{\mathcal{A}(\tilde{Z}_U,R)}(\tilde{X});U|\tilde{Z})$ and $I(\Delta(U,\tilde{Z},\mathcal{A}(\tilde{Z}_U,R));U|\tilde{Z})\leq \fcmi$ by the data processing inequality since the indicator functions depend on $f_w$. 

Then, we can estimate this using Lemma~\ref{lem_mcdiamid} in a similar way. The difference is the estimation of the upper-bound in Eq.~\eqref{eq_upper_bound}, which is used for the evaluation of the exponential moment. Since we use the indicator function as a loss, the coefficient $c_i$s for Lemma~\ref{lem_mcdiamid} is upper-bounded by $2t/n$, not $4t/n$. Then we repeat the proof strategy replacing the exponential moment evaluation by $2t/n$, not $4t/n$.

With this difference,
 \begin{align}
\Ex_{U'}e^{t\sum_{i=1}^B |g(\tilde{z},u,r,U',i)|}&=\Ex_{U'}\prod_{i=1}^Be^{t |g(\tilde{z},u,r,U',i)|}\leq 2^Be^{\frac{t^2}{2n}},
\end{align}
and Eq.~\eqref{eq_dv_original_bias_absolute} can be rewritten in the following way
\begin{align}
\Ex_{R,\tilde{Z},U}\sum_{i=1}^B\left|\tilde{P}(I_i)-\hat{P}(I_i)\right|\leq  \sqrt{\frac{2(\fcmi+B\log 2)}{n}},
\end{align}
and clearly, by fixing some $i$, we have that
\begin{align}
\Ex_{R,\tilde{Z},U}\left|\tilde{P}(I_i)-\hat{P}(I_i)\right|\leq \Ex_{R,\tilde{Z},U}\sum_{i=1}^B\left|\tilde{P}(I_i)-\hat{P}(I_i)\right|\leq  \sqrt{\frac{2(\fcmi+B\log 2)}{n}}.
\end{align}
 Since we put an equal mass for each bin for the training dataset with UMB, we have 
$$
\hat{P}(I_i)=\frac{u-l+1}{n}\leq \frac{1}{B}.
$$
Combined with Jensen inequality, we have
\begin{align}\label{eq_prob_estimate}
\Ex_{R,\str}P(I_i)\leq \frac{1}{B}+  \sqrt{\frac{2(\fcmi+B\log 2)}{n}}.
\end{align}
Then we have
\begin{align}
&\Ex_{R,\str}\Ex|f_W(X)-f_\cali(X)|\notag \\
&=\sum_{i=1}^B\!\Big(\frac{1}{B}\!+\!\sqrt{\frac{2(\fcmi\!+\!B\log 2)}{n}}\Big)\mathop{\Ex}_{R,\str}|\Ex|\Ex[f_W(X)|f_W(X)\in I_i]-f_W(X)||f_W(X)\in I_i]|_\infty\notag \\
&=\!\Big(\frac{1}{B}\!+\!\sqrt{\frac{2(\fcmi\!+\!B\log 2)}{n}}\Big)\sum_{i=1}^B\mathop{\Ex}_{R,\str}|\Ex[|\Ex[f_W(X)|f_W(X)\in I_i]-f_W(X)||f_W(X)\in I_i]|_\infty.
\end{align}

Finally, we use the fact that $\Ex[|f(X)-\Ex[f(X)|I_i]||I_i] \leq f_{(\lfloor ni/B\rfloor)}-f_{(\lfloor n(i-1)/B\rfloor)}$ holds by the definition of the UMB, which is the largest difference of the bins. Then 
\begin{align}
&\sum_{i=1}^B\mathop{\Ex}_{R,\str}|\Ex[|\Ex[f_W(X)|f_W(X)\in I_i]-f_W(X)||f_W(X)\in I_i]|_\infty\notag \\
&\leq \sum_{i=1}^Bf_{(\lfloor ni/B\rfloor)}-f_{(\lfloor n(i-1)/B\rfloor)} \leq 1.
\end{align}
where $\Ex|\cdot|_\infty$ is the maximum of the integrand.

Combining the above, 
Then we have
\begin{align}
&\mathop{\Ex}_{R,\str}\Ex|f_W(X)-f_\cali(X)|\notag \\
&=\Big(\frac{1}{B}+  \sqrt{\frac{2(\fcmi+B\log 2)}{n}}\Big)\sum_{i=1}^B\mathop{\Ex}_{R,\str}|\Ex[|\Ex[f_W(X)|f_W(X)\in I_i]-f_W(X)||f_W(X)\in I_i]|_\infty\notag \\
&\leq \frac{1}{B}+  \sqrt{\frac{2(\fcmi+B\log 2)}{n}}.
\end{align}
This concludes the proof.
\end{proof}

Using this we provide the proof of the total bias as follows;
\begin{proof}[Proof of Theorem~\ref{thm_total_bias}]
We use the triangle inequality,
\begin{align}
&\Ex_{R,\str}|\tce(f_W)-\mathrm{ECE}(f_W, \str)|\notag \\
&=\Ex_{R,\str}|\tce(f_W)-\tce(f_\mathcal{I})+\tce(f_\mathcal{I})-\mathrm{ECE}(f_W, \str)|\notag\\
&\leq \Ex_{R,\str}|\tce(f_W)-\tce(f_\mathcal{I})|+\Ex_{R, \str}|\tce(f_\mathcal{I})-\mathrm{ECE}(f_W, \str)|.
\end{align}
The first term is the binning bias and the second term is the statistical bias.

We start from the UMB; we can bound the binning bias in the first term by Theorem~\ref{thm_binning_bias} and the statistical bias in the second term by Theorem~\ref{statistical_bias_reusing} of the UMB. 

As for the UWB, the binning bias is simply $(1+L)/B$, which can be derived similarly as in Appendix~\ref{app_proof_binning_bias}. As for the second term, we can bound it by Theorem~\ref{statistical_bias_reusing} of the UWB. 

This concludes the proof of Eq.~\eqref{eq_total_train}.
\end{proof}

Provided in the proof of Theorem~\ref{thm_binning_bias} (especially in Eq.~\eqref{app_thm_proof_umb_binning}), we can obtain the tighter version of the binning bias bound as follows;
\begin{corollary}\label{col_tighter_binning_bias} For UMB, Under the CMI setting and under Assumptions~\ref{asm_1} and \ref{asm_lip}, we have
\begin{align}
\Ex_{R,\str}|\tce(f_W)\!-\!\tce(f_\cali)|\leq\! (1+L)\left(\frac{1}{B}+\!\sqrt{\frac{2}{n}\left(I(\Delta(U,\tilde{Z},\mathcal{A}(\tilde{Z}_U,R));U|\tilde{Z})+\!B\log 2\right)}\right),
\end{align}
where 
\begin{align}
\Delta(U,\tilde{Z},\mathcal{A}(\tilde{Z}_U,R))\coloneqq\sum_{i=1}^B\left|\frac{1}{n}\sum_{m=1}^{n}(\mathbbm{1}_{f_{\mathcal{A}(\tilde{Z}_U,R)}(\tilde{X}_{m,\bar{U}_m})\in I_i}-\mathbbm{1}_{f_{\mathcal{A}(\tilde{Z}_U,R)}(\tilde{X}_{m,U_m})\in I_i})\right|.
\end{align}
\end{corollary}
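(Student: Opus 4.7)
The corollary is obtained by inspecting the proof of Theorem~\ref{thm_binning_bias} and omitting the single step in which the data processing inequality loosens $I(\Delta;U|\tilde Z)$ to $\fcmi$. Concretely, I would start from Lemma~\ref{eq_binning_form} together with Assumption~\ref{asm_lip}, which reduces the target to
\begin{equation*}
\Ex_{R,\str}|\tce(f_W)-\tce(f_\cali)| \;\leq\; (1+L)\,\Ex_{R,\str}\Ex\,|f_W(X)-f_\cali(X)|.
\end{equation*}
Then apply Hölder's inequality exactly as in Eq.~\eqref{binnded_proof_gen} to factor the right-hand side into $\sum_i (\Ex_{R,\str}\,P(f_W(X)\in I_i))\cdot w_i$, where $w_i$ is the within-bin spread and $\sum_i w_i\leq 1$ by the telescoping order-statistic argument for UMB already used in the proof of Theorem~\ref{thm_binning_bias}.

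To bound $\Ex_{R,\str}P(I_i)$, use that UMB enforces $\hat P(I_i)\leq 1/B$ on the training sample, so the only remaining quantity is $\Ex\sum_i|\tilde P(I_i)-\hat P(I_i)|$ in the CMI supersample notation, which is precisely $\Ex_{R,\tilde Z,U}\Delta(U,\tilde Z,\mathcal A(\tilde Z_U,R))$ for the $\Delta$ defined in the corollary. Now apply Donsker--Varadhan conditional on $\tilde Z$:
\begin{equation*}
\Ex_{R,\tilde Z,U}\Delta \;\leq\; \inf_{t>0}\frac{I(\Delta;U|\tilde Z) + \Ex_{\tilde Z}\log \Ex_{R,U',U}e^{t\Delta'}}{t},
\end{equation*}
where $\Delta'$ denotes the same functional with an independent copy $U'$ of $U$. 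In the proof of Theorem~\ref{thm_binning_bias} the next step bounds $I(\Delta;U|\tilde Z)\leq\fcmi$ by DPI, using that $\Delta$ is a function of $f_{\mathcal A(\tilde Z_U,R)}(\tilde X)$ and $U$ alone; for the corollary I simply keep the tighter $I(\Delta;U|\tilde Z)$.

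The exponential moment is handled exactly as in the proof of Theorem~\ref{thm_binning_bias}. Writing $g_i(\tilde Z,U,R,U')$ as in Eq.~\eqref{eq_defined_g} but with the indicator loss of Eq.~\eqref{eq_proof_3_loss}, expand $e^{t|g_i|}\leq e^{tg_i}+e^{-tg_i}$, enumerate the $2^B$ sign patterns $(-1)^{v_i}$, and apply Hoeffding's lemma (Lemma~\ref{lem_mcdiamid}) coordinatewise in $m$. The per-coordinate bounded-difference constant is $2t/n$ rather than $4t/n$ because, under Assumption~\ref{asm_1}, flipping $U'_m$ changes the sum over $i$ by at most $2/n$: for any fixed $x$ exactly one indicator $\mathbbm 1_{f_W(x)\in I_i}$ is nonzero. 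This yields $\Ex_{R,U',U}e^{t\Delta'}\leq 2^B e^{t^2/(2n)}$, and optimizing $t$ produces $\Ex\Delta\leq \sqrt{2(I(\Delta;U|\tilde Z)+B\log 2)/n}$. Combining with $\hat P(I_i)\leq 1/B$ by Jensen's inequality, and then multiplying by $(1+L)$ via the reduction above, gives the stated bound.

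The only potential subtlety is that the UMB bin partition itself depends on $\tilde Z_U$, which could in principle interfere with the McDiarmid-style flipping argument. This is harmless because we condition on $\tilde Z$ throughout and the random object being flipped is the independent $U'$: the bin partition determined by $(\tilde Z,U,R)$ is frozen while the flipping estimates are made, so the conditional-independence structure required for the per-coordinate Hoeffding bound is preserved. Hence no new analytical difficulty arises beyond what is already established for Theorem~\ref{thm_binning_bias}; the corollary is a strictly sharper restatement obtained by not invoking DPI.
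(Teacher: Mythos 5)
Your proposal is correct and takes essentially the same approach as the paper: the paper's proof of Corollary~\ref{col_tighter_binning_bias} is precisely the remark that the data-processing-inequality step $I(\Delta;U|\tilde Z)\leq\fcmi$ in the proof of Theorem~\ref{thm_binning_bias} (at Eq.~\eqref{app_thm_proof_umb_binning}) is the only place where generality is given away, so keeping $I(\Delta;U|\tilde Z)$ while leaving the rest of the argument (Lemma~\ref{eq_binning_form}, the Hölder step in Eq.~\eqref{binnded_proof_gen}, the Donsker--Varadhan bound, the $2^Be^{t^2/(2n)}$ exponential-moment estimate, and the telescoping sum $\sum_i(f_{(\lfloor ni/B\rfloor)}-f_{(\lfloor n(i-1)/B\rfloor)})\leq 1$) unchanged yields the stated bound. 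Your unrolling of that proof matches the paper's step-by-step, including the observation that the bin boundaries are frozen once one conditions on $\tilde Z$, $U$, $R$, so the flipping-$U'_m$ argument is unaffected by the UMB bin construction.
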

In the proof of Theorem~\ref{thm_binning_bias}, we used the fact that $I(\Delta(U,\tilde{Z},\mathcal{A}(\tilde{Z}_U,R));U|\tilde{Z})\leq \fcmi$ by the data processing inequality. Thus, fCMI appearing in Theorem~\ref{thm_total_bias} can be replaced with $I(\Delta(U,\tilde{Z},\mathcal{A}(\tilde{Z}_U,R));U|\tilde{Z})$, which results in a tighter bound.

\subsection{Proof of Theorem~\ref{thme_metric} (metric entropy)}\label{app_proof_metric}

\begin{proof}
Recall the setting, where we assume that $f_w\in\calf$ has the metric entropy, $\log \mathcal{N}(\calf,\|\cdot\|_\infty,\delta)$, with parameter $\delta \ (> 0)$. That is, there exists a set of functions $\mathcal{F}_\delta \coloneqq \{f_{1}, \dots, f_{\mathcal{N}(\mathcal{F},\|\cdot\|_\infty,\delta)}\}$ that consists $\delta$-cover of $\calf$.
We will consider to replace $f_w$ with the functions from the $\delta$-cover.

Using the $\delta$-cover, we want to construct a set of functions $\tilde{\calf}$ that satisfies the following property; there exists a function $f\in\tilde{\calf}$ such that for any input $x$ and that $x$ is allocated to $i$-th bin, that is, $f_w(x)\in I_i$, then $f$ satisfies $f(x)\in I_i$ and $\|f-f_w\|_\infty<\delta$. 

The original $\delta$ cover $\calf$ may not satisfy this property as follows; If we simply consider that $h = \argmin_{f\in \mathcal{F}_\delta}\|f_w- f \|_\infty$, then there is a possibility that $h(x)\notin I_i$ for $x\in \calx$ such that $f_w(x)\in\cali$. This will cause a problem when approximating the ECE using $h$. If $h(x)\notin I_i$, it significantly changes the estimation of the conditional expectation in each bin, leading to a larger change of the ECE. To avoid this, we consider a set of functions such that for each $i=1,\dots,B$, for any $f\in \mathcal{F}_\delta$, we define 
\begin{align}\label{eq_discretized_f}
    f(x)\coloneqq\max{\left[\min{\left(f(x),\frac{i}{B}\right)},\frac{i-1}{B}+\epsilon\right]},
\end{align}
where $0<\epsilon<1/2\delta$. We refer to this set of clipped functions as $\mathcal{F}_i$ for $i=1,\dots,B$. The parameter $\epsilon$ is introduced so that the clipped value does not take the boundary value between the $i$-th bin and $(i-1)$-th bin. Since we set $0<\epsilon<1/2\delta$, the parameter $\epsilon$ does not affect the bias analysis below. Then, we define the function
\begin{align}\label{eq_func}
f(x)\coloneqq \sum_{i=1}^B f_i(x)\mathbbm{1}_{f_w(x)\in I_i},
\end{align}
where $f_i \coloneqq \argmin_{f\in \mathcal{F}_i}\sup_{x\in \calx_i}|f_w(x)- f(x) |$ and $\calx_i\coloneqq\{x\in\calx|f_w(x)\in I_i\}$. (if $\calx_i=\phi$, we do not need to consider $f_i$ and any function in $\calf_i$ can be used.) Note that under this definition, $\sup_{x\in \calx_i}|f_w(x)- f_i(x) |<\delta$ holds for each $i=1,\dots,B$. This can be confirmed as follows; given any $f_w$, we assume that there exists a point $x^*\in\calx$ (the following discussion still holds when there are multiple such $x^*$ and we refer to them as $\calx^*$) that achieves $\sup_{x}|f_w-h|_\infty$. If $f_w(x^*)\in I_i$, by the definition of $\calf_i$, $|f_i(x^*)-f_w(x^*)|\leq \max{\left[\min{\left(h(x^*),\frac{i}{B}\right)},\frac{i-1}{B}+\epsilon\right]}-f_w(x^*)|\leq |h(x^*)-f_w(x^*)|\leq  \delta$. For $x$, which does not achieves $\sup_{x}|f_w(x)-h(x)\|_\infty$ (that is, $x\notin \calx^*$) and satisfies $f_w(x)\in I_i$, then we have $|f_i(x)-f_w(x)|\leq \max{\left[\min{\left(h(x),\frac{i}{B}\right)},\frac{i-1}{B}+\epsilon\right]}-f_w(x)|\leq |h(x)-f_w(x)|\leq \delta$. So for any $x\in \calx_i$, $|f_w(x)- f_i(x) |<\delta$ holds. Thus the function $f$ defined in Eq.~\eqref{eq_discretized_f} satisfies that for any $x$, if $f_w(x)\in I_i$, then $f$ satisfies $f(x)\in I_i$ and $\|f-f_w\|_\infty<\delta$.

With these settings, we consider replacing the functions in the total bias defined by above. Here in after, we set $\delta<1/B$. 
Then the error in the TCE by replacing $f_w$ with $f$ is given as
\begin{align}
|\Ex|\Ex[Y|f_w]-f_w|-\Ex|\Ex[Y|f]-f||\leq (1+L)\delta,
\end{align}
which is obtained by the Lipschitz assumption.

Next, we consider the error in the ECE by replacing $f_w$ with $f$ is given as follows; 
\begin{align}
&|\ece(f_w,\str)-\ece(f,\str)|\notag \\
&=|\sum_{i=1}^{B}|\Ex_{(X,Y)\sim \hat{S}_{\mathrm{tr}}}(Y-f_w(X))\cdot \mathbbm{1}_{f_w(X)\in I_i}|-\sum_{i=1}^{B}|\Ex_{(X,Y)\sim \hat{S}_{\mathrm{tr}}}(Y-f(X))\cdot \mathbbm{1}_{f(X)\in I_i}||\notag \\
&=|\sum_{i=1}^{B}|\Ex_{(X,Y)\sim \hat{S}_{\mathrm{tr}}}(Y-f_w(X))\cdot \mathbbm{1}_{f_w(X)\in I_i}|-\sum_{i=1}^{B}|\Ex_{(X,Y)\sim \hat{S}_{\mathrm{tr}}}(Y-f(X))\cdot \mathbbm{1}_{f_w(X)\in I_i}||\notag \\
&\leq |\sum_{i=1}^{B}|\Ex_{(X,Y)\sim \hat{S}_{\mathrm{tr}}}(f(X)-f_w(X))\cdot \mathbbm{1}_{f_w(X)\in I_i}||\notag \\
&=\sum_{i=1}^{B}\frac{|I_i|}{n_e}\left|\frac{1}{|I_i|}\sum_{m=1}^{n} \mathbbm{1}_{f_w(x_m)\in I_i} f_w(x_m)-\frac{1}{|I_i|}\sum_{m=1}^{n} \mathbbm{1}_{f_w(x_m)\in I_i} f(x_m)\right|\notag\\
&\leq \sum_{i=1}^{B}\frac{|I_i|}{n}\frac{\delta|I_i|}{|I_i|} \notag\\
&\leq \delta.
\end{align}
 where $|I_i|\coloneqq\sum_{m=1}^{n}\mathbbm{1}_{f_w(x_m)\in I_i}$ and we used the fact that $\sum_i|I_i|=n$ and $\mathbbm{1}_{f_w(X)\in I_i}=\mathbbm{1}_{f(X)\in I_i}$ for any $x$ by definition.
 
Then the total bias is upper bounded by using $f$ as follows;
\begin{align}
&\Ex_{R, \str}|\tce(f_W)-\mathrm{ECE}(f_W, \str)|\notag \\
&=\Ex_{R, \str}|\tce(f_W)-\tce(f)+\tce(f)-\mathrm{ECE}(f_W, \str)|\notag\\
&\leq (1+L)\delta+\Ex_{R, \str}|\tce(f)-\ece(f,\str)|+|\ece(f,\str)-\mathrm{ECE}(f_W, \str)|\notag\\
&\leq (2+L)\delta+\Ex_{R, \str}|\tce(f)-\ece(f,\str)|.
\end{align}
Since the second term in the above represents the total bias of $f$, using Theorem~\ref{thm_total_bias}
\begin{align}
\Ex_{R, \str}|\tce(f)-\ece(f,\str)|
\leq 
\frac{1+L}{B}+\sqrt{\frac{8\left(\mathrm{eCMI}(\tilde{l})+B\log 2\right)}{n}},
\end{align}
where we replace $f_w$ appearing in the bound appearing Theorem~\ref{thm_total_bias} with $f$ defined above.

From the data processing inequality, we can upper bound the $\mathrm{eCMI}(\tilde{l})$ by the $\fcmi$ of $f$. Then such $\fcmi$ is bounded by the entropy of $f$ as follows
\begin{align}
   \mathrm{eCMI}(\tilde{l})\leq \mathrm{fCMI}(f)=I(f(\tilde{X});U|\tilde{Z})\leq H[f(\tilde{X})|\tilde{Z}]-H[f(\tilde{X})|U,\tilde{Z}]\leq  H[f(\tilde{X})|\tilde{Z}],
\end{align}
where we used the definition of mutual information and the conditional entropy of $f$ given $U$ ($H[f(\tilde{X})|U,\tilde{Z}]$) is larger than $0$ because $f(\tilde{X})$ is the discrete random variable and the entropy of the discrete random variable is always larger than 0. We can confirm that $f(\tilde{X})$ is a discrete random variable since $f$ belongs to the function which is defined by the $\delta$ covering and the input $\tilde{X}$ is $2n$.

Then we upper bound $H[f(\tilde{X})|\tilde{Z}]$ by the log of the number of distinct values that are represented by Eq.~\eqref{eq_func} given $2n$ inputs of supersamples $\tilde{Z}$. We refer to it as $N$.
Define $d_{2n}(f,g)\coloneqq \max_{i\in[2n]}|f(x_i)-g(x_i)|$ for $f, g\in\calf$. The $\delta$-covering number of $\mathcal{F}$ with respect to $d_{2n}$ is denoted as $\mathcal{N} (\calf,d_{2n},\delta)$, and we define $\mathcal{N} (\calf,\delta,2n)\coloneqq \sup_{x^n\in\calx^n}\mathcal{N}(\calf,d_{2n},\delta)$. It is known that $\mathcal{N} (\calf,\delta,2n) \leq \mathcal{N}(\calf,\|\cdot\|_\infty,\delta)$ in general \cite{wainwright_2019}.
We focus on the fact that $f$ defined in Eq.~\eqref{eq_func} is the element of $\calf_1+\calf_2+\dots,+\calf_B$, where $\calf_1+\calf_2$ implies the set of functions that is $\{f+g|f\in\calf_1,g\in\calf_2\}$. Note that the covering number of sum of two functions are upper bounded by the multiplication of each covering number, that is
\begin{align}
    \mathcal{N} (\calf_1+\calf_2,\delta,2n)\leq \mathcal{N} (\calf_1,\delta/2,2n)\mathcal{N} (\calf_2,\delta/2,2n)
\end{align}
and thus, we have
\begin{align}
    \mathcal{N} (\calf_1+\calf_2+\dots,+\calf_B,\delta,2n)\leq \prod_{i=1}^B\mathcal{N} (\calf_i,\delta/B,2n)\leq (\mathcal{N} (\calf_i,\delta/B,2n))^B.
\end{align}
Thus we have
\begin{align}
N\leq \prod_{i=1}^B(\mathcal{N} (\calf,\delta,2n))\leq  (\mathcal{N}(\calf,\|\cdot\|_\infty,\delta/B))^B.
\end{align}

In conclusion, we have that
\begin{align}
\mathrm{eCMI}(\tilde{l})\leq \log N\leq B\log \mathcal{N}(\calf,\|\cdot\|_\infty,\delta/B).
\end{align}

Combining these we have
\begin{align}
\Ex_{R,\str}\bias_\tot(f_W,\str)
&\leq \frac{1+L}{B}+(2+L)\delta+\sqrt{\frac{8\left(B\log \mathcal{N}(\calf,\|\cdot\|_\infty,\delta/B)+B\log 2\right)}{n}}\notag\\
&=\frac{1+L}{B}+(2+L)\delta+\sqrt{\frac{8B\log 2\mathcal{N}(\calf,\|\cdot\|_\infty,\delta/B)}{n}}.
\end{align}

Finally, as for the order discussion, for example, we can obtain $\mathcal{N}(\calf,\|\cdot\|_\infty,\delta) \asymp \left(\frac{L_0}{\delta}\right)^d$ when $f_w$ is a $d$-dimensional parametric function that is $L_0$-Lipschitz continuous $(L_0 > 0)$~\cite{wainwright_2019}, leading to the following upper bound:
\begin{align}
\Ex_{R,\str}\bias_\tot(f_W,\str)
&\lesssim \frac{3+2L}{B}+ \sqrt{8\frac{dB\log2 L_0B^2}{n}},
\end{align}
where we set $\delta=\calo(1/B)$.
This bound is minimized when $B=\calo(n^{1/3})$, resulting in a bias of $\calo(\log n/n^{1/3})$.
\end{proof}

Here we provide additional discussion about the metric entropy. The bound based on the metric entropy depends on the model's dimensionality, making them unsuitable for large models such as neural networks. Some existing studies~\cite{harutyunyan2021,hellstrom2022a} present the upper bound of the eCMI and fCMI by the dimension-independent complexities, such as the VC dimension for binary classification and connecting IT theory to UC theory. 

Inspired by these results, here we provide the upper bound of eCMI and fCMI using such dimension-independent complexities. As provided in the lower bound analysis above, since TCE estimation is similar to the nonparametric regression, we use the fat-shattering dimension~\cite{alon1997scale} to upper bound the eCMI. Specifically, from Lemma 3.5 in \citet{alon1997scale} in if our model class $f_w(\cdot)$ satisfies $\delta/4$-fat dimension with $d_{\delta/4}$ for $\delta\in[0,1]$, we have
\begin{align}
\mathrm{eCMI}\leq \mathrm{fCMI} = \mathcal{O}\Big(d_{\delta/4}\log \frac{n}{d_{\delta/4}\delta}\log\Big(\frac{n}{\delta^2}\Big)\Big)
\end{align}
which results in the dimension-independent upper bound. To evaluate the fat-shattering dimension for specific models, see \citet{bartlett2003vapnik} for the details.

\subsection{Proof of Theorem~\ref{recalibration_gap} (recalibration)}\label{app_proof_recab}
\begin{proof}
Recall the definition of the recalibration. Here we show the expression when we use the training dataset $\str$:
\begin{align}
&h_{\cali,\str}(x)=\sum_{i=1}^B\hat{\mu}_{i,\str}\cdot\mathbbm{1}_{f_w(x)\in I_i},\\
&\hat{\mu}_{i,\str}\coloneqq\frac{\sum_{m=1}^{n} y_m\cdot\mathbbm{1}_{f_w(x_m)\in I_i}}{\sum_{m=1}^{n} \mathbbm{1}_{f_w(x_m)\in I_i}}.
\end{align}

The proof is similar to the proof of Theorem~\ref{thm_binning_bias}. We use Eqs.~\eqref{eq_proof_1_loss} and \eqref{eq_proof_2_loss}, let $\str=\{Z_m\}_{m=1}^n$, then we have

\begin{align}
&\Ex_{R,\str}\Ex[|\Ex[Y|h_{\cali,S_\mathrm{tr}}(x)]-h_{\cali,S_\mathrm{tr}}(x)]\notag\\
&=\Ex_{R,\str}\left|\sum_{i=1}^B\left|\Ex_{Z''=(X'',Y'')} \left[(Y''-\hat{\mu}_{i,\str})\cdot\mathbbm{1}_{f_W(X'')\in I_i}\right]\right|\right|\notag\\
&=\Ex_{R,\str}\sum_{i=1}^B\left|\Ex_{\{Z'_m\}_{m=1}^n}\frac{1}{n}\sum_{m=1}^{n} \left[(Y'_m-\hat{\mu}_{i,\str})\cdot\mathbbm{1}_{f_W(X'_m)\in I_i}\right]\right|\notag\\
&\leq \mathop{\Ex}_{R,\{Z_m\}_{m=1}^n,\{Z'_m\}_{m=1}^n}\sum_{i=1}^B\notag\\
&\quad\left|\frac{1}{n}\sum_{m=1}^{n} \left[(Y'_m\!\cdot\!\mathbbm{1}_{f_W(X'_m)\in I_i}\!-\!\hat{\mu}_{i,\str}\cdot\mathbbm{1}_{f_W(X_m)\in I_i}\!+\!\hat{\mu}_{i,\str}\cdot\mathbbm{1}_{f_W(X_m)\in I_i}\!-\!\hat{\mu}_{i,\str}\cdot\mathbbm{1}_{f_W(X'_m)\in I_i})\right]\right|\notag\\
&= \mathop{\Ex}_{R,\{Z_m\}_{m=1}^n,\{Z'_m\}_{m=1}^n}\sum_{i=1}^B\notag\\
&\quad\left|\frac{1}{n}\sum_{m=1}^{n} \left[(Y'_m\!\cdot\!\mathbbm{1}_{f_W(X'_m)\in I_i}  -Y_m \cdot\mathbbm{1}_{f_W(X_m)\in I_i})\!+\!\hat{\mu}_{i,\str}(\mathbbm{1}_{f_W(X_m)\in I_i}\!-\!\mathbbm{1}_{f_W(X'_m)\in I_i})\right]\right|\notag\\
&\leq \mathop{\Ex}_{R,\{Z_m\}_{m=1}^n,\{Z'_m\}_{m=1}^n}\sum_{i=1}^B\left|\frac{1}{n}\sum_{m=1}^{n} \left[(Y'_m\cdot\mathbbm{1}_{f_W(X'_m)\in I_i}  -Y_m \cdot\mathbbm{1}_{f_W(X_m)\in I_i})\right]\right|\notag\\
&\quad\quad+\mathop{\Ex}_{R,\{Z_m\}_{m=1}^n,\{Z'_m\}_{m=1}^n}\sum_{i=1}^B\left|\frac{1}{n}\sum_{m=1}^{n} \left[(\mathbbm{1}_{f_W(X_m)\in I_i}-\mathbbm{1}_{f_W(X'_m)\in I_i})\right]\right|
\end{align}
where we used the Jensen inequality first and the used the triangle inequality and used the fact that $|\hat{\mu}_{i,\str}|\leq 1$ by the definition of UMB in the next inequality.

We then rewrote the above in the CMI setting, we have
\begin{align}\label{eq_recab_bound_proof}
&\Ex_{R,\str}\Ex[|\Ex[Y|h_{\cali,S_\mathrm{tr}}(x)]-h_{\cali,S_\mathrm{tr}}(x)]\notag\\
&\leq \Ex_{R,\tilde{Z},U}\sum_{i=1}^B\left|\frac{1}{n}\sum_{m=1}^{n}(\tilde{Y}_{m,\bar{U}_m}\cdot\mathbbm{1}_{f_{\mathcal{A}(\tilde{Z}_U,R)}(\tilde{X}_{m,\bar{U}_m})\in I_i}-\tilde{Y}_{m,U_m}\cdot\mathbbm{1}_{f_{\mathcal{A}(\tilde{Z}_U,R)}(\tilde{X}_{m,U_m})\in I_i}\right|\notag\\
&\quad\quad+ \Ex_{R,\tilde{Z},U}\sum_{i=1}^B\left|\frac{1}{n}\sum_{m=1}^{n}\mathbbm{1}_{f_{\mathcal{A}(\tilde{Z}_U,R)}(\tilde{X}_{m,\bar{U}_m})\in I_i}-\mathbbm{1}_{f_{\mathcal{A}(\tilde{Z}_U,R)}(\tilde{X}_{m,U_m})\in I_i}\right|\notag\\
&\leq \sqrt{\frac{2(I(\Delta_1(U,\tilde{Z},\mathcal{A}(\tilde{Z}_U,R));U|\tilde{Z})+B\log 2)}{n}}+\sqrt{\frac{2(I(\Delta_2(U,\tilde{Z},\mathcal{A}(\tilde{Z}_U,R));U|\tilde{Z})+B\log 2)}{n}}.
\end{align}
where
\begin{align}
&\Delta_1(U,\tilde{Z},\mathcal{A}(\tilde{Z}_U,R))\coloneqq\!\sum_{i=1}^B\left|\frac{1}{n}\!\sum_{m=1}^{n}(\tilde{Y}_{m,\bar{U}_m}\!\cdot\!\mathbbm{1}_{f_{\mathcal{A}(\tilde{Z}_U,R)}(\tilde{X}_{m,\bar{U}_m})\in I_i}\!-\!\tilde{Y}_{m,U_m}\!\cdot\!\mathbbm{1}_{f_{\mathcal{A}(\tilde{Z}_U,R)}(\tilde{X}_{m,U_m})\in I_i})\right| \\
&\Delta_2(U,\tilde{Z},\mathcal{A}(\tilde{Z}_U,R))\coloneqq\!\sum_{i=1}^B\left|\frac{1}{n}\!\sum_{m=1}^{n}(\mathbbm{1}_{f_{\mathcal{A}(\tilde{Z}_U,R)}(\tilde{X}_{m,\bar{U}_m})\in I_i}-\mathbbm{1}_{f_{\mathcal{A}(\tilde{Z}_U,R)}(\tilde{X}_{m,U_m})\in I_i})\right|
\end{align}
where the last line is we repeat the proof of Theorem~\ref{thm_binning_bias}. The proof of Theorem~\ref{thm_binning_bias} has discussed the loss composed of the indicator function, we can use exactly the same proof procedure. The only difference is the CMI; here we consider the eCMI which uses above $\Delta_1$ and $\Delta_2$ as the random variables, and their CMIs are the conditional mutual information between $\Delta_1$ and $\Delta_2$.

From Eq.~\eqref{eq_recab_bound_proof}, we can further simplify the upper bound using the data processing inequality,
\begin{align}
&I(\Delta_1(U,R,\tilde{Z});U|\tilde{Z}), I(\Delta_2(U,R,\tilde{Z});U|\tilde{Z})\leq\ecmi\\
&\ecmi\coloneqq I(l(\mathcal{A}(\tilde{Z}_U,R),\tilde{Z},B);U|\tilde{Z}),\\
&l(\mathcal{A}(\tilde{Z}_U,R),z, B)\coloneqq( \mathbbm{1}_{f_{\mathcal{A}(\tilde{Z}_U,R)}(x)\in I_1},\dots,  \mathbbm{1}_{f_{\mathcal{A}(\tilde{Z}_U,R)}(x)\in I_B}).
\end{align}
We then have
\begin{align}
&\Ex_{R,\str}\Ex[|\Ex[Y|h_{\cali,S_\mathrm{tr}}(x)]-h_{\cali,S_\mathrm{tr}}(x)]\leq 2\sqrt{\frac{2(\ecmi+B\log 2)}{n}}\leq 2\sqrt{\frac{2(\fcmi+B\log 2)}{n}}.
\end{align}
\end{proof}
In the numerical experiments, we use the tighter version, which appears in the proof above;
\begin{corollary}\label{app_cor_main}
Under the same setting and assumptions as Theorem~\ref{recalibration_gap}, we have
\begin{align}\label{eq:tight_bound_thm7}
&\Ex_{R,\str}\Ex[|\Ex[Y|h_{\cali,S_\mathrm{tr}}(x)]-h_{\cali,S_\mathrm{tr}}(x)]\notag \\
&\leq \sqrt{\frac{2(I(\Delta_1(U,\tilde{Z},\mathcal{A}(\tilde{Z}_U,R));U|\tilde{Z})+B\log 2)}{n}}+\sqrt{\frac{2(I(\Delta_2(U,\tilde{Z},\mathcal{A}(\tilde{Z}_U,R));U|\tilde{Z})+B\log 2)}{n}},
\end{align}
where
\begin{align}
&\Delta_1(U,\tilde{Z},\mathcal{A}(\tilde{Z}_U,R))\coloneqq\!\sum_{i=1}^B\left|\frac{1}{n}\!\sum_{m=1}^{n}(\tilde{Y}_{m,\bar{U}_m}\!\cdot\!\mathbbm{1}_{f_{\mathcal{A}(\tilde{Z}_U,R)}(\tilde{X}_{m,\bar{U}_m})\in I_i}\!-\!\tilde{Y}_{m,U_m}\!\cdot\!\mathbbm{1}_{f_{\mathcal{A}(\tilde{Z}_U,R)}(\tilde{X}_{m,U_m})\in I_i})\right| \\
&\Delta_2(U,\tilde{Z},\mathcal{A}(\tilde{Z}_U,R))\coloneqq\sum_{i=1}^B\left|\frac{1}{n}\sum_{m=1}^{n}\mathbbm{1}_{f_{\mathcal{A}(\tilde{Z}_U,R)}(\tilde{X}_{m,\bar{U}_m})\in I_i}-\mathbbm{1}_{f_{\mathcal{A}(\tilde{Z}_U,R)}(\tilde{X}_{m,U_m})\in I_i})\right|.
\end{align}
\end{corollary}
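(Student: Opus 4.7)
The plan is to bound the TCE of the recalibrated function $h_{\cali,\str}$ by leveraging the CMI machinery already developed in Theorems~\ref{statistical_bias_reusing} and~\ref{thm_binning_bias}. Observe first that $h_{\cali,\str}$ is piecewise constant on the bins (taking value $\hat{\mu}_{i,\str}$ on $I_i$), so $\Ex[Y\mid h_{\cali,\str}(X)=\hat{\mu}_{i,\str}] = \Ex[Y\mid f_w(X)\in I_i]$ and the TCE rewrites as $\sum_{i=1}^{B}\big|\Ex_{Z''=(X'',Y'')}[(Y''-\hat{\mu}_{i,\str})\cdot \mathbbm{1}_{f_W(X'')\in I_i}]\big|$, exactly the shape used in Sec.~\ref{sec_proposed_analysis}.

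Next, I would apply Jensen to promote the expectation over $Z''$ to an empirical mean over an auxiliary i.i.d.\ sample $\{Z'_m\}_{m=1}^{n}$ and then add and subtract $\hat{\mu}_{i,\str}\cdot\mathbbm{1}_{f_W(X_m)\in I_i}$ (with $X_m\in\str$) inside each bin. This key decoupling step splits the quantity, via the triangle inequality and the bound $|\hat{\mu}_{i,\str}|\leq 1$, into two terms of familiar structure: a label--indicator gap $\sum_{i}\big|\frac{1}{n}\sum_{m}(Y'_m\cdot\mathbbm{1}_{f_W(X'_m)\in I_i} - Y_m\cdot\mathbbm{1}_{f_W(X_m)\in I_i})\big|$, and a pure indicator gap $\sum_{i}\big|\frac{1}{n}\sum_{m}(\mathbbm{1}_{f_W(X_m)\in I_i}-\mathbbm{1}_{f_W(X'_m)\in I_i})\big|$. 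Both are empirical-vs-empirical sums over a training replica and a fresh replica, which is precisely the object the CMI/supersample framework handles.

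I would then recast each sum in the CMI setting by replacing $(\str,\{Z'_m\})$ with $(\tilde Z_U,\tilde Z_{\bar U})$ and apply the Donsker--Varadhan lemma together with the exponential-moment computation from the proof of Theorem~\ref{statistical_bias_reusing}: for each sign pattern $(v_1,\ldots,v_B)\in\{0,1\}^B$, Lemma~\ref{lem_mcdiamid} with $c_i = 2t/n$ (since the per-bin changes are bounded by $2/n$, the indicator constraints allowing at most one nonzero contribution per sample as in Eq.~\eqref{eq_upper_bound}) yields an exponential-moment bound of $2^B e^{t^2/(2n)}$, hence a $\sqrt{2(\cdot + B\log 2)/n}$ tail. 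The two CMI terms arising from (i) and (ii) are each upper bounded, via the data-processing inequality, by $\fcmi = I(f_{\cala(\tilde Z_U,R)}(\tilde X);U\mid \tilde Z)$, because the indicators and the label-masked indicators are deterministic functions of $f_{\cala(\tilde Z_U,R)}(\tilde X)$ together with $\tilde Y$, which is conditionally independent of $U$ given $\tilde Z$. Summing the two identical bounds produces the factor $2$ in the final statement.

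The main obstacle will be the add-and-subtract/decoupling step: $\hat{\mu}_{i,\str}$ is itself a function of $\str$, so one must be careful that the resulting sums match the structure required by Eq.~\eqref{eq_dv_original_bias_absolute}. The trick is that after decoupling, $\hat{\mu}_{i,\str}$ appears only as a bounded multiplier in the indicator-only term and is eliminated by $|\hat{\mu}_{i,\str}|\leq 1$, so no new algorithmic-information quantity beyond $\fcmi$ is introduced. A secondary subtlety is verifying that the eCMI-style bound applies uniformly to the label--indicator gap, which it does because the per-sample change is still bounded by $2/n$, \emph{mutatis mutandis} to the argument in Theorem~\ref{statistical_bias_reusing}. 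Once these pieces are in place, collapsing the two $\sqrt{2(\fcmi+B\log 2)/n}$ contributions into $2\sqrt{2(\fcmi+B\log 2)/n}$ completes the proof, and the tighter form stated in Corollary~\ref{app_cor_main} drops out by retaining the intermediate $I(\Delta_1;U\mid\tilde Z)$ and $I(\Delta_2;U\mid\tilde Z)$ before applying data-processing.
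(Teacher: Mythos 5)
Your proposal is correct and follows essentially the same route as the paper's own proof in Appendix~\ref{app_proof_recab}: the same piecewise-constant reformulation, Jensen step, add-and-subtract of $\hat{\mu}_{i,\str}\cdot\mathbbm{1}_{f_W(X_m)\in I_i}$ (implicitly using $\sum_m \hat{\mu}_{i,\str}\mathbbm{1}_{f_W(X_m)\in I_i}=\sum_m Y_m\mathbbm{1}_{f_W(X_m)\in I_i}$ together with $|\hat{\mu}_{i,\str}|\leq 1$), and then the Donsker--Varadhan plus exponential-moment argument of Theorems~\ref{statistical_bias_reusing} and~\ref{thm_binning_bias} applied separately to $\Delta_1$ and $\Delta_2$, with data processing reserved for the looser fCMI form. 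No substantive differences from the paper's argument.
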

In the main paper, we further upper bound eCMIs by fCMI, which is followed by the data processing inequality.

\subsection{Recalibration when using test data}\label{eq_recalibration}
Here we show the bias of the recalibration when the test (recalibration data) is used. 
\begin{corollary}\label{app_cor_recab_exit}
Conditioned on $W=w$, under the same assumption as Theorem~\ref{statistical_gap_without}, we have
\begin{align}\label{eq:recab_bound_test}
&\Ex_{\sre}\Ex[|\Ex[Y|h_{\cali,S_\mathrm{re}}(X)]-h_{\cali,S_\mathrm{re}}(X)]\leq \sqrt{2B\log2/(n_\mathrm{re}-B)}+2B/(\nre-B) 
\end{align}
\end{corollary}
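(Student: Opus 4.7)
The plan is to mirror the proof of Theorem~\ref{statistical_gap_without} for UMB (Appendix~\ref{umb_proof}) applied to the recalibrated function $h_{\cali,\sre}$. First, I would reformulate the target quantity: under Assumption~\ref{asm_1}, the values $\{\bar{y}_{i,\sre}\}_{i=1}^B$ are almost surely distinct across bins, so the tower property gives $\Ex[Y\mid h_{\cali,\sre}(X)=\bar{y}_{i,\sre}] = \Ex[Y\mid f_w(X)\in I_i] =: \mu_i$, and hence
\begin{align*}
\Ex[|\Ex[Y\mid h_{\cali,\sre}(X)] - h_{\cali,\sre}(X)|] = \sum_{i=1}^B p_i\,|\mu_i - \bar{y}_{i,\sre}|,
\end{align*}
where $p_i = P(f_w(X)\in I_i)$.

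Next, I would exploit the identity $p_i|\mu_i-\bar{y}_{i,\sre}| = |\Ex_Z[(Y-\bar{y}_{i,\sre})\mathbbm{1}_{f_w(X)\in I_i}]|$ together with the crucial observation that the corresponding empirical counterpart vanishes by construction, i.e.\ $\frac{1}{n_\mathrm{re}}\sum_{m=1}^{n_\mathrm{re}}(Y_m - \bar{y}_{i,\sre})\mathbbm{1}_{f_w(X_m)\in I_i} = 0$ for each $i$ (since $\bar{y}_{i,\sre}$ is the within-bin empirical mean). Combining these yields
\begin{align*}
\sum_{i=1}^B p_i\,|\mu_i - \bar{y}_{i,\sre}| = \sum_{i=1}^B\left|\Ex_Z[\ell_i(Z)] - \frac{1}{n_\mathrm{re}}\sum_{m=1}^{n_\mathrm{re}}\ell_i(Z_m)\right|,
\end{align*}
where $\ell_i(z) = (y-\bar{y}_{i,\sre})\mathbbm{1}_{f_w(x)\in I_i}$ is bounded in $[-1,1]$ and, by disjointness of bins, at most one indicator is nonzero per sample. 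This matches the structural form of the statistical bias expression handled in Appendix~\ref{umb_proof}.

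Finally, I would invoke the proof machinery of Appendix~\ref{umb_proof}: condition on the $B-1$ boundary points $S_{(B)}$ so the bins become deterministic, apply Gupta's Lemma~1 so that interior samples within each bin are conditionally i.i.d., split the right-hand side into an interior contribution and a boundary correction contributing $2B/(n_\mathrm{re}-B)$, and apply the McDiarmid-based exponential moment inequality of Eq.~\eqref{eq_concentrate} to the interior part to obtain the $\sqrt{2B\log 2/(n_\mathrm{re}-B)}$ rate. The main obstacle is that $\ell_i$ depends on the sample through $\bar{y}_{i,\sre}$ (unlike the fixed loss in Theorem~\ref{statistical_gap_without}); however, conditioned on $S_{(B)}$, perturbing a single interior sample shifts at most one $\bar{y}_{i,\sre}$ by $O(1/|I_i|)$ while keeping the empirical mean $\tfrac{1}{n_\mathrm{re}}\sum_m \ell_i(Z_m)$ identically zero, so the bounded-differences constant for the joint exponential moment remains $O(1/(n_\mathrm{re}-B))$, letting the McDiarmid expansion over the $2^B$ sign assignments proceed exactly as in Appendix~\ref{umb_proof} and yielding the stated bound.
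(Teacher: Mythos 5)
Your first two steps are exactly the paper's argument: by the tower property the target quantity is $\tce(h_{\cali})$ with $h_\cali=h_{\cali,\sre}$, and $\ece(h_{\cali,\sre},\sre)=0$ by construction, so the recalibration bias is literally the statistical bias $\bias_\stat(h_{\cali,\sre},\sre)$ of the recalibrated predictor; at that point the paper simply invokes the UMB case of Theorem~\ref{statistical_gap_without} and stops. (Minor point: Assumption~\ref{asm_1} does not make the values $\bar{y}_{i,\sre}$ distinct across bins -- they are averages of binary labels -- but ties only merge bins in the conditioning and decrease the bias by Jensen, so your first display should be an inequality; this is harmless.)

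The genuine gap is in your last step, where you re-run the McDiarmid machinery of Appendix~\ref{umb_proof} with the sample-dependent loss $\ell_i(z)=(y-\bar{y}_{i,\sre})\mathbbm{1}_{f_w(x)\in I_i}$. Since the full-sample empirical mean of $\ell_i$ is identically zero, the quantity whose exponential moment you must control is $\sum_i(-1)^{v_i}P(I_i)\bigl(\mu_i-\bar{y}_{i,\sre}\bigr)$, and replacing a single interior sample lying in bin $b$ changes it by roughly $P(I_b)/|I_b|$. Without an a priori bound $P(I_b)=\calo(1/B)$ this is only $\calo(B/(\nre-B))$, not $\calo(1/(\nre-B))$ as you claim; plugging the resulting bounded-difference constants into Lemma~\ref{lem_mcdiamid} and optimizing over $t$ gives a bound of order $B/\sqrt{\nre-B}$, a factor of order $\sqrt{B}$ worse than the statement. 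Establishing $P(I_b)\lesssim 1/B$ is itself a concentration claim about $|P(I_b)-\hat{P}(I_b)|$ (this is precisely what costs the extra $\sqrt{2B\log 2/(\nte-B)}+2B/(\nte-B)$ terms in the paper's UMB binning-bias arguments, e.g.\ Theorem~\ref{binning_bias_without}), and folding it in adds error terms you have not accounted for. A second, related issue is centering: once $\bar{y}_{i,\sre}$ enters the loss, the exponent is no longer conditionally mean-zero given the boundary points $S_{(B)}$ (each bin contains a boundary sample whose label has a different conditional law), so Lemma~\ref{lem_mcdiamid} only controls deviations around a nonzero conditional mean that must be bounded separately. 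A clean repair of your route is to split $P(I_i)(\mu_i-\bar{y}_{i,\sre})=\bigl[\Ex_Z(Y\mathbbm{1}_{I_i})-\hat{\Ex}(Y\mathbbm{1}_{I_i})\bigr]-\bar{y}_{i,\sre}\bigl[P(I_i)-\hat{P}(I_i)\bigr]$ and treat each piece with the fixed losses $y\mathbbm{1}_{I_i}$ and $\mathbbm{1}_{I_i}$ via the Theorem~\ref{statistical_gap_without} machinery -- exactly the $\Delta_1/\Delta_2$ decomposition used in the proof of Theorem~\ref{recalibration_gap} -- but this yields twice the stated bound; the paper's exact constants come from treating the whole quantity wholesale as $\bias_\stat(h_{\cali,\sre},\sre)$ and reusing Theorem~\ref{statistical_gap_without} for the predictor $h_{\cali,\sre}$ rather than re-opening the exponential-moment computation.
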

\begin{proof}
The recalibration bias corresponds to the statistical bias because from Theorem~\ref{statistical_gap_without}, we have
\begin{align}
\Ex_{\sre}\bias_\stat(h,S_\mathrm{re})&= \Ex_{\sre}|\tce(h_\mathcal{I})-\mathrm{ECE}(h_w,S_\mathrm{re})|\notag \\ 
&\leq \sqrt{2B\log2/(n_\mathrm{re}-B)}+2B/(\nre-B) .
\end{align}
where $h_\cali$ is the conditional expectation of $h_{\cali,S_\mathrm{re}}$ given bins $\cali$. Note that by the definition of  $h_{\cali,S_\mathrm{re}}$, from the tower property $h_{\cali}(x)=h_{\cali,S_\mathrm{re}}(x)$ holds since we take the expectation in each bin. Thus, by definition $\Ex_{\sre}\tce(h_\mathcal{I})=\Ex_{\sre}\Ex[|\Ex[Y|h_{\cali,S_\mathrm{re}}(X)]-h_{\cali,S_\mathrm{re}}(X)]$ holds. Moreover $\Ex_{\sre}\mathrm{ECE}(h_w,S_\mathrm{re})=0$ by definition, since this is the definition of the recalibrated function. Thus, we obtain the result.
\end{proof}

\section{Further discussion}\label{app_discuss}
We have presented the results of our analyses of (i) the total bias in estimating the TCE and (ii) the generalization error analysis for the ECE and  the TCE thus far. In this section, we explain the difference between our study and the existing work in the calibration context.
\subsection{Discussion about the assumption}\label{app_assumption}
Here we discuss the necessity of Assumption~\ref{asm_1}. The reasons of using this assumption are two-fold: {\bf i) :} The first purpose is that we want to use the results in \citet{gupta21b}, which analyzes the statistical bias of the UMB. Their proofs use the existence of the density of $f_w(x)$, so Assumption~\ref{asm_1} cannot be eliminated. {\bf ii) :} The other purpose is that by Assumption~\ref{asm_1}, we want to use the fact that $\{f_w(x_m)\}_{m=1}^{n_\mathrm{te}}$ in $x_m\in S_\mathrm{te}$ takes  the distinct values almost surely (same for $\{f_w(x_m)\}_{m=1}^{n}$ in $x_m\in S$).

Regarding {\bf i)}, we used results in \citet{gupta21b} to prove the result of UMB in Eq.~\eqref{stat_bias_test_absolute} of Theorem~\ref{statistical_gap_without} and the result of UMB of Theorem~\ref{binning_bias_without}. Thus, the results using these theorems require Assumption~\ref{asm_1}. They correspond to the results related to the bias of UMB. So the results of UWB essentially do not require this assumption. 

The situation becomes complicated when considering the generalization error analysis. Our analysis uses the IT-based approach and does not use the results in \citet{gupta21b}, so we do not need Assumption~\ref{asm_1} regarding {\bf i)}. However, if all $\{f_w(x_m)\}_{m=1}^{n}$ in $x_m\in S$ takes the same value, we cannot construct the bins in UMB. So when considering the training data reuse, we need Assumption~\ref{asm_1} to construct the bins of UMB. However, we remark that we can replace Assumption~\ref{asm_1} with the assumption that ``we assume that $\{f_w(x_m)\}_{m=1}^{n}$ in $x_m\in S$ takes distinct values almost surely''.

When considering the UWB, we do not suffer from such troubles since we simply split the interval $[0,1]$ with equal width as the $b$-th interval is given as $((b-1)/B,b/B]$. However, there might be a chance that all the $\{f_w(x_m)\}_{m=1}^{n_\mathrm{te}}$ in $x_m\in S_\mathrm{te}$ takes $b/B$, then the coefficients of the bound changes. Recall that our proof uses the bounded difference property when upper bounding the exponential moment, for example, in Eq.~\eqref{eq_difference_c_i} and Eq.~\eqref{eq_upper_bound}. That estimation is based on the fact that $\{f_w(x_m)\}_{m=1}^{n}$ in $x_m\in \str$ takes different values. So if all the $f$ takes the same value, the upper bound of the bounded difference will change, which results in the different coefficients in our bound, although we can proceed with the proof in the same way.

For these reasons, we decided to impose Assumption~\ref{asm_1} for all the statements. As we discussed above, if we focused on the specific setting, such as UWB and UMB, then there is room to eliminate or replace the assumption. 

Next, we discuss the necessity of Assumption~\ref{asm_lip}. Estimating TCE involves nonparametric regression of $\mathbb{E}[Y|f(X)=v]$. For finite samples, smoothness assumptions like Lipschitz continuity are required; without them, small changes in $v$ could cause large variations in label outcomes, making estimation impossible \cite{li2022minimax}. Such smoothness assumptions are standard in nonparametric regression, including kernel-based ECE. Therefore, without these assumptions, increasing the sample size would not ensure that training (or test) ECE converges to TCE. As noted in our minimax lower bound discussion, the absence of smoothness ($\beta\to 0$) leads to increasing bias. 

In practice, Assumption~\ref{asm_lip} is reasonably mild. For example, if label distributions follow a Bernoulli distribution with the mean depending on the input $x$, Assumption~\ref{asm_lip} is satisfied \cite{Yannis1988}. This is a relatively weak assumption in binary classification. Moreover, existing studies have shown that many common benchmark datasets are consistent with this assumption (e.g., \cite{zadrozny2002transforming}). Additionally, our numerical experiments with the data used in this study confirmed that the conditions for smoothness, such as non-diverging first derivatives, are indeed satisfied (see Figure~\ref{fig:check_lipschitz} in Appendix~\ref{app:check_lipschitz}). This supports the robustness of our assumptions and the applicability of our methods in real-world scenarios.

Finally, regarding the assumption $n_e\geq 2B$, it is crucial for UMB, as it guarantees the proper construction of bins. In UMB, we first use $B$ samples to build the bins. We then partition the remaining $n_e-B$ samples evenly across these bins. If $n_e\geq 2B$, we have $n_e-B \geq B$, preventing equal distribution of samples, thereby rendering UMB inapplicable. Conversely, UWB does not require this assumption since it divides the $[0,1]$ interval into equal-width bins. 

\subsection{Discussion about our proof techniques}\label{eq_discussion_our_technique}
Here we discuss our proof techniques. In our proof for UWB, our proof technique does not heavily depend on the binning construction method, so we can apply our technique to other than UWB and UMB. The important ingredients are the boundedness of $y$ and $f(x)$ and the property of the indicator function. However, our proof builds on the reformulation of Eqs.~\eqref{eq_training_loss_bin} and \eqref{eq_test_loss_bin} this can be a restriction for some settings. For example, when we consider higher-order ECEs defined as
\begin{align}
\mathrm{ECE}(f_w,S_e)\coloneqq \sum_{i=1}^{B}p_{i}|\bar{f}_{i,S_e}-\bar{y}_{i,S_e}|^p,
\end{align}
with $p>1$, which can not be reformulated like Eqs.~\eqref{eq_training_loss_bin} and \eqref{eq_test_loss_bin}, and thus our proof technique cannot be applicable. 

On the other hand, as we introduce in the above, the technique of \citet{gupta21b} can apply to ECEs with $p>1$. However, the drawback is that their technique can only apply to UWB without training data reuse.

Next, we discuss our results with \citet{wang2023}.
The eCMI appearing in Theorem~\ref{statistical_bias_reusing} closely aligns with the $\Delta L$ bound (loss difference) of Theorem 1 of reference \citet{wang2023}.
Specifically, the eCMI term in Theorem~\ref{statistical_bias_reusing} is not based on the value of ECE itself. Instead, it is derived from the difference between the test data ECE and the training data ECE. This approach aligns with the $\Delta L$ (loss difference) as shown in Theorem 1 of reference \citet{wang2023}.

However, extending our bounds using the techniques of \citet{wang2023} presents significant challenges. The $\Delta L$ bound in \citet{wang2023} defines the loss gap for a single data index $i$ as $\Delta L_i$ and utilizes the symmetry of each individual index to derive fast rate bounds, as demonstrated in Theorem 4.3. In contrast, our bound requires treating all $n$ indices simultaneously. This necessity arises because ECE is a nonparametric estimator that uses all $n$ indices, unlike usual losses such as the 0-1 loss, where an estimator can be constructed using a single index. Consequently, the techniques from \citet{wang2023} that utilize the symmetry of a single index are not applicable to our context, making it difficult to employ the methods from \citet{wang2023}.

\subsection{Comparison of our bound with existing and trivial bounds}
Here we discuss the order of our generalization error bias in more depth. Recall that our Theorem~\ref{statistical_bias_reusing} is
\begin{align}
\!\!\scalebox{0.98}{$\displaystyle \Ex_{R,\str,S_\mathrm{te}}|\mathrm{ECE}(f_W,S_\mathrm{te})\!-\!\mathrm{ECE}(f_W, \str)|\leq \sqrt{\frac{8(\mathrm{eCMI}+B\log 2)}{n}}$},
\end{align}
and the important property is that the bound is of order $\calo(\sqrt{B/n})$ if we neglect the order of eCMI.

Here we see that when we use existing Theorem~\ref{eq_ecmi_original} directly, the resulting bound is $\calo(B/\sqrt{n})$. Recall that
\begin{align}
&\mathrm{ECE}(f, \str)=\sum_{i=1}^{B}|\Ex_{(X,Y)\sim \hat{S}_{\mathrm{tr}}}(Y-f_w(X))\cdot \mathbbm{1}_{f_w(X)\in I_i}|,\notag\\
&\tce(f_\cali)=\sum_{i=1}^{B}|\Ex_{(X,Y)\sim \cald}(Y-f_w(X))\cdot \mathbbm{1}_{f_w(X)\in I_i}|.\notag
\end{align}
where $\hat{S}_{\mathrm{tr}}$ is the empirical distribution of the training dataset.
Thus
\begin{align}
&\Ex_{\str,S_\mathrm{te},R}|\sum_{i=1}^{B}|\Ex_{(X,Y)\sim \cald}(Y-f_W(X))\cdot \mathbbm{1}_{f_W(X)\in I_i}|-\sum_{i=1}^{B}|\Ex_{(X,Y)\sim\hat{S}_{\mathrm{tr}}}(Y-f_W(X))\cdot \mathbbm{1}_{f_W(X)\in I_i}||\notag \\
&\leq \Ex_{\str,S_\mathrm{te},R}|\sum_{i=1}^{B}|\Ex_{(X,Y)\sim \cald}(Y-f_W(X))\cdot \mathbbm{1}_{f_W(X)\in I_i}-\Ex_{(X,Y)\sim\hat{S}_{\mathrm{tr}}}(Y-f_W(X))\cdot \mathbbm{1}_{f_W(X)\in I_i}||\notag \\
&=\Ex_{\str,S_\mathrm{te},R}\sum_{i=1}^{B}|\Ex_{(X,Y)\sim \cald}(Y-f_W(X))\cdot \mathbbm{1}_{f_W(X)\in I_i}-\Ex_{(X,Y)\sim\hat{S}_{\mathrm{tr}}}(Y-f_W(X))\cdot \mathbbm{1}_{f_W(X)\in I_i}|\notag \\
&\leq \sum_{i=1}^B\sqrt{\frac{2}{n}(\mathrm{eCMI}(\tilde{l}_i)\log 2)}\notag \\
&\leq B\sqrt{\frac{2}{n}(\mathrm{fCMI}+\log 2)}\label{eq_trivial_bound},
\end{align}
where we used the triangle inequality in the second line and from the third line to the fourth line, we applied Eq.~\eqref{eq_absolute_CMI} in Theorem~\ref{eq_ecmi_original} by fixing the binning index $i$ and $\ecmi$ is where $\mathrm{eCMI}(\tilde{l}_i)=I(\tilde{l}_i;U|\tilde{Z})$ 
\begin{align}
&\tilde{l}_i(U,R,\tilde{Z})\notag\\
&\coloneqq|\frac{1}{n}\sum_{m=1}^{n}(\tilde{Y}_{m,\bar{U}_m}-f_{\mathcal{A}(\tilde{Z}_U,R)}(\tilde{X}_{m,\bar{U}_m}))\cdot \mathbbm{1}_{f_{\mathcal{A}(\tilde{Z}_U,R)}(\tilde{X}_{m,U_m})\in I_i}\notag\\
&\quad\quad-\frac{1}{n}\sum_{m=1}^{n}(\tilde{Y}_{m,U_m}
-f_{\mathcal{A}(\tilde{Z}_U,R)}(\tilde{X}_{m,U_m}))\cdot \mathbbm{1}_{f_{\mathcal{A}(\tilde{Z}_U,R)}(\tilde{X}_{m,U_m})\in I_i}|
\end{align}

Thus, this proof is simple compared to our proof of Theorem~\ref{statistical_bias_reusing}, but results in a worse dependency on $B$. In our proof, we used the property of the binning and indicator function of the loss function explicitly, which results in better dependency. On the other hand, when deriving Eq.~\eqref{eq_trivial_bound}, we do not use such properties, and thus results in worse dependency on $B$.

\subsection{Discussion about the order of eCMI and fCMI}\label{app_eCMI}
Here we discuss when eCMI and fCMI can be controlled theoretically.

As discussed in Section~\ref{sec_analysisbias}, from data processing inequality \cite{cover2012element}, we have that $\ecmi\leq \fcmi\leq I(W;S)$. Since $\fcmi$ does not depend on $B$, and the dependency on $B$ of $\fcmi,\ecmi$ is not a problem.

Here, we cite the classical result about $I(W;S)$. 
\citet{Clarke1994JeffreysPI} (see also \citet{10.1109/TIT.1984.1056936, haussler1997mutual}) clarified that the growth rate of MI can be controlled as follows: if $w$ takes a value in a $d$-dimensional compact subset of $\mathbb{R}^d$ and $p(y|x;w)$ is smooth in $w$, then as $n\to\infty$, we have 
$$
I(W;S)=\frac{d}{2}\log\frac{n}{2\pi e}+h(W)+\Ex \log\mathrm{det}J+o(1),
$$
where $h(W)$ is the differential entropy of $W$, and $J$ is the Fisher information matrix of $p(Y|X; W)$.

Moreover, \citet{steinke20a} introduced the CMI that satisfies $\fcmi\leq \mathrm{CMI}$ discussed the CMI is upper bounded by the various notions of stability. For example, if the training algorithm satisfies $\sqrt{2\epsilon}$-differentially private (DP) algorithm, then CMI is upper-bounded by $\epsilon n$. So this $\epsilon$ is controlled by the DP algorithm, then our eCMI can also be controlled appropriately. For example, \citet{Xu2017} discussed that the Gibbs algorithm satisfies $\calo(1/n)$-DP when the loss takes value $[0,1]$. Thus, such Gibbs algorithms can control our eCMI moderately.

\citet{steinke20a} also clarified that if the algorithm is $\delta$ stable in total variation distance, then CMI is upper bounded by $\delta n$. From the stability perspective, \citet{mou18a}
showed that SGLD satisfies $\mathcal{O}(\frac{T}{n^2})$ stability in the Hellinger distance and $T$ is the iteration number of the SGLD algorithm. Thus, this implies that when $T$ is small, the eCMI of SGLD can be very small. Recently, \citet{Farghly2021} and \citet{futami2023timeindependent} showed that under the certain non-convexity assumption, SGLD satisfies the Wasserstein and KL stability of the order of $\mathcal{O}(1/n)$, which also result in the eCMI of SGLD.

\subsection{Relation to the existing study of calibration}
Here we discuss other existing work, which is not shown in the main paper mainly due to the space limitation. First, we compare our result and existing analysis by \citet{gupta21b} in Appendix~\ref{eq_discussion_our_technique} in detail.

We discuss in Appendix~\ref{app_test_trivial_sum} how our proof technique improves the trivial dependency of $\calo(B)$ to our $\calo(\sqrt{B})$ for the ECE with the test dataset. We show a similar discussion for the ECE with training dataset reuse.

\citet{kulynych2022you} also discussed the relation between generalization and calibration. However, there are two distinct differences from theirs; One is that they only discuss the statistical bias not consider the binning bias. The other is that their statistical bias is of $\calo(B)$ while ours is $\calo(\sqrt{B})$, which is a significant improvement.

\citet{carrell2022calibration} numerically evaluated the generalization gap of the calibration, which is close to the statistical bias in our training reuse setting. They focused on the numerical aspects and statistical bias, while ours focused on the theoretical analysis and focuses on both the binning and statistical biases.

\citet{gruber2022better} studied the various statistics related to calibration error. While our study rigorously analyzes both the binning and statistical bias, their work focuses on the asymptotic settings and has not derived the dependency of $B$ and $n$.

There is an additional comparison of our analysis with \citet{gupta21b} in Appendix~\ref{app_additional_discuss}

\subsection{Discussion about the lower bound}\label{discuss}
Here, we discuss the lower bound of the bias when estimating the TCE from the following two viewpoints; the $\tce$ estimation can be seen as (i) \emph{estimating a parameter in each bin}, and (ii) \emph{estimating a one-dimensional function in a pointwise}. 
To understand this, we start deriving the following lower bound by using Jensen inequality:
\begin{align}\label{eq_ref_lower_bound}
\mathrm{TCE}(f_w) \geq |\Ex[Y-f_w(X)] |.
\end{align}
This bound suggests that estimating $\Ex[Y]=f_w(x)$ achieves the small TCE. 
From the classical theory, for any distribution $\cald$, the lower bound of a parameter estimation bias is  $1/\sqrt{n}$ \cite{wainwright_2019}. Eq.~\eqref{eq_ref_lower_bound} corresponds to the setting of $B=1$ bin to estimate the conditional expectation in the ECE. 
With these observations, we discuss the statistical bias of UMB. 
In UMB, we estimate the conditional expectation using $m=n/B$ samples in each bin and this is a parameter estimation problem. Thus this leads to $\calo(1/\sqrt{m})$ bias from the classical theory. On the other hand, we derived the statistical bias $\calo(\sqrt{B/n})$ for UMB, thus it is optimal when viewed as the parameter estimation.

However, using a constant function $f_w(x)=\Ex[Y]$, which achieves the small TCE, is useless in practice.
Our original motivation is to measure the perfect calibration, which requires estimating the conditional expectation $\Ex[Y|f_w(x)=p]$ for the interval $p\in [0,1]$ in a pointwise, and this is a function estimation problem. Thus, the bins used in the ECE adjust that whether estimating ECE is close to the parameter estimation or the function estimation. Then this trade-off is captured by the binning bias in our analysis. So the total bias represents such a trade-off whether the problem is the parameter or function estimation.

From the classical theory of Fano’s method \cite{wainwright_2019}, when we estimate the Lipschitz function with a closed interval, it achieves a lower bound  $\calo(1/n^{d+2})$, with $d$ as the input dimension of the function.
In the calibration, the input is the probability $p$ and thus $d=1$. Since we derived that the bias of the ECE is $\calo(1/n^{1/3})$, it achieves the minimax rate if the underground conditional expectation $\Ex[Y|v=p]$ satisfies Lipsthitz continuity. 
 
Moreover, when the $d$-dimensional target function satisfies stronger smoothness assumption, $\beta$-H\"older continuity, we suffer the bias of $\calo(1/n^{\frac{\beta}{2\beta+d}})$ \cite{Tsybakov2009}. So if $\Ex[Y|v=p]$ satisfies such conditions, the lower bound of the bias should be $\calo(1/n^{\frac{\beta}{2\beta+1}})$, however as discussed in Appendix~\ref{app_holder_discussion}, the binning method cannot achieve this rate and thus cannot utilize the smoothness of the data distribution.

\subsection{Additional discussion about the lower bound}\label{lower_bound_discuss}
Here we discuss additional points regarding the TCE bias estimation. Conditioned on $W=w$, let $v=f_w(x)$ and the distribution induced by $p(x)$ by $f_w(x)$ over $v$ as $p(v)$. We express the support of $p(v)$ as $\calv\subset [0,1]$. Let $g(v)=\Ex[Y=1|f_w(x)=v]=\mathrm{Pr}[Y=1|v]$. Let $\mathcal{G}$ be a class of candidate conditional probability functions over $\calv$ and every candidate $g\in\mathcal{G}$ satisfies $0\leq g(v)\leq 1$ for all $v\in\calv$. Let us write the L2 minimax error as
\begin{align}
    R(\mathcal{G};n)\coloneqq \inf_{\hat{g}}\sup_{g\in\mathcal{G}}\Ex|\hat{g}(V)-g(V)|^2,
\end{align}
where $\hat{g}$ is over all valid estimators for $g$ using $n$ samples $(X_m,Y_m)_{m=1}^n$ and the expectation is taken with respect to true $g$. The L2 and higher order minimax rate has been shown in \cite{yang1999minimax} by using the Yang-Barron method \cite{Yang-Barron}, which is the mutual information-based approach stemming from Fano's method. To cite this result, we need additional assumptions that control the mutual information
\cite{yang1999minimax};
\begin{assumption}\label{asm_boundedness_density}
    Assume that $\mathcal{G}$ has at least one member $g^*$ that is bounded away from 0 and 1, i.e., there exist constants $0<c_1\leq c_2<1$ such that $c_1\leq g^*\leq c_2$.
\end{assumption}
Here we assume that $\mathcal{G}$ is a set of functions that satisfies the $\beta$-Lipschitzness; for any $g\in\mathcal{G}$, there exists positive constant $C$ that satisfies $|g(v)-g(v')|\leq C|v-v'|^\beta$ for any $v,v'\in\calv$.

According to Lemma 1 in  \cite{yang1999minimax}, if $\beta$-Lipschitzness and we have
\begin{align}
    R(\mathcal{G};n)\succeq  n^{-2\beta/(2\beta+1)}.
\end{align}
Thus, if we consider the histogram-based estimator for $\hat{g}$
\begin{align}
\hat{g}_{\mathrm{bin}}(v)\coloneqq\sum_{i=1}^B\bar{y}_{i,S_{\mathrm{te}}}\cdot\mathbbm{1}_{v\in I_i},
\end{align}
which is used in the definition of the binning ECE, then
\begin{align}\label{l2_minimax}
    \sup_{g\in\mathcal{G}}\Ex|\hat{g}_{\mathrm{bin}}(V)-g(V)|^2\geq \inf_{\hat{g}}\sup_{g\in\mathcal{G}}\Ex|\hat{g}(V)-g(V)|^2 \succeq  n^{-2\beta/(2\beta+1)}.
\end{align}
Thus we can obtain the lower bound of the recalibrated function from true conditional expectation.

Here we discuss how this lower bound is related to the total bias of the ECE. Let us define the semi-metric
\begin{align}
    \rho(g,g')\coloneqq |\Ex[|g(V)-V|]-\Ex[|g'(V)-V|]|.
\end{align}
We can easily confirm that $\rho$ satisfies the positivity and triangle inequality, so this is the semi-metric. We can show the following relation for the total bias and this $\rho$ as follows; recall that we can express the TCE as
\begin{align}
    \mathrm{TCE}(f_w)=\Ex[|g(V)-V|]
\end{align}
and
\begin{align}
    \Ex_{\ste}\mathrm{ECE}(f_w,S_\mathrm{te})=\Ex[|(\hat{g}_{\mathrm{bin}}(V)-\bar{V}(V)+V)-V|]
\end{align}
where
\begin{align}
    &\bar{V}(v)=\sum_{i=1}^B\bar{v}_{i,S_{\mathrm{te}}}\cdot\mathbbm{1}_{v\in I_i},\quad \bar{v}_{i,\ste}\coloneqq\frac{1}{|I_i|}\sum_{m=1}^{\nte} \mathbbm{1}_{v_m\in I_i} v_m\notag 
\end{align}
where $v_m=f_w(x_m)$. Thus, the total bias is given as
the total bias
\begin{align}\label{total_bias_metric}
\Ex_{\ste}\bias_\tot(f_w,S_\mathrm{te})&= \Ex_{\ste}|\mathrm{TCE}(f_w)-\mathrm{ECE}(f_w,S_\mathrm{te})|\notag \\
&\geq |\mathrm{TCE}(f_w)-\Ex_{\ste}\mathrm{ECE}(f_w,S_\mathrm{te})|\notag \\
&= |\Ex_V[|g(V)-V|]-\Ex_V[|(\hat{g}_{\mathrm{bin}}(V)-\bar{V}(V)+V)-V|]|=\rho(g,\tilde{g})
\end{align}
where $\tilde{g}\coloneqq \hat{g}_{\mathrm{bin}}(V)-\bar{V}(V)+V$. Thus, by studying the risk under $\rho$, we can study the total bias.

On the other hand, this $\rho$ is smaller than the L1 distance
\begin{align}
    \EX|\hat{g}(V)-g(V)|\geq |\Ex[|\hat{g}(V)-V|]-\Ex[|V-g(V)|]|=\rho(g,\hat{g})
\end{align}
by the triangle inequality. Note that L1 distance is smaller than L2 distance, the above minimax result for $\hat{g}$ and $g$ in Eq.~\eqref{l2_minimax} is insufficient to understand the bias of TCE and ECE from the lower bound. Instead, we introduce different lower bound given as follows. Conditioned on $W=w$, under the $\beta$-Lipschitzness for $\Ex[Y|f_w(x)=v]$ and Assumption~\ref{asm_boundedness_density}, we have
\begin{align}
    \sup_{g\in\mathcal{G}}\Ex[\left||g(V)-V|-|\hat{g}_{\mathrm{bin}}(V)-\bar{V}(V)|\right|^2] \succeq  n^{-2\beta/(2\beta+1)}.
\end{align}
And We can further obtain
\begin{align}\label{eq}
    \sup_{g\in\mathcal{G}}\Ex[\left||g(V)-V|-|\hat{g}_{\mathrm{bin}}(V)-\bar{V}(V)|\right|_\infty] \succeq  n^{-\beta/(2\beta+1)}.
\end{align}
where $\Ex|\cdot|_\infty$ is the maximum of the integrand. These lower bounds imply the pointwise lower bound of the difference of the conditional expectation and $V$. Using these lower bounds, we study the difficulty of estimating the TCE at each $V=v=f_w(x)$.
\begin{proof}
First, we focus on the relation
    \begin{align}
        \sup_{g\in\mathcal{G}}\Ex[\left||g(V)-V|-|\hat{g}_{\mathrm{bin}}(V)-\bar{V}(V)|\right|^2]\geq \inf_{\hat{g}}\sup_{g\in\mathcal{G}}\Ex[\left||g(V)-V|-|\hat{g}(V)-\bar{V}(V)|\right|^2].
    \end{align}
We then estimate the minimax L2 estimation error under the semimetric $\tilde{\rho}(g,g')\coloneqq \Ex||g(V)-V|-|g'(V)-V||^2$. The proof is the same as that of  Lemma 1 in  \cite{yang1999minimax}, which uses Yang Barron method. The difference is to derive the packing number under the semimetric $\tilde{\rho}$ not the L2 distance. However $\tilde{\rho}$ is nothing but the shifted version of the L2 distance. Thus the order of the packing number is the same as that of L2 distance. Note that since $\mathcal{G}$ is the set of positive functions thus, taking the absolute of $g(V)-V$ does not change the order. Then by using Fano's method, we obtain the result for L2 lower bound. The version of $\|\cdot\|_\infty$ can be proved in the same way. 
\end{proof}
We finally remark that the above pointwise gap is larger than the total bias
\begin{align}
        &\sqrt{\Ex[\left||g(V)-V|-|\hat{g}_{\mathrm{bin}}(V)-\bar{V}(V)|\right|^2]} \\
        &\geq \Ex_{\ste}[\left|\Ex|g(V)-V|-\Ex|\hat{g}_{\mathrm{bin}}(V)-\bar{V}(V)|\right|]=\Ex_{\ste}\bias_\tot(f_w,S_\mathrm{te})
\end{align}
where we used the relation of that L2 is larger than L1 and used the Jensen inequality.

\subsection{Relation to the multiclass settings}
Although our study focuses on binary classification, we can  extend it to multi-class settings. In existing work \cite{kumar2019verified,gruber2022better}, the top-label calibration error (top ECE) has been proposed as a measure for multi-class calibration. For instance, in a $K$-class classification problem, we obtain predictions of each label by the final softmax layer in neural networks.
We assume that $f_w(x)\in \mathbb{R}^K$ predicts the label by $C:=\mathrm{argmax}_k f_w(X)_k$, where $f_w(X)_k$ represents the model's confidence of the label $k \in [K]$.
 Then top ECE is defined using the highest prediction probability output by $f_w$:
$\mathbb{E}|P(Y=C| f_w(X)_C)-f_w(X)_C|$. By considering binning only for the top score, we can compute the ECE in a similar way as binary classification. In this case, since we focus only on the top label, we can treat top-binning ECE in the same way as binary classification, leading to the same generalization and total bias bounds.
Our results, therefore, offer flexibility to analyze the widely used top-label calibration error in multi-class settings.

\begin{table}[t]
\centering
\caption{Model architecture of CNN on the MNIST experiments.}
\label{table:model_cnn}
\scalebox{.9}{
\begin{tabular}{llllll}
\hline
\multicolumn{6}{c}{\textbf{Model architecture of CNN} (same as \citet{harutyunyan2021})}                                                                                                       \\ \hline \hline
\multicolumn{1}{l|}{(1st layer) Convolutional} & \multicolumn{5}{l}{$32$ filters, $4 \times 4$ kernels, stride $2$, padding $1$, batch normalization, ReLU}  \\ \hline
\multicolumn{1}{l|}{(2nd layer) Convolutional} & \multicolumn{5}{l}{$32$ filters, $4 \times 4$ kernels, stride $2$, padding $1$, batch normalization, ReLU}  \\ \hline
\multicolumn{1}{l|}{(3rd layer) Convolutional} & \multicolumn{5}{l}{$64$ filters, $3 \times 3$ kernels, stride $2$, padding $0$, batch normalization, ReLU}  \\ \hline
\multicolumn{1}{l|}{(4th layer) Convolutional} & \multicolumn{5}{l}{$256$ filters, $3 \times 3$ kernels, stride $1$, padding $0$, batch normalization, ReLU} \\ \hline
\multicolumn{1}{l|}{Fully connected}           & \multicolumn{5}{l}{$128$ units, ReLU}                                                                       \\ \hline
\multicolumn{1}{l|}{Fully connected}           & \multicolumn{5}{l}{$2$ units, Linear activation}                                                           
\end{tabular}
}
\end{table}

\section{Experimental settings}
\label{app:exp_settings}
In this section, we summarize the details of our experiments conduced in Sections~\ref{sec_proposed_analysis} and \ref{sec:experiments}.

\subsection{Experiments on the synthetic dataset}
\label{subsec:exp_settings_toy}
For the experiments on the synthetic dataset, we follow~\citet{zhang20k} and assume that the distributions of the label $Y$ and the input data $X$ are as follows:
\begin{align}\label{eq:prob_label_synth}
P(Y=1) = P(Y=0) = \frac{1}{2},
\end{align}
and
\begin{align}\label{eq:cond_prob_input_synth}
P(X=x|Y=1) = \mathcal{N}(x; -1, 1), \quad P(X=x|Y=0) = \mathcal{N}(x; 1, 1),
\end{align}
where $\mathcal{N}(x; m, \sigma)$ is the Gaussian distribution with mean $m$ and standard deviation $\sigma$. Then, the probability of $Y=1$ given $x$ can be expressed as
\begin{align*}
P(Y=1|X=x) = \frac{1}{1+\exp(2x)}.
\end{align*}

We also define the prediction models as follows:
\begin{align*}
z = f_{w}(x) = (z_1, z_2) =
\bigg(\frac{1}{1+\exp(-\beta_{0}-\beta_{1}x)}, \frac{\exp(-\beta_{0}-\beta_{1}x)}{1+\exp(-\beta_{0}-\beta_{1}x)}  \bigg),
\end{align*}
where $w = \{\beta_{0}, \beta_{1}\}$ are parameters.

Under these settings, we can calculate the closed-form of the canonical calibration function $\pi(z) = (\pi_1(z), \pi_2(z))$, where
\begin{align*}
\pi_1(z) = \bigg(1 + \exp\bigg[-2\frac{\beta_0 + \log(1/z_1 - 1)}{\beta_1}\bigg] \bigg)^{-1}, \quad \pi_2(z) = 1 - \pi_1(z).
\end{align*}
Due to this closed-form calibration function, we can estimate the TCE based on Monte Carlo integration. In this paper, we use $10^{6}$ random samples generated from Eqs.~\eqref{eq:prob_label_synth} and \eqref{eq:cond_prob_input_synth} and evaluate the sample average value of $|z_1 - \pi_1(z)|$ as the estimator of TCE.
Furthermore, we estimated the Lipschitz constant $L$ by taking the maximum values of the gradient of $\pi_{1}(z)$.

\subsection{MNIST and CIFAR experiments}
\label{subsec:exp_settings_other}

\paragraph{Model architectures, datasets, model training process, and implementation:}
We summarize the details of model architectures for CNN, datasets, and model training process in Tables~\ref{table:model_cnn}-\ref{table:exp_setup_cifar}.
Our experiments were conducted by adapting the code from \citet{harutyunyan2021}~\footnote{\url{https://github.com/hrayrhar/f-CMI}} to suit our experimental configurations. Consequently, the datasets utilized in this study were normalized in accordance with the implementation provided in the referenced repository.
We used NVIDIA GPUs with $32$GB memory (NVIDIA DGX-1 with Tesla V100 and DGX-2) for MNIST (SGLD) and CIFAR-10 experiments.
We also used CPU (Apple M1) with $16$GB memory for the other experiments.

\begin{table}[t]
\centering
\caption{Experimental settings on MNIST~\cite{LeCun89}.}
\label{table:exp_setup_mnist}
\scalebox{.7}{
\begin{tabular}{llllll}
\hline
\multicolumn{6}{c}{\textbf{Experimental setup for MNIST experiments}}                                                                                                  \\ \hline \hline
\multicolumn{1}{l|}{Task}                               & \multicolumn{5}{l}{$4$ vs $9$ classification}                                                                    \\ \hline
\multicolumn{1}{l|}{Model}                              & \multicolumn{5}{l}{CNN with four layers}                                                                     \\ \hline
\multicolumn{1}{l|}{\multirow{2}{*}{Optimizer}}         & \multicolumn{5}{l}{Adam with $0.001$ learning rate and $\beta_{1}=0.9$}                                      \\
\multicolumn{1}{l|}{}                                   & \multicolumn{5}{l}{SGLD with $0.004$ learning rate (decaying by a factor $0.9$ after each $100$ iterations)} \\ \hline
\multicolumn{1}{l|}{Batch size}                         & \multicolumn{5}{l}{$128$ (for Adam) or $100$ (for SGLD)}                                                     \\ \hline
\multicolumn{1}{l|}{Num. of training samples}           & \multicolumn{5}{l}{$[75, 250, 1000, 4000]$}                                                                  \\ \hline
\multicolumn{1}{l|}{Num. of epochs}                     & \multicolumn{5}{l}{$200$}                                                                                    \\ \hline
\multicolumn{1}{l|}{Num. of samples for CMI estimation} & \multicolumn{5}{l}{$5$}                                                                                      \\ \hline
\multicolumn{1}{l|}{Num. of samplings for $U$}          & \multicolumn{5}{l}{$10$}                                                                                     \\ \hline
\multicolumn{1}{l|}{Num. of recalibration dataset (existing methods)}          & \multicolumn{5}{l}{$100$}                                  
\end{tabular}
}
\end{table}

\begin{table}[th]
\centering
\caption{Experimental settings on CIFAR-10~\cite{Krizhevsky09}.}
\label{table:exp_setup_cifar}
\scalebox{.7}{
\begin{tabular}{llllll}
\hline
\multicolumn{6}{c}{\textbf{Experimental setup for CIFAR experiments}}                                                                                                  \\ \hline \hline
\multicolumn{1}{l|}{Task}                               & \multicolumn{5}{l}{dog-or-not classification}                                                                    \\ \hline
\multicolumn{1}{l|}{Model}                              & \multicolumn{5}{l}{ResNet-$50$ pretrained on ImageNet}                                                                     \\ \hline
\multicolumn{1}{l|}{\multirow{2}{*}{Optimizer}}         & \multicolumn{5}{l}{SGD with $0.01$ learning rate and $0.9$ momentum}                                      \\
\multicolumn{1}{l|}{}                                   & \multicolumn{5}{l}{SGLD with $0.01$ learning rate (decaying by a factor $0.9$ after each $300$ iterations)} \\ \hline
\multicolumn{1}{l|}{Batch size}                         & \multicolumn{5}{l}{$64$}                                                     \\ \hline
\multicolumn{1}{l|}{Num. of training samples}           & \multicolumn{5}{l}{$[500, 1000, 5000, 20000]$}                                                                  \\ \hline
\multicolumn{1}{l|}{Num. of epochs}                     & \multicolumn{5}{l}{$40$}                                                                                    \\ \hline
\multicolumn{1}{l|}{Num. of samples for CMI estimation} & \multicolumn{5}{l}{$2$}                                                                                      \\ \hline
\multicolumn{1}{l|}{Num. of samplings for $U$}          & \multicolumn{5}{l}{$5$}                                                                                     \\ \hline
\multicolumn{1}{l|}{Num. of recalibration dataset (existing methods)}          & \multicolumn{5}{l}{$100$}
\end{tabular}
}
\end{table}

\paragraph{Mutual information estimation:}
We cannot estimate the mutual information $I(l(\mathcal{A}(\tilde{Z}_S, R), \tilde{Z}, B); S | \tilde{Z})$ in our bounds using the approach of \citet{harutyunyan2021} and \citet{hellstrom2022a}. This is because our loss function $l(\mathcal{A}(\tilde{Z}_S, R), \tilde{Z}, B)$ assumes continuous values, while these works specifically focus on discrete random variables, such as the output values of $0$-$1$ loss or the predicted labels of classifiers.
Hence, we developed a plug-in estimator for $I(l(\mathcal{A}(\tilde{Z}_S, R), \tilde{Z}, B); S | \tilde{Z})$~\cite{Kozachenko87, Kraskov04, Ross14}, which is computed using estimators for the probability density of $l(\mathcal{A}(\tilde{Z}_S, R))$ and $\str$, as well as their joint probability density, employing $k$-nearest-neighbor-based density estimation~\cite{Loftsgaarden65}.
The estimation strategy is incorporated into the \texttt{sklearn.feature\_selection.mutual\_info\_classif} function (we refer to the following link: \url{https://scikit-learn.org/stable/modules/generated/sklearn.feature_selection.mutual_info_classif.html}).
We set $k=3$ following the default setting of this function and \citet{Kraskov04, Ross14}.

\paragraph{Standard deviation evaluation of our bounds:}
The standard deviation in Table~\ref{table:res_ece_bound_recab}
and Figures~\ref{fig:boundplot}-\ref{fig:boundplot_logscale}, which is almost unrecognizable due to its small value, are attributed to the randomness inherent in various experimental settings during model training, i.e., randomness of the training dataset and the initial model parameters.
For example, in the MNIST experiments in Table~\ref{table:exp_setup_mnist}, the standard deviation of our bound was evaluated under the $5 \times 10$ models.

\section{Additional experimental results}
\label{app:add_exp_results}
In this section, we show the additional results obtained from our experiments.

\subsection{Bound plot on UWB}
We show the results of our bound in Eq.\eqref{eq:bias_bound} using UWB, as shown in Figure\ref{fig:boundplot_uwb}, which was omitted from the main paper due to page limitations. 
As we discussed in Section~\ref{sec:experiments}, we can see the importance of the choice of $B$ to obtain nonvacuous bound values. 
We also observed that our optimal choice, $B = \lfloor n^{1/3}\rfloor$, is effective in obtaining nonvacuous bounds.

\begin{figure}[th]
    \centering
    \includegraphics[scale=0.19]{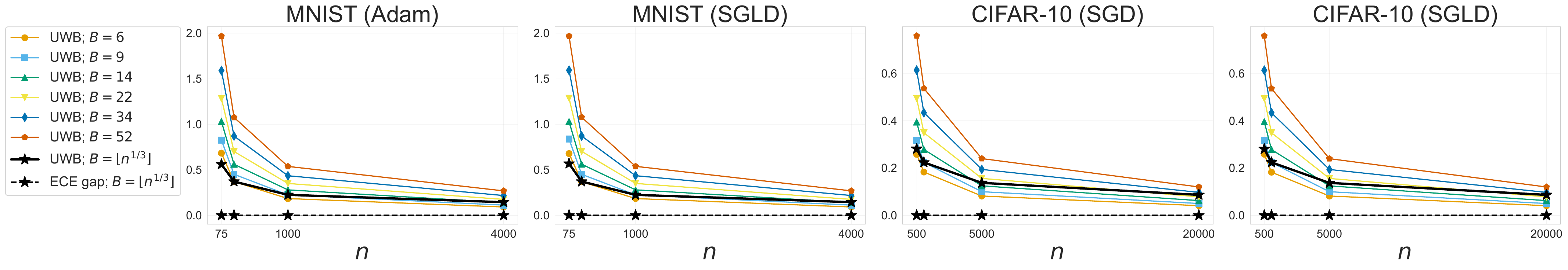}
    \caption{Behavior of the upper bound in Eq.~\eqref{eq:bias_bound} for various $B$ as $n$ increases (mean $\pm$ std.). For clarity, only the results using UWB are shown. The ECE gap is evaluated by estimating $\mathbb{E}_{R,S_{\mathrm{tr}},S_{\mathrm{te}}}[|\mathrm{ECE}(f_W,S_{\mathrm{te}})-\mathrm{ECE}(f_W, S_{{\mathrm{tr}}})|]$. The ECE gap is shown for $B = \lfloor n^{1/3} \rfloor$ since the change in $B$ did not result in significant differences.}
    \label{fig:boundplot_uwb}
\end{figure}

%\clearpage
\subsection{Bound plot on recalibration reusing training dataset}
We further show the plots of our bound for the recalibration scenario in Figure~\ref{fig:boundplot_recal}. 
The relationship between $n$, $B$, and bound values is similar to that observed in the non-recalibration case. 
Interestingly, the choice of optimal $B$ is crucial for obtaining a small bound value when we conduct recalibration.
\begin{figure}[th]
    \centering
    \includegraphics[scale=0.19]{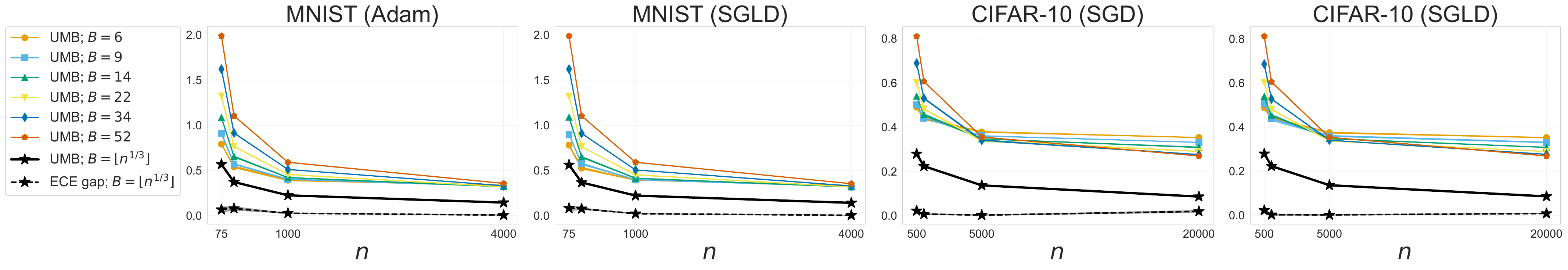}
    \caption{Behavior of the upper bound in Eq.~\eqref{eq:tight_bound_thm7} as $n$ increases for different number of bins (mean $\pm$ std.) when using UMB after recalibration.}
    \label{fig:boundplot_recal}
\end{figure}

\begin{figure*}[ht]
    \centering
    \includegraphics[scale=0.18]{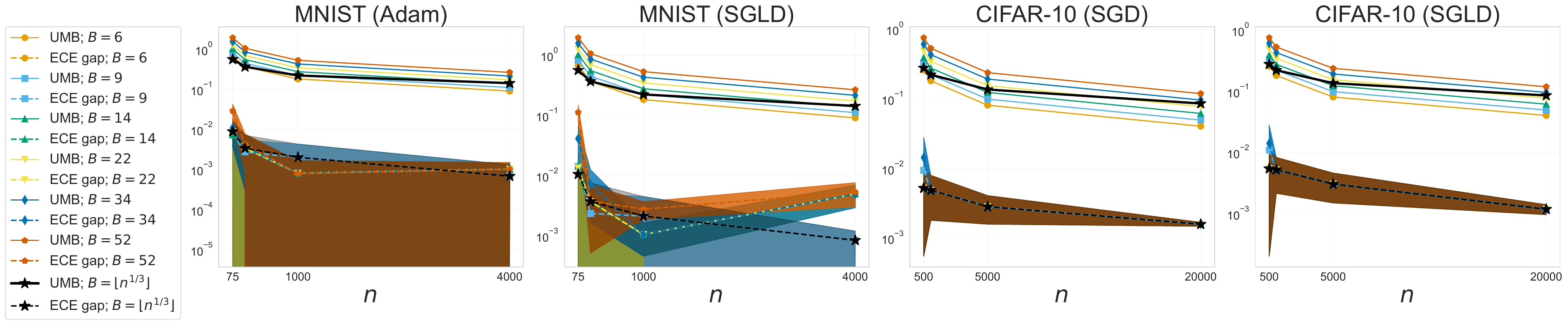}
    \caption{Behavior of the upper bound in Eq.~\eqref{eq:bias_bound} for various $B$ as $n$ increases (mean $\pm$ std.; log-scale) when UMB is used. The ECE gap is evaluated by estimating $\mathbb{E}_{R,S_{\mathrm{tr}},S_{\mathrm{te}}}[|\mathrm{ECE}(f_W,S_{\mathrm{te}})-\mathrm{ECE}(f_W, S_{{\mathrm{tr}}})|]$.
    These results show that the variance of the ECE gap obtained in non-optimal $B$ settings is large, while the ECE gap in settings based on the optimal $B$ is stable.}
    \label{fig:boundplot_logscale}
\end{figure*}
\subsection{Bound plot for the various number of bins}\label{app:bound_plot_various}
We examined the ECE gap for various bin sizes using the same setup as Figure~\ref{fig:boundplot}, and these results are presented in Figure~\ref{fig:boundplot_logscale}. 
We plotted them on a log scale to illustrate how the ECE gap and upper bound behave with different bin sizes. 
We found that sometimes bins other than the optimal can yield a better generalization gap.
However, the optimal bin size minimizes the total bias as stated in Theorem~\ref{thm_total_bias}, not necessarily the generalization gap as in Theorem~\ref{statistical_bias_reusing}.
On the other hand, the optimal was found to be numerically stable, although, in certain models, high variance was observed for certain bin sizes, with the ECE gap occasionally not decreasing as increases.

%\clearpage
\subsection{Empirical verification of Lipschitz continuous for \texorpdfstring{$\mathbb{E}[Y|f(X)]$}{TEXT}.}
\label{app:check_lipschitz}
As discussed in Appendix~\ref{app_discuss}, Assumption\ref{asm_lip} is generally mild in practice.
To assess the empirical plausibility of the Lipschitz continuity assumption in Assumption~\ref{asm_lip}, we performed a numerical evaluation using ResNet experiments. 
Specifically, we checked whether the value of $\mathbb{E}[Y|f(X)]$, estimated via binning, exhibits relatively smooth variations. 
These results are presented in Figure~\ref{fig:check_lipschitz}. 
The findings indicate that the estimated values fluctuate smoothly to a significant degree, providing empirical support for the Lipschitz continuity assumption. 
This strengthens the validity of our assumptions and confirms the applicability of our methods in practical, real-world scenarios.

\begin{figure*}[ht]
    \centering
    \includegraphics[scale=0.3]{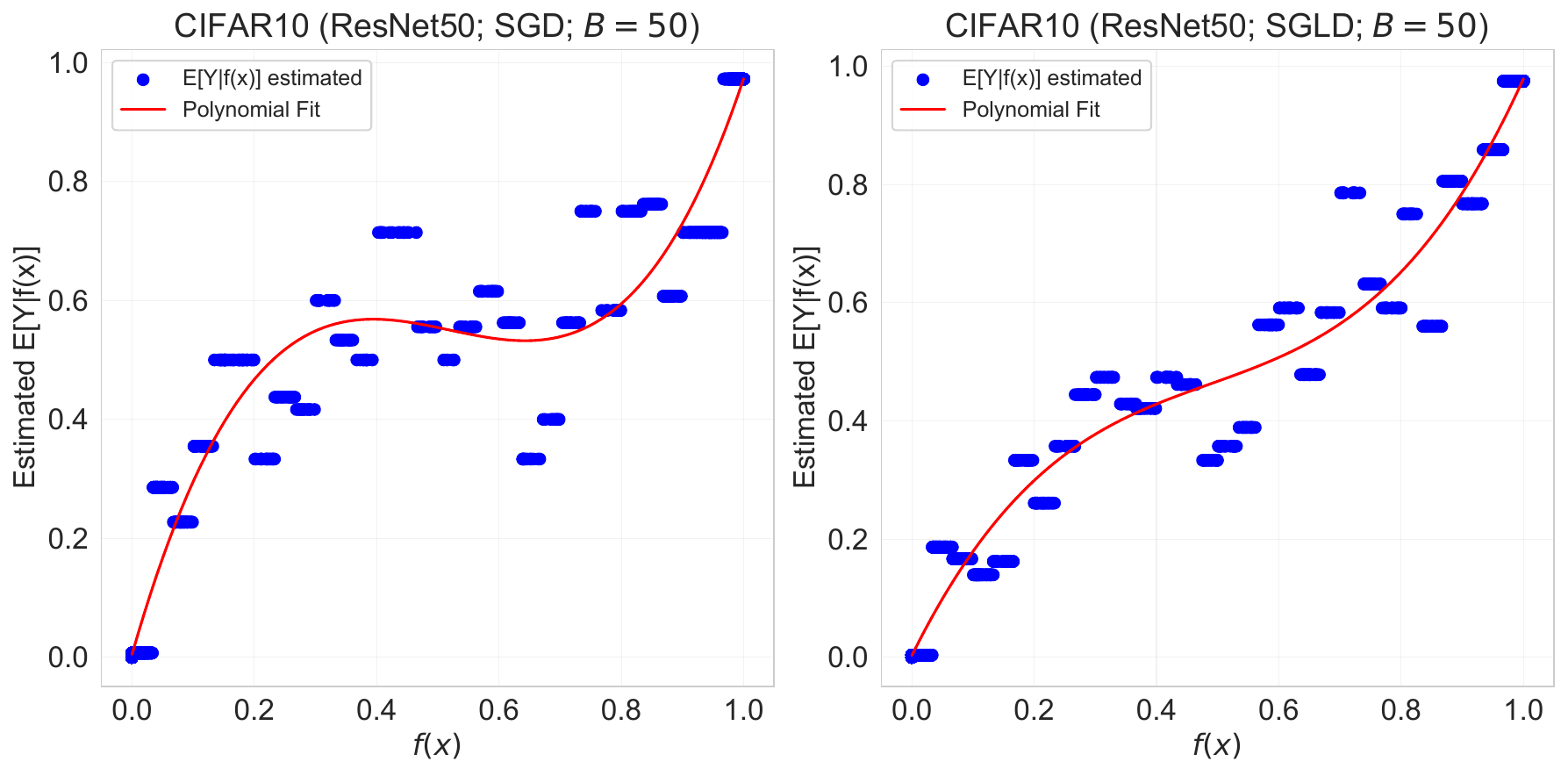}
    \caption{Behavior of the estimator of $\mathbb{E}[Y|f(X)]$ in the ResNet experiments. The red line is the (third-order) Polynomial function fitted to the estimated values of $\mathbb{E}[Y|f(x)]$.}
    \label{fig:check_lipschitz}
\end{figure*}

\clearpage

\section*{NeurIPS Paper Checklist}
\begin{enumerate}

\item {\bf Claims}
    \item[] Question: Do the main claims made in the abstract and introduction accurately reflect the paper's contributions and scope?
    \item[] Answer:  \answerYes{}%, \answerNo{}, or \answerNA{}.
    \item[] Justification: The claims in the abstract and Section~\ref{sec_intro} match our theoretical and numerical claims in the paper.%\justificationTODO{}
    \item[] Guidelines:
    \begin{itemize}
        \item The answer NA means that the abstract and introduction do not include the claims made in the paper.
        \item The abstract and/or introduction should clearly state the claims made, including the contributions made in the paper and important assumptions and limitations. A No or NA answer to this question will not be perceived well by the reviewers. 
        \item The claims made should match theoretical and experimental results, and reflect how much the results can be expected to generalize to other settings. 
        \item It is fine to include aspirational goals as motivation as long as it is clear that these goals are not attained by the paper. 
    \end{itemize}

\item {\bf Limitations}
    \item[] Question: Does the paper discuss the limitations of the work performed by the authors?
    \item[] Answer:  \answerYes{}.
    \item[] Justification: Assumptions for each theorem are explicitly shown. Following each theorem, there is a discussion about the theorem's limitations and implications. Additionally, Section~\ref{sec_conc_lim} includes a discussion on the limitations of the entire paper.
    \item[] Guidelines:
    \begin{itemize}
        \item The answer NA means that the paper has no limitation while the answer No means that the paper has limitations, but those are not discussed in the paper. 
        \item The authors are encouraged to create a separate "Limitations" section in their paper.
        \item The paper should point out any strong assumptions and how robust the results are to violations of these assumptions (e.g., independence assumptions, noiseless settings, model well-specification, asymptotic approximations only holding locally). The authors should reflect on how these assumptions might be violated in practice and what the implications would be.
        \item The authors should reflect on the scope of the claims made, e.g., if the approach was only tested on a few datasets or with a few runs. In general, empirical results often depend on implicit assumptions, which should be articulated.
        \item The authors should reflect on the factors that influence the performance of the approach. For example, a facial recognition algorithm may perform poorly when image resolution is low or images are taken in low lighting. Or a speech-to-text system might not be used reliably to provide closed captions for online lectures because it fails to handle technical jargon.
        \item The authors should discuss the computational efficiency of the proposed algorithms and how they scale with dataset size.
        \item If applicable, the authors should discuss possible limitations of their approach to address problems of privacy and fairness.
        \item While the authors might fear that complete honesty about limitations might be used by reviewers as grounds for rejection, a worse outcome might be that reviewers discover limitations that aren't acknowledged in the paper. The authors should use their best judgment and recognize that individual actions in favor of transparency play an important role in developing norms that preserve the integrity of the community. Reviewers will be specifically instructed to not penalize honesty concerning limitations.
    \end{itemize}

\item {\bf Theory Assumptions and Proofs}
    \item[] Question: For each theoretical result, does the paper provide the full set of assumptions and a complete (and correct) proof?
    \item[] Answer:  \answerYes{}.
    \item[] Justification: Complete proofs for all theorems are provided in the Appendix. The location of each proof in the Appendix is clearly indicated for each theorem in the paper.
    \item[] Guidelines:
    \begin{itemize}
        \item The answer NA means that the paper does not include theoretical results. 
        \item All the theorems, formulas, and proofs in the paper should be numbered and cross-referenced.
        \item All assumptions should be clearly stated or referenced in the statement of any theorems.
        \item The proofs can either appear in the main paper or the supplemental material, but if they appear in the supplemental material, the authors are encouraged to provide a short proof sketch to provide intuition. 
        \item Inversely, any informal proof provided in the core of the paper should be complemented by formal proofs provided in appendix or supplemental material.
        \item Theorems and Lemmas that the proof relies upon should be properly referenced. 
    \end{itemize}

    \item {\bf Experimental Result Reproducibility}
    \item[] Question: Does the paper fully disclose all the information needed to reproduce the main experimental results of the paper to the extent that it affects the main claims and/or conclusions of the paper (regardless of whether the code and data are provided or not)?
    \item[] Answer: \answerYes{}.
    \item[] Justification: The experimental setup for reproducing our results is detailed in Section~\ref{sec:experiments} and Appendix~\ref{app:exp_settings}. We submitted our source codes through OpenReview.
    %We also plan to make our source codes public on our GitHub, including detailed instructions on how to reproduce our results.
    \item[] Guidelines:
    \begin{itemize}
        \item The answer NA means that the paper does not include experiments.
        \item If the paper includes experiments, a No answer to this question will not be perceived well by the reviewers: Making the paper reproducible is important, regardless of whether the code and data are provided or not.
        \item If the contribution is a dataset and/or model, the authors should describe the steps taken to make their results reproducible or verifiable. 
        \item Depending on the contribution, reproducibility can be accomplished in various ways. For example, if the contribution is a novel architecture, describing the architecture fully might suffice, or if the contribution is a specific model and empirical evaluation, it may be necessary to either make it possible for others to replicate the model with the same dataset, or provide access to the model. In general. releasing code and data is often one good way to accomplish this, but reproducibility can also be provided via detailed instructions for how to replicate the results, access to a hosted model (e.g., in the case of a large language model), releasing of a model checkpoint, or other means that are appropriate to the research performed.
        \item While NeurIPS does not require releasing code, the conference does require all submissions to provide some reasonable avenue for reproducibility, which may depend on the nature of the contribution. For example
        \begin{enumerate}
            \item If the contribution is primarily a new algorithm, the paper should make it clear how to reproduce that algorithm.
            \item If the contribution is primarily a new model architecture, the paper should describe the architecture clearly and fully.
            \item If the contribution is a new model (e.g., a large language model), then there should either be a way to access this model for reproducing the results or a way to reproduce the model (e.g., with an open-source dataset or instructions for how to construct the dataset).
            \item We recognize that reproducibility may be tricky in some cases, in which case authors are welcome to describe the particular way they provide for reproducibility. In the case of closed-source models, it may be that access to the model is limited in some way (e.g., to registered users), but it should be possible for other researchers to have some path to reproducing or verifying the results.
        \end{enumerate}
    \end{itemize}

\item {\bf Open access to data and code}
    \item[] Question: Does the paper provide open access to the data and code, with sufficient instructions to faithfully reproduce the main experimental results, as described in supplemental material?
    \item[] Answer: \answerYes{}.
    \item[] Justification: We only used the popular benchmark datasets (MNIST and CIFAR-10) that can be easily obtained. As for the source code, we denoted that our experiments are implemented by adapting the source codes of \citet{harutyunyan2021} (\url{https://github.com/hrayrhar/f-CMI}). 
    %Furthermore, as we declared above, we also plan to make our source codes public on our GitHub.
    \item[] Guidelines:
    \begin{itemize}
        \item The answer NA means that paper does not include experiments requiring code.
        \item Please see the NeurIPS code and data submission guidelines (\url{https://nips.cc/public/guides/CodeSubmissionPolicy}) for more details.
        \item While we encourage the release of code and data, we understand that this might not be possible, so “No” is an acceptable answer. Papers cannot be rejected simply for not including code, unless this is central to the contribution (e.g., for a new open-source benchmark).
        \item The instructions should contain the exact command and environment needed to run to reproduce the results. See the NeurIPS code and data submission guidelines (\url{https://nips.cc/public/guides/CodeSubmissionPolicy}) for more details.
        \item The authors should provide instructions on data access and preparation, including how to access the raw data, preprocessed data, intermediate data, and generated data, etc.
        \item The authors should provide scripts to reproduce all experimental results for the new proposed method and baselines. If only a subset of experiments are reproducible, they should state which ones are omitted from the script and why.
        \item At submission time, to preserve anonymity, the authors should release anonymized versions (if applicable).
        \item Providing as much information as possible in supplemental material (appended to the paper) is recommended, but including URLs to data and code is permitted.
    \end{itemize}

\item {\bf Experimental Setting/Details}
    \item[] Question: Does the paper specify all the training and test details (e.g., data splits, hyperparameters, how they were chosen, type of optimizer, etc.) necessary to understand the results?
    \item[] Answer: \answerYes{}.
    \item[] Justification: We explained all details of our experimental settings in Section~\ref{sec:experiments} and Appendix~\ref{app:exp_settings}.
    \item[] Guidelines:
    \begin{itemize}
        \item The answer NA means that the paper does not include experiments.
        \item The experimental setting should be presented in the core of the paper to a level of detail that is necessary to appreciate the results and make sense of them.
        \item The full details can be provided either with the code, in appendix, or as supplemental material.
    \end{itemize}

\item {\bf Experiment Statistical Significance}
    \item[] Question: Does the paper report error bars suitably and correctly defined or other appropriate information about the statistical significance of the experiments?
    \item[] Answer: \answerYes{}.
    \item[] Justification: We reported the mean $\pm$ std. of our bound values for all experiments in Section~\ref{sec:experiments} and Appendix~\ref{app:add_exp_results}. The reason why we cannot see the range of std. in the plot is that its values are very small (less than $1e-4$; see Table~\ref{table:res_ece_bound_recab} for example.).
    %In the experiments of this study, error bars are described, and their meanings and definitions are detailed in Section~\ref{sec:experiments} and Appendix~\ref{app:exp_settings}.
    \item[] Guidelines:
    \begin{itemize}
        \item The answer NA means that the paper does not include experiments.
        \item The authors should answer "Yes" if the results are accompanied by error bars, confidence intervals, or statistical significance tests, at least for the experiments that support the main claims of the paper.
        \item The factors of variability that the error bars are capturing should be clearly stated (for example, train/test split, initialization, random drawing of some parameter, or overall run with given experimental conditions).
        \item The method for calculating the error bars should be explained (closed form formula, call to a library function, bootstrap, etc.)
        \item The assumptions made should be given (e.g., Normally distributed errors).
        \item It should be clear whether the error bar is the standard deviation or the standard error of the mean.
        \item It is OK to report 1-sigma error bars, but one should state it. The authors should preferably report a 2-sigma error bar than state that they have a 96\% CI, if the hypothesis of Normality of errors is not verified.
        \item For asymmetric distributions, the authors should be careful not to show in tables or figures symmetric error bars that would yield results that are out of range (e.g. negative error rates).
        \item If error bars are reported in tables or plots, The authors should explain in the text how they were calculated and reference the corresponding figures or tables in the text.
    \end{itemize}

\item {\bf Experiments Compute Resources}
    \item[] Question: For each experiment, does the paper provide sufficient information on the computer resources (type of compute workers, memory, time of execution) needed to reproduce the experiments?
    \item[] Answer: \answerYes{}.
    \item[] Justification: We used NVIDIA GPUs with $32$GB memory (NVIDIA DGX-1 with Tesla V100 and DGX-2) for MNIST (SGLD) and CIFAR-10 experiments. We also used CPU (Apple M1) with $16$GB memory for the other experiments (see Appendix~\ref{app:exp_settings}).
    \item[] Guidelines:
    \begin{itemize}
        \item The answer NA means that the paper does not include experiments.
        \item The paper should indicate the type of compute workers CPU or GPU, internal cluster, or cloud provider, including relevant memory and storage.
        \item The paper should provide the amount of compute required for each of the individual experimental runs as well as estimate the total compute. 
        \item The paper should disclose whether the full research project required more compute than the experiments reported in the paper (e.g., preliminary or failed experiments that didn't make it into the paper). 
    \end{itemize}
    
\item {\bf Code Of Ethics}
    \item[] Question: Does the research conducted in the paper conform, in every respect, with the NeurIPS Code of Ethics \url{https://neurips.cc/public/EthicsGuidelines}?
    \item[] Answer: \answerYes.
    \item[] Justification:  We confirmed that our paper does not have issues concerning the NeurIPS Code of Ethics, although the primary emphasis of this paper is on theoretical analysis.
    \item[] Guidelines:
    \begin{itemize}
        \item The answer NA means that the authors have not reviewed the NeurIPS Code of Ethics.
        \item If the authors answer No, they should explain the special circumstances that require a deviation from the Code of Ethics.
        \item The authors should make sure to preserve anonymity (e.g., if there is a special consideration due to laws or regulations in their jurisdiction).
    \end{itemize}

\item {\bf Broader Impacts}
    \item[] Question: Does the paper discuss both potential positive societal impacts and negative societal impacts of the work performed?
    \item[] Answer: \answerYes{}.
    \item[] Justification: Although the primary focus of this paper is theoretical analysis, discussions on the potential impacts of our research are presented in Sections~\ref{sec_intro} and \ref{sec_conc_lim}.
    \item[] Guidelines:
    \begin{itemize}
        \item The answer NA means that there is no societal impact of the work performed.
        \item If the authors answer NA or No, they should explain why their work has no societal impact or why the paper does not address societal impact.
        \item Examples of negative societal impacts include potential malicious or unintended uses (e.g., disinformation, generating fake profiles, surveillance), fairness considerations (e.g., deployment of technologies that could make decisions that unfairly impact specific groups), privacy considerations, and security considerations.
        \item The conference expects that many papers will be foundational research and not tied to particular applications, let alone deployments. However, if there is a direct path to any negative applications, the authors should point it out. For example, it is legitimate to point out that an improvement in the quality of generative models could be used to generate deepfakes for disinformation. On the other hand, it is not needed to point out that a generic algorithm for optimizing neural networks could enable people to train models that generate Deepfakes faster.
        \item The authors should consider possible harms that could arise when the technology is being used as intended and functioning correctly, harms that could arise when the technology is being used as intended but gives incorrect results, and harms following from (intentional or unintentional) misuse of the technology.
        \item If there are negative societal impacts, the authors could also discuss possible mitigation strategies (e.g., gated release of models, providing defenses in addition to attacks, mechanisms for monitoring misuse, mechanisms to monitor how a system learns from feedback over time, improving the efficiency and accessibility of ML).
    \end{itemize}
    
\item {\bf Safeguards}
    \item[] Question: Does the paper describe safeguards that have been put in place for responsible release of data or models that have a high risk for misuse (e.g., pretrained language models, image generators, or scraped datasets)?
    \item[] Answer: \answerNA{}.
    \item[] Justification: The primary focus of this paper is theoretical analysis, and although it includes experiments, their purpose is to numerically validate the theory. Therefore, the concerns raised in the question do not apply.
    \item[] Guidelines:
    \begin{itemize}
        \item The answer NA means that the paper poses no such risks.
        \item Released models that have a high risk for misuse or dual-use should be released with necessary safeguards to allow for controlled use of the model, for example by requiring that users adhere to usage guidelines or restrictions to access the model or implementing safety filters. 
        \item Datasets that have been scraped from the Internet could pose safety risks. The authors should describe how they avoided releasing unsafe images.
        \item We recognize that providing effective safeguards is challenging, and many papers do not require this, but we encourage authors to take this into account and make a best faith effort.
    \end{itemize}

\item {\bf Licenses for existing assets}
    \item[] Question: Are the creators or original owners of assets (e.g., code, data, models), used in the paper, properly credited and are the license and terms of use explicitly mentioned and properly respected?
    \item[] Answer: \answerYes{}.
    \item[] Justification: We provide citations or reference URLs for all of the code, data, and models used in our experiments (see Appendix~\ref{app:exp_settings}). We also declared the name of the licence is CC-BY 4.0 in our submission page of OpenReview.
    \item[] Guidelines:
    \begin{itemize}
        \item The answer NA means that the paper does not use existing assets.
        \item The authors should cite the original paper that produced the code package or dataset.
        \item The authors should state which version of the asset is used and, if possible, include a URL.
        \item The name of the license (e.g., CC-BY 4.0) should be included for each asset.
        \item For scraped data from a particular source (e.g., website), the copyright and terms of service of that source should be provided.
        \item If assets are released, the license, copyright information, and terms of use in the package should be provided. For popular datasets, \url{paperswithcode.com/datasets} has curated licenses for some datasets. Their licensing guide can help determine the license of a dataset.
        \item For existing datasets that are re-packaged, both the original license and the license of the derived asset (if it has changed) should be provided.
        \item If this information is not available online, the authors are encouraged to reach out to the asset's creators.
    \end{itemize}

\item {\bf New Assets}
    \item[] Question: Are new assets introduced in the paper well documented and is the documentation provided alongside the assets?
    \item[] Answer: \answerNA{}.
    \item[] Justification: The primary focus of this paper is theoretical analysis, and although it includes experiments, their purpose is to numerically validate the theory. Therefore, the concerns raised in the question do not apply.
    \item[] Guidelines:
    \begin{itemize}
        \item The answer NA means that the paper does not release new assets.
        \item Researchers should communicate the details of the dataset/code/model as part of their submissions via structured templates. This includes details about training, license, limitations, etc. 
        \item The paper should discuss whether and how consent was obtained from people whose asset is used.
        \item At submission time, remember to anonymize your assets (if applicable). You can either create an anonymized URL or include an anonymized zip file.
    \end{itemize}

\item {\bf Crowdsourcing and Research with Human Subjects}
    \item[] Question: For crowdsourcing experiments and research with human subjects, does the paper include the full text of instructions given to participants and screenshots, if applicable, as well as details about compensation (if any)? 
    \item[] Answer:  \answerNA{}.
    \item[] Justification: We do not utilize such services, so the concerns raised in the question are not applicable to us.
    \item[] Guidelines:
    \begin{itemize}
        \item The answer NA means that the paper does not involve crowdsourcing nor research with human subjects.
        \item Including this information in the supplemental material is fine, but if the main contribution of the paper involves human subjects, then as much detail as possible should be included in the main paper. 
        \item According to the NeurIPS Code of Ethics, workers involved in data collection, curation, or other labor should be paid at least the minimum wage in the country of the data collector. 
    \end{itemize}

\item {\bf Institutional Review Board (IRB) Approvals or Equivalent for Research with Human Subjects}
    \item[] Question: Does the paper describe potential risks incurred by study participants, whether such risks were disclosed to the subjects, and whether Institutional Review Board (IRB) approvals (or an equivalent approval/review based on the requirements of your country or institution) were obtained?
    \item[] Answer: \answerNA{}.
    \item[] Justification: The primary focus of this paper is theoretical analysis, and it has been confirmed that the concerns raised in the question are not applicable.
    \item[] Guidelines:
    \begin{itemize}
        \item The answer NA means that the paper does not involve crowdsourcing nor research with human subjects.
        \item Depending on the country in which research is conducted, IRB approval (or equivalent) may be required for any human subjects research. If you obtained IRB approval, you should clearly state this in the paper. 
        \item We recognize that the procedures for this may vary significantly between institutions and locations, and we expect authors to adhere to the NeurIPS Code of Ethics and the guidelines for their institution. 
        \item For initial submissions, do not include any information that would break anonymity (if applicable), such as the institution conducting the review.
    \end{itemize}

\end{enumerate}

\end{document}